\let\Algorithm\algorithm
\renewcommand\algorithm[1][]{\Algorithm[#1]\setstretch{1.35}}
\newtheorem{lem}{Lemma}[section]
\newtheorem{thm}{Theorem}[section]
\newtheorem{cor}{Corollary}[section]
\newtheorem{prop}{Proposition}[section]
\newtheorem{claim}[lem]{Claim}
\newif\ifdraft
\definecolor{darkgreen}{rgb}{0,0.4,0.0}
\definecolor{darkblue}{rgb}{0,0.0,0.4}
\newcommand{\atodo}[1]{[{\color{red}{\textbf{TODO:} \emph{#1}}}]}
\newcommand{\btodo}[1]{[{\color{darkgreen}{\textbf{TODO:} \emph{#1}}}]}
\newcommand{\ctodo}[1]{[{\color{cyan}{\textbf{TODO:} \emph{#1}}}]}
\newcommand{\ktodo}[1]{[{\color{darkblue}{\textbf{TODO:} \emph{#1}}}]}
\newcommand{\btodo}[1]{}
\newcommand{\ktodo}[1]{}
\newcommand{\ctodo}[1]{}
\newcommand{\atodo}[1]{}
\renewcommand{\todo}[1]{}
\Crefname{thm}{Thm.}{Theorems}
\Crefname{cor}{Cor.}{Corollaries}
\Crefname{lem}{Lem.}{Lemmas}
\Crefname{prop}{Prop.}{Propositions}
\Crefname{assumption}{Assumption}{Assumptions}
\Crefname{definition}{Defn.}{Definitions}
\Crefname{claim}{Claim}{Claims}
\Crefname{section}{Sec.}{Sections}
\Crefname{appendix}{App.}{Appendices}
\newcommand{\DPSGD}{DP-SGD\xspace}
\newcommand{\bfA}{\ensuremath{\mathbf{A}}}
\newcommand{\bfB}{\ensuremath{\mathbf{B}}}
\newcommand{\bfC}{\ensuremath{\mathbf{C}}}
\newcommand{\bfE}{\ensuremath{\mathbf{E}}}
\newcommand{\bfF}{\ensuremath{\mathbf{F}}}
\newcommand{\bfG}{\ensuremath{\mathbf{G}}}
\newcommand{\bfH}{\ensuremath{\mathbf{H}}}
\newcommand{\bfI}{\ensuremath{\mathbf{I}}}
\newcommand{\bfN}{\ensuremath{\mathbf{N}}}
\newcommand{\bfP}{\ensuremath{\mathbf{P}}}
\newcommand{\bfS}{\ensuremath{\mathbf{S}}}
\newcommand{\bfT}{\ensuremath{\mathbf{T}}}
\newcommand{\bfU}{\ensuremath{\mathbf{U}}}
\newcommand{\bfV}{\ensuremath{\mathbf{V}}}
\newcommand{\bfX}{\ensuremath{\mathbf{X}}}
\newcommand{\bfY}{\ensuremath{\mathbf{Y}}}
\newcommand{\bfZ}{\ensuremath{\mathbf{Z}}}
\newcommand{\bfb}{\ensuremath{\mathbf{b}}}
\newcommand{\bfc}{\ensuremath{\mathbf{c}}}
\newcommand{\bfe}{\ensuremath{\mathbf{e}}}
\newcommand{\bfg}{\ensuremath{\mathbf{g}}}
\newcommand{\bfs}{\ensuremath{\mathbf{s}}}
\newcommand{\bfu}{\ensuremath{\mathbf{u}}}
\newcommand{\bfv}{\ensuremath{\mathbf{v}}}
\newcommand{\bfw}{\ensuremath{\mathbf{w}}}
\newcommand{\bfx}{\ensuremath{\mathbf{x}}}
\newcommand{\bfy}{\ensuremath{\mathbf{y}}}
\newcommand{\bfz}{\ensuremath{\mathbf{z}}}
\newcommand{\calG}{\ensuremath{\mathcal{G}}}
\newcommand{\calH}{\ensuremath{\mathcal{H}}}
\newcommand{\calL}{\ensuremath{\mathcal{L}}}
\newcommand{\calN}{\ensuremath{\mathcal{N}}}
\newcommand{\calO}{\ensuremath{\mathcal{O}}}
\newcommand{\R}{\mathbb{R}}
\DeclareMathOperator{\sens}{sens}
\newcommand{\ip}[2]{\langle #1, #2\rangle}
\newcommand{\dimension}{\ensuremath{n}}
\newcommand{\dimdim}{{\dimension \times \dimension}}
\newcommand{\dimsquarematrix}{\ensuremath{{\dimension\!\times\!\dimension}}}
\DeclareMathOperator{\diag}{diag}
\DeclareMathOperator{\diagpart}{diagpart}
\DeclareMathOperator{\conv}{conv}
\newcommand{\mdim}{d}  %
\newcommand{\datadim}{{\dimension \times \mdim}}
\newcommand{\tr}{\ensuremath{\mathrm{tr}}}
\newcommand{\clipnorm}{\zeta}
\renewcommand{\epsilon}{\varepsilon}
\newcommand{\inv}{^{-1}}
\newcommand{\tpinv}{^{-\!\top}}
\newcommand{\pinv}{^\dagger}
\newcommand{\tppinv}{^{\dagger\top}}
\newcommand{\tp}{^\top}
\newcommand{\hlf}{^\frac{1}{2}}
\newcommand{\nhlf}{^{-\frac{1}{2}}}
\newcommand{\Dcorners}{\mathcal{D}}
\newcommand{\deltaset}{\mathfrak{D}}
\newcommand{\contrib}{\bfu}
\newcommand{\sensd}{\sens_{\deltaset}}
\newcommand{\dualv}{\bfv}
\newcommand{\vopt}{\ensuremath{{\bfv^\star}}} %
\newcommand{\mypar}[1]{\paragraph{#1}}
\newcommand{\ltwo}[1]{\left\|#1\right\|_2}
\newcommand{\lfrob}[1]{\left\|#1\right\|_F}
\newcommand{\AAT}{\bfA\tp\bfA}
\newcommand{\rank}{\text{rank}}
\newcommand{\idx}[2]{_{[#1, #2]}}
\newcommand{\acirc}{\bfA_{\sf circ}}
\newcommand{\bcirc}{\bfB_{\sf circ}}
\newcommand{\ccirc}{\bfC_{\sf circ}}
\newcommand{\fexd}{\bfv^{\sf DFT}}
\newcommand{\fft}{\bfF}
\newcommand{\herm}{^*}
\newcommand{\trout}{\bfs}
\newcommand{\actnoise}{\widetilde{\bfz}}
\newcommand{\lone}[1]{\left\|#1\right\|_1}
\newcommand{\linfty}[1]{\left\|#1\right\|_\infty}
\newcommand{\fexdt}{\fexd}
\newcommand{\mse}{\texttt{MSE}}
\newcommand{\norm}[1]{\| #1 \|}
\newcommand{\normin}[1]{\norm{#1}_{(1)}}
\newcommand{\normout}[1]{\norm{#1}_{(2)}}
\newcommand{\maxpart}{k}
\newcommand{\maxpartsep}[2]{\ensuremath{(\maxpart{=}#1, \assumedsep{=}#2)}}
\newcommand{\numusers}{m}
\newcommand{\batchsize}{B}
\newcommand{\Snpp}{S^\dimension_{++}}%
\newcommand{\Hv}{\bfH_\dualv}
\newcommand{\Hvopt}{\bfH_\vopt}
\newcommand{\UL}{\bfU_L}
\newcommand{\UR}{\bfU_R}
\newcommand{\diagv}{\diag(\dualv)}
\newcommand{\stamps}{\ensuremath{s}}
\newcommand{\assumedsep}{\ensuremath{b}}
\newcommand{\multiepoch}{\ensuremath{(\maxpart,\assumedsep)}}
\newcommand{\obs}{\bfx}
\newcommand{\obsp}{\tilde{\bfx}}
\newcommand{\exd}{\obs_{\sf ext}}
\newcommand{\mech}{\mathcal{M}}
\newcommand{\smsp}{{\kern 0.1em}}
\newcommand{\textalg}{\textbf}
\newcommand{\Honaker}{\textalg{Honaker,\smsp6e}\xspace}
\newcommand{\OneEpochOne}{\textalg{MF1,\smsp1e}\xspace}
\newcommand{\OneEpochSix}{\textalg{MF1,\smsp6e}\xspace}
\newcommand{\SixEpoch}{\textalg{MF6,\smsp6e}\xspace}
\newcommand{\DPGD}{\textalg{DP-GDM,\smsp2052e}\xspace}
\newcommand{\DPSGDSix}{\textalg{DP-SGDM,\smsp6e}\xspace}
\def\eqref#1{equation~\ref{#1}}
\def\Eqref#1{Equation~\ref{#1}}
\def\1{\bm{1}}
\DeclareMathAlphabet{\mathsfit}{\encodingdefault}{\sfdefault}{m}{sl}
\SetMathAlphabet{\mathsfit}{bold}{\encodingdefault}{\sfdefault}{bx}{n}
\DeclareMathOperator*{\argmax}{arg\,max}
\DeclareMathOperator{\polylog}{polylog}
\newcommand\myadd[2]{\the\numexpr(#1)+(#2)\relax}
\newtheorem{definition}{Definition}
\def\Hy@toccolor#1{%
  \ifcsname toccolor@#1\endcsname
    \edef\@linkcolor{\csname toccolor@#1\endcsname}%
  \fi
}
\title{Multi-Epoch Matrix Factorization\\Mechanisms for Private Machine Learning}
\icmltitlerunning{Multi-Epoch Matrix Factorization}
\newtheoremstyle{named}{}{}{\itshape}{}{\bfseries}{.}{.5em}{\thmnote{#3 Restated}}
\theoremstyle{named}
\newtheorem*{namedtheorem}{Theorem}
\newtheorem*{namedcorollary}{Corollary}
\begin{document}

\twocolumn[
\icmltitle{Multi-Epoch Matrix Factorization\\Mechanisms for Private Machine Learning}
           
\begin{icmlauthorlist}
\icmlauthor{Christopher A. Choquette-Choo}{G}
\icmlauthor{H. Brendan McMahan}{G}
\icmlauthor{Keith Rush}{G}
\icmlauthor{Abhradeep Thakurta}{G}
\end{icmlauthorlist}

\icmlaffiliation{G}{Google Research}
\icmlcorrespondingauthor{}{\{cchoquette,krush,mcmahan,athakurta\}@google.com}
\icmlkeywords{Machine Learning, Differential Privacy, Federated Learning, Matrix MechanismICML}

\vskip 0.3in
]

\printAffiliationsAndNotice{}

\begin{abstract}
We introduce new differentially private (DP) mechanisms for gradient-based machine learning (ML) with multiple passes (epochs) over a dataset, substantially improving the achievable privacy-utility-computation tradeoffs. We formalize the problem of DP mechanisms for adaptive streams with multiple participations and introduce a non-trivial extension of online matrix factorization DP mechanisms to our setting. This includes establishing the necessary theory for sensitivity calculations and efficient computation of optimal matrices.
For some applications like $>\!\! 10,000$ SGD steps, applying these optimal techniques becomes computationally expensive. We thus design an efficient Fourier-transform-based mechanism with only a minor utility loss.
Extensive empirical evaluation on both example-level DP for image classification and user-level DP for language modeling demonstrate substantial improvements over all previous methods, including the widely-used \DPSGD . Though our primary application is to ML, our main DP results are applicable to arbitrary linear queries and hence may have much broader applicability.

\end{abstract}

\section{Introduction}\label{sec:intro}

Differentially private stochastic gradient descent (\DPSGD) is the de facto standard algorithm for DP machine learning (ML)~\citep{song2013stochastic,BST14,abadi2016deep}. However, obtaining state-of-the-art privacy-utility tradeoffs critically requires use of privacy amplification techniques like shuffling~\citep{erlingsson2019amplification,feldman2022hiding} or (Poisson) subsampling~\citep{BST14,zhu2019poission,WangBK19}. These in turn require strong assumptions on the manner in which data is processed that often do not hold under the processing performed by centralized ML pipelines, and are particularly challenging in cross-device federated learning \citep{kairouz2021practical}.

\citet{kairouz2021practical} recently proposed the DP-FTRL framework that avoids reliance on amplification by sampling, instead leveraging DP streaming of prefix sums~\citep{Dwork-continual,CSS11-continual,honaker_trick}. DP-FTRL can match (or outperform) \DPSGD in privacy-utility tradeoffs. This algorithm enabled~\citet{mcmahan2022dpftrlblog} to train the first known provably DP ML model on user data in a production setting.

Several works have since focused on this primitive as an instantiation of the streaming matrix mechanism (see \cref{eq:mat_mech})~\cite{HUU22,fichtenberger_constant22,denisov22matfact};  in particular, \citet{denisov22matfact} showed that leveraging the flexibility inherent in this formulation to design optimal matrices led to significant empirical improvements, though their work was restricted to the single-epoch setting.

\begin{figure*}[t]
    \centering
    \vspace{-0.5em} %
    \includegraphics[width=\linewidth]{icml2023/figures/central/cifar10-dpftrl-final-nofft.pdf}
    \vspace{-2em}
    \caption{\textbf{Our optimal multi-epoch matrix and FFT-based mechanisms outperform all others, including \DPSGD with amplification}, as low as $\varepsilon\approx 4$. Using our sensitivity calculation of \cref{thm:eval_sens_ub} and stamping (\cref{sec:empirical-eval}), we optimize a single pass ($k=1$) matrix of~\citet{denisov22matfact} but apply it here with $>1$ pass. We use an online Honaker-based decoder equivalent to that of~\citet{kairouz2021practical} except for a significant improvement to tree-completion in \cref{app:ssec:tree_completion}. Models trained for 20 epochs on CIFAR10 with a batch size of 500. We repeat each setting 12 times and show 95\% bootstrapped confidence intervals. Empirical setup is in \cref{ssec:central-experiments}.}
    \label{fig:cifar10_results}
\end{figure*}

 \mypar{Our Contributions} 
This single-epoch restriction is unnatural from the perspective of modern ML, where many passes over the training data are common. We extend matrix factorization mechanisms for ML to the multi-epoch setting by tackling several intertwined problems. This enables state-of-the-art mechanisms for DP-ML, with potentially broader applicability to, e.g., online PCA, marginal estimation, and top-k selection, as discussed in~\citet{denisov22matfact}.

1) We provide a framework for computing the sensitivity of matrix mechanisms under general participation schemas \emph{with vector contributions}: these are essential to ML applications where we wish to privatize high-dimensional models. However, the efficient computation of optimal matrix factorizations only allows for sensitivity constraints on scalar contributions. In the single participation case a trivial argument equates sensitivity under scalar and vector contributions and the computation of sensitivity is $\calO(n)$ by inspection. The situation becomes dramatically more complex under multiple participations. To our surprise, computing even scalar sensitivity can be NP-hard, and the vector-to-scalar reduction does not hold in general (see \cref{sec:counterexample}). Nevertheless, we establish sufficient conditions for both computational tractability and the necessary reduction (\cref{cor:matrix_to_vector_sens}, \cref{thm:multidim}), enabling our next contribution:

2) We obtain a closed-form representation of the Lagrange dual function for the loss-minimization problem of computing an optimal matrix factorization subject to multiple-participation sensitivity constraints, in \cref{eq:symmetrized_problem}, substantially generalizing the approach of~\citet{denisov22matfact}. This dual formulation is used to efficiently compute optimal factorizations for our experiments.

3) %
We explore the computational tradeoffs of our approaches. Computing optimal matrix factorizations may become relatively expensive when more than $\approx10,000$ steps are required. While this is uncommon in federated algorithms for user-level DP, it can be a limitation when training with SGD for example-level privacy. To reduce this cost, we propose and investigate an approach based on the Fast Fourier Transform (FFT)~\citep{cooley1965algorithm}. Careful analysis of the DFT, shows it is near-optimal for the single-epoch setting and efficiently computable for most, if all, ML settings. Indeed, we find this approach still outperforms the mechanisms from the extant literature, even under multiple participations.
    
4) We perform detailed empirical comparisons of our mechanisms, e.g., above in \cref{fig:cifar10_results}. We compare with both the prior matrix mechanism approaches and \DPSGD. We show that the methods proposed here outperform all others (in particular, \DPSGD with amplification), to privacy budgets as low as $\varepsilon\approx2$, and without any need for privacy amplification. We also find in  \cref{fig:stackoverflow_main} that our methods can achieve near the folklore upper bound of full-batch DPGD, but at $340$x less compute. Our code is at: \url{https://github.com/google-research/federated/tree/master/multi_epoch_dp_matrix_factorization}.

\mypar{Related work} 
The core privacy primitive here is the matrix mechanism~\citep{Li2015TheMM}. Its long history of study and application was, until recently, primarily oriented towards offline, statistical queries~\citep{hdmm,edmonds_nikolov_ullman,opt_convex_fact,HT10}.
\citet{fichtenberger_constant22,denisov22matfact} independently applied it to the \emph{adaptive streaming} setting, where outputs are released one-by-one and privacy analysis must account for an adversary adaptively defining the inputs. 
\citet{denisov22matfact} connected the matrix mechanism to DP ML, via the DP-FTRL algorithm of~\citet{kairouz2021practical}, and showed that computing optimal factorizations significantly improves the privacy-utility-computation tradeoffs when making only a single pass (epoch) over the training data.

\mypar{Example- and user-level DP, and the connection to federated learning (FL)}
In addition to example-level DP, we consider user-level DP. As observed by~\citet{mcmahan2017learning}, private FL algorithms are well suited to providing user-level DP or other multi-example units of privacy, e.g. document-level, as bounding the sensitivity of a single user's contribution to an aggregate update is made straightforward by the per-user data processing pattern inherent in FL. However, our primary application is to datacenter training, where user data can be processed in a fixed shuffled order, unlike cross-device FL. We use the term `participation' to denote the event that an example (user or client in FL) contributes to the gradient sum (or a model update in FL) $\obs_i$ for a given step/iteration (round in FL) $i$. Individual contributions to $\obs_i$ are scaled so their maximum $\ell_2$ norm is $\clipnorm$. Our mechanisms compute sums over individual clipped contributions and post-process by dividing by the batch size (or clients/round) to compute an average gradient (or model update).
We assume $\clipnorm=1$, applying appropriate scaling as needed. \cref{sec:notation} lists terminology and notation.

\section{Differential Privacy for Adaptive Streams with Multiple Participations}
\label{sec:sensitivity}
We define and efficiently bound the sensitivity of the multi-participation adaptive streaming (continual release) setting, by generalizing~\citet[Sec. 2]{denisov22matfact}. Assume a database of $\numusers$ examples (users in FL, records in general DP applications) that is processed as a stream over $\dimension$ steps. A batch of $\batchsize$ examples is selected on each step $i$, and processed via an adaptively chosen function (e.g., computing a gradient at the current model), producing a vector of $\ell_2$ norm at most $\clipnorm$. These vectors are summed and provided to the DP mechanism as $\obs_i \in \R^\mdim$, which 
releases a privatized function of $[\obs_1, \dots, \obs_i]$, the stream so far. When a particular example contributes to the sum $\obs_i$, we say it \emph{participates} on step $i$. We are primarily interested in the case where $\dimension \times \batchsize / \numusers > 1$, and hence, some examples are used on more than one step. This is the multiple epoch setting of ML. Because our privacy guarantee is for the worst-case data-sample, we take $k$ to be the ceiling of this value.

Two data streams $\obs$ and $\obsp$ are said to be neighboring if they differ in the contributions derived from a single example, either by zeroing out all of its contributions, or by replacing them arbitrarily subject to the norm bound $\clipnorm$. Thus, the participation pattern does not change: all records contribute to the same steps in $\obs$ and $\obsp$, with only the vector contributions associated with one record changing. We define a \emph{participation schema} $\Pi$ as the set of possible \emph{participation patterns} $\pi \in \Pi$, with each $\pi \subseteq [\dimension]$ indicating a set of steps in which a single example might participate.
Assuming each record contributes at most once (\emph{single-participation}, $\Pi = \big\{ \{1\}, \{2\}, \dots \{\dimension\}\big\}$), recovers the standard streaming setting. This captures, for example, training with minibatch SGD using a single pass (epoch) over a training dataset. 
At the other extreme, we have
\emph{every-step participation} with $\Pi = \{ [\dimension]\}$ where each record contributes to every step. This captures full gradient descent, where the gradient is computed on the full training dataset at each iteration.  

\mypar{Fixed-epoch-order participation}  We focus on a generalization of the above two, \emph{\multiepoch-participation}, where each example participates at most $\maxpart$ times, with any adjacent participations exactly $\assumedsep$ steps apart: formally, $\Pi$ is the set of all $\pi$ such that $|\pi| \le \maxpart$, and if $\pi = \{i_1, \dots, i_k\}$ indexed in increasing order, we have $\forall j \in \{2, \dots, k\}, i_j - i_{j-1} = \assumedsep$. Note $\maxpartsep{1}{\dimension}$-participation recovers the single-epoch setting, and $\maxpartsep{\dimension}{1}$-participation recovers every-step participation, and for example $\maxpartsep{3}{2}$-participation has $\Pi = \{\{1, 3, 5\}, \{2, 4, 6\}\}$. We focus on this participation schema because of the following three reasons.
1) It encompasses multi-epoch SGD training using a data processing pattern well-supported by modern ML infrastracture.\footnote{This is in contrast to Poisson or independent fixed-sized batch sampling, with replacement across steps, as is assumed by many works~\citep{abadi2016deep,BST14,zhu2019poission,WangBK19}. Many works in fact process batches in a shuffled order without replacement and then incorrectly apply DP analysis for, e.g., Poisson sampling. Indeed, we use this same---incorrect---analysis for our DP-SGD baseline because it reproduces the previous state-of-the-art results for \DPSGD.} The only requirement is that rather than shuffling the dataset for each epoch, the dataset is shuffled once and the same order of minibatches is used for each epoch. With this setup, $\maxpart$ epochs of training on a dataset of size $\numusers$ with a batch size $\batchsize$ gives $\dimension = \numusers \maxpart / \batchsize$ total training steps, and satisfies $(\maxpart, \numusers/B)$-participation.     
2) We show that in important cases, e.g., \cref{eq:bruteforce}, this participation schema allows for the efficient computation of sensitivity.
3) We will see in \cref{sec:optimizing_mechanisms} that the more possible participation patterns $\pi$, the more constrained the problem of finding optimal mechanisms becomes (that is, fewer matrix factorizations satisfy sensitivty $\le 1$). Hence, a relatively limited (but practical) schema like \multiepoch-participation yields more favorable privacy-utility tradeoffs when we directly optimize matrices for this schema.

\mypar{Sensitivity of linear queries on multi-participation adaptive streams}
Consider a full-rank square query (or workload) matrix $\bfA \in \R^\dimsquarematrix$; we wish to compute the function $\obs \mapsto \bfA\obs$ in a differentially private manner, where we consider inputs $\obs$ and outputs $\bfA\obs$ to be elements of $\R^{\dimension\times\mdim}$, under geometry inherited from the Frobenius inner product.

Gradient descent with fixed learning rate represents a canonical example of this setup: letting $\obs$ represent the stream of unnoised gradients generated by a training procedure, the partially trained model at every step of the training procedure is simply a scalar multiple (with scalar being the negative learning rate) of the partial sums of this gradient stream. This partial-sum operation is represented by the matrix $\bfA$ of all 1s on the lower triangle. The `adaptivity' of the stream $\obs$ reflects the fact that the point at which we compute gradients during the training procedure depends on the previously released models, and is therefore required to capture privacy guarantees for ML model training.

We utilize the matrix mechanism~\citep{Li2015TheMM}, which, provided a factorization $\bfA = \bfB\bfC$, computes the estimate
\begin{equation}\label{eq:mat_mech}
\widehat{\bfA\obs} = \bfB\left(\bfC\obs + \bfz\right),
\end{equation}
where $\bfz$ is a sample from appropriately scaled isotropic Gaussian noise. The scale is determined by the \emph{sensitivity} of the mapping $\obs \mapsto \bfC \obs$; roughly, how much outputs of this mapping can vary (in $\ell_2$ norm) when we swap the input stream $\obs$ for a neighboring one $\obsp$. We refer to this matrix $\bfC$ as the ``encoder'', as it encodes $\obs$ as $\bfC \obs$ before adding Gaussian noise. Similarly, we call $\bfB$ the ``decoder''.

Let $\bfN$ be the set of all pairs of neighboring streams $\obs$ and $\deltaset := \left\{ \obs - \obsp \mid (\obs, \obsp) \in \bfN \right\}$ represent the set of all possible deltas between neighboring $\obs,\obsp$. The definition of $\deltaset$ implies it is symmetric ($\contrib \in \deltaset \Rightarrow -\contrib \in \deltaset$). We will say a $\deltaset$ \textbf{satisfies the participation schema $\Pi$} if the indices of all nonzero elements in each vector $\contrib \in \deltaset$ correspond to some $\pi \in \Pi$. Critically, for linear queries $\deltaset$ fully captures the sensitivity of the query:
\begin{definition}\label{def:sensopmech}
The \textbf{sensitivity} of the matrix factorization mechanism \cref{eq:mat_mech} is defined as
\begin{equation}\label{eq:sensopnorm}
\sensd(\bfC) = \sup_{(\obs, \obsp) \in \bfN} \| \bfC \obs - \bfC \obsp \|_F
 = \sup_{\contrib \in \deltaset} \| \bfC\contrib \|_F. 
\end{equation}
\end{definition}
Convexity of $\|\bfC\contrib\|_F$ in $\contrib$ implies that $\sup_{\contrib \in \deltaset} \| \bfC\contrib \|_F = \sup_{\contrib \in \conv(\deltaset)} \| \bfC\contrib \|_F$, and hence without loss of generality (wlog), we take $\deltaset$ to be convex as needed.
It is illustrative to consider some specific $\deltaset$s for scalar per-step contributions with $\clipnorm = \mdim = 1$. Single-participation corresponds to $\deltaset  = \conv\{ \alpha e_i | \alpha \in [-1, 1], i \in [\dimension] \}$ where $e_i$ for $i\in [\dimension]$ are the standard basis vectors. Noting $\norm{\bfC\contrib} = \norm{-\bfC\contrib}$ and convexity of $\norm{\bfC \contrib}$, we see the maximum will be achieved at some $e_i$, recovering the `max-$\ell_2$-norm-over-columns' measurement of sensitivity of~\citet[Proposition 3]{Li2015TheMM}. 
Every-step participation corresponds to the $\ell_\infty$ ball, $\deltaset = \{\obs \mid \norm{\obs}_\infty \le 1\}$.

\mypar{Reductions to per-iterate scalar contributions}
In ML, examples are used to calculate gradients of $\mdim>1$ dimensions, and so we wish to consider $\obs \in \R^{\dimension\times\mdim}$, with rows $\obs_i \in \R^\mdim$ corresponding to the sum of gradients for examples participating in step $i$.
In order to compute sensitivity, one may hope %
that the sensitivity for each $\obs_i \in \R^{d}$ can be bounded by only considering some appropriately worst-case $x_i \in \R$. More formally, consider a fixed participation schema $\Pi$, and further assume (wlog) $\clipnorm=1$. Then, for vector-valued contributions we have
\begin{multline*}
\deltaset^\mdim_\Pi = \conv\{\bfG \in \R^\datadim \mid 
   \exists \pi \in \Pi \text{ s.t. } \\
      \norm{\bfG\idx{i}{:}}_2 \le 1 \text { for $i \in \pi$ and } \bfG\idx{i}{:} = \textbf{0} \text { for $i \not\in \pi$} \}.
\end{multline*}
In the $\mdim=1$ case, we have a much simpler polytope, $\deltaset^1_\Pi = \conv(\Dcorners^1_\Pi)$ where
\[
\Dcorners^1_\Pi = \bigcup_{\pi \in \Pi} \Big\{ \contrib \in \R^\dimension \mid \contrib_i \in \{-1, 1\} \text{ if } i \in \pi, 0 \text{ otherwise}\Big\}.
\]
One might hope to show $\sens_{\deltaset^\mdim_\Pi}(\bfC) \le \sens_{\deltaset^1_\Pi}(\bfC)$, and the authors in fact initially conjectured this to be true. To our surprise, while this inequality holds under a variety of assumptions, it does not hold in general (\cref{sec:counterexample} gives a counterexample).\footnote{We conjecture it is ``almost'' true; tightly bounding the necessary error term is an interesting open question.} 
Empirically we have observed that for various query matrices $\bfA$ and $\multiepoch$-participation with $\mdim=1$, the optimal $\bfC$ satisfy (or almost satisfy) the condition $\bfC\tp \bfC \ge 0$ (element-wise non-negativity). In this case, we can show:

\begin{cor}\label{cor:matrix_to_vector_sens}
When per-step contributions bounded by $\clipnorm=1$, for any participation schema $\Pi$ and dimensionality $\mdim \ge 1$, when $\bfC\tp\bfC \ge 0$ elementwise, we have
$
\sens_{\deltaset^\mdim_\Pi}(\bfC) = \sens_{\deltaset^1_\Pi}(\bfC).
$
\end{cor}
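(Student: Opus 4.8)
The plan is to prove the two inequalities $\sens_{\deltaset^1_\Pi}(\bfC) \le \sens_{\deltaset^\mdim_\Pi}(\bfC)$ and $\sens_{\deltaset^\mdim_\Pi}(\bfC) \le \sens_{\deltaset^1_\Pi}(\bfC)$ separately; only the second will use the hypothesis $\bfC\tp\bfC \ge 0$ elementwise. The first direction is immediate and assumption-free: given any $\contrib \in \deltaset^1_\Pi \subseteq \R^\dimension$, the matrix $\bfG \in \R^\datadim$ whose first column is $\contrib$ and whose remaining columns vanish lies in $\deltaset^\mdim_\Pi$ (it has the same support $\pi$, and each nonzero row has $\ell_2$ norm $|\contrib_i| \le 1$), and $\|\bfC\bfG\|_F = \|\bfC\contrib\|_2$; taking the supremum over $\contrib \in \deltaset^1_\Pi$ gives $\sens_{\deltaset^1_\Pi}(\bfC) \le \sens_{\deltaset^\mdim_\Pi}(\bfC)$.

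For the reverse inequality I would first reduce to the generators of $\deltaset^\mdim_\Pi$. Since $\bfG \mapsto \|\bfC\bfG\|_F$ is convex and $\deltaset^\mdim_\Pi$ is by definition the convex hull of $\bigcup_{\pi\in\Pi}\{\bfG : \|\bfG\idx{i}{:}\|_2 \le 1 \text{ for } i \in \pi,\ \bfG\idx{i}{:} = \mathbf{0}\text{ for }i\notin\pi\}$, the supremum defining $\sens_{\deltaset^\mdim_\Pi}(\bfC)$ equals the supremum of $\|\bfC\bfG\|_F$ over this generating set. So fix a generator $\bfG$ supported on some $\pi\in\Pi$, with rows $g_i := \bfG\idx{i}{:}$ ($i\in\pi$) satisfying $\|g_i\|_2 \le 1$. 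Expanding the Frobenius norm via the cyclic trace identity,
\[
\|\bfC\bfG\|_F^2 = \tr\big(\bfG\tp\bfC\tp\bfC\bfG\big) = \tr\big(\bfC\tp\bfC\,\bfG\bfG\tp\big) = \sum_{i,j\in\pi}(\bfC\tp\bfC)_{ij}\,\langle g_i, g_j\rangle .
\]
Here the hypothesis enters: because every $(\bfC\tp\bfC)_{ij} \ge 0$ and $|\langle g_i,g_j\rangle| \le \|g_i\|_2\|g_j\|_2 \le 1$ by Cauchy--Schwarz, each summand is at most $(\bfC\tp\bfC)_{ij}$, so $\|\bfC\bfG\|_F^2 \le \sum_{i,j\in\pi}(\bfC\tp\bfC)_{ij} = \|\bfC\,\boldone_\pi\|_2^2$, where $\boldone_\pi \in \R^\dimension$ is the indicator vector of $\pi$. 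Since $\boldone_\pi$ is exactly the all-$+1$ sign pattern on $\pi$, it lies in $\Dcorners^1_\Pi \subseteq \deltaset^1_\Pi$, so the right-hand side is at most $\sens_{\deltaset^1_\Pi}(\bfC)^2$; chaining the bounds gives $\|\bfC\bfG\|_F \le \sens_{\deltaset^1_\Pi}(\bfC)$, and taking the supremum over generators completes the direction.

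The whole argument is concentrated in the single step where $\langle g_i,g_j\rangle$ is replaced by $|\langle g_i,g_j\rangle|$ and then by $1$: both replacements require $(\bfC\tp\bfC)_{ij}\ge 0$, and this is precisely where the general statement fails — when $\bfC\tp\bfC$ has negative off-diagonal entries, a row-Gram matrix $\big(\langle g_i,g_j\rangle\big)_{i,j\in\pi}$ with mismatched signs can push $\|\bfC\bfG\|_F$ above every scalar value $\|\bfC\contrib\|_2$ with $\contrib \in \Dcorners^1_\Pi$ (cf. \cref{sec:counterexample}). So I expect no genuine obstacle; the only care needed is (i) justifying the reduction to the generating set via convexity of $\bfG \mapsto \|\bfC\bfG\|_F$, and (ii) verifying that $\boldone_\pi$ is a legitimate corner in $\Dcorners^1_\Pi$, both of which are routine.
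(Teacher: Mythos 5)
Your proposal is correct and follows essentially the same route as the paper: the easy direction uses the same embedding of a scalar contribution into the first coordinate of each row, and your reverse direction simply inlines the paper's argument for condition (2) of \cref{thm:multidim} (expand $\lfrob{\bfC\bfG}^2 = \tr\bigl(\bfC\tp\bfC\,\bfG\bfG\tp\bigr)$, bound each Gram entry by $1$ using elementwise nonnegativity, and land on the all-ones vector supported on $\pi$), applied per participation pattern exactly as the paper does via the submatrices $\bfC\idx{:}{\pi}$.
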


In particular, this implies that if $\bfC$ is optimal in the $\mdim=1$ case and satisfies $\bfC\tp\bfC \ge 0$, it is also optimal in the $\mdim > 1$ case. This result is a corollary of \cref{thm:multidim}, which establishes additional conditions under which $\sens_{\deltaset^\mdim_\Pi}(\bfC) \le \sens_{\deltaset^1_\Pi}(\bfC)$ holds. All proofs are in \cref{sec:startofproofs} onwards.%

\mypar{Difficulty of computing $\mathbf{\sens(\bfC)}$} In general, computing $\sens(\bfC)$ is a convex quadratic \emph{maximization} problem over a convex set, which can be NP-hard. Even the simple case of computing the sensitivity for an arbitrary matrix $\bfC$ under every-step participation with scalar ($\mdim = 1$) contributions is NP-hard---it is exactly the problem of computing the $\ell_\infty\!-\!\ell_2$ operator norm \citep{tropp_thesis}. In fact, it is useful to observe $\sensd(\cdot)$ can always be viewed as an operator norm, see \cref{sec:opnorm}. This hardness is in stark contrast to the single-participation setting, where calculating sensitivity is trivial. However, we \emph{can in some cases compute sensitivity exactly by} \textbf{brute force}. Take $\mdim=1$. Observe $\Dcorners^1_\Pi$ is a finite set and so a direct calculation by using \cref{eq:sensopnorm} is often possible. But, $|\Dcorners^1_\Pi| = |\Pi|2^\maxpart$, and observing the symmetry $\norm{\bfC \contrib} = \norm{\bfC (-\contrib)}$ can reduce the computational cost only by half. In general $|\Pi|$ may be exponential in $\maxpart$, but in the special case of $\multiepoch$-participation, we have $|\Pi| = \assumedsep$ (the number of steps in one epoch). Hence, for modest numbers of epochs $\maxpart$, directly computing sensitivity is possible, e.g., in our StackOverflow experiments in \cref{ssec:federated-experiments}, we can reduce $\contrib \in \Dcorners^1_\Pi$ to only $342\cdot2^5 = 10,944$ vectors. \cref{thm:multidim} can be used to translate bounds from scalar to higher dimensions.%

\mypar{Computing sensitivity when $\mathbf{\bfC\tp\bfC \ge 0}$}
Let $\bfX = \bfC\tp\bfC.$ When $\bfX$ has only nonnegative elements, one may reduce the problem of computing $\sens_{\deltaset^\mdim_\Pi}(\bfC)$ to
\begin{align}
 \sens_{\deltaset^\mdim_\Pi}(\bfC) \nonumber
 &= \max_{\contrib \in \deltaset^1_\Pi} \lfrob{\bfC\contrib}  \quad\quad \text{by \cref{cor:matrix_to_vector_sens}}\\
 &= \max_{\contrib \in \deltaset^1_\Pi} \sqrt{\contrib\tp \bfX \contrib  }
 = \max_{\pi \in \Pi} \sqrt{\mathbf{1}\tp\bfX\idx{\pi}{\pi}\mathbf{1}},\label{eq:bruteforce}
\end{align}
where $\bfX\idx{\pi}{\pi} \in \R^{\maxpart \times \maxpart}$ is the submatrix of $\bfX$ formed from the rows and columns selected by $\pi$, $|\pi| = \maxpart$ and $\mathbf{1} \in \R^\maxpart$. The max over $\contrib$ must be achieved by the maximum-magnitude nonnegative vector $\bfu$, specifically $\mathbf{1}^\maxpart$. As noted above, the matrices we consider satisfy this property, and hence we can compute the exact sensitivity for $\multiepoch$-participation in time $\calO(bk^2)$.

\mypar{Upper-bounding sensitivity} As an alternative to structural conditions on $\bfC$ or $\bfX$ allowing efficient exact computation of sensitivity for $\mdim > 1$, we can look to (reasonably tight) upper bounds on the sensitivity of $\bfC$. %
In the case of \multiepoch-participation, one efficient method of computing upper bounds for the multiple-participation sensitivity of $\bfC$ has shown itself to be particularly useful:
\newcommand{\absDcorners}{\bar{\Dcorners}}

\begin{thm}\label{thm:eval_sens_ub}
Let $\bfC \in \R^\dimdim$, and take some participation schema $\Pi$, with $\maxpart = \max_{\pi \in \Pi} |\pi|$ the maximum number of participations. With $\bfC\idx{:}{\pi}$ representing selecting the columns of the matrix $\bfC$ indexed by $\pi$ and $\ltwo{\cdot}$ the spectral matrix norm, let
$
\lambda=\max\limits_{\pi \in \Pi}\ltwo{\bfC\idx{:}{\pi}}
$. 
Then 
$
\sens_{\deltaset^1_\Pi}(\bfC) \le \lambda\sqrt{k}
$.

\end{thm}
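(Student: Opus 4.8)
The plan is to use that $\sens_{\deltaset^1_\Pi}(\bfC)$ is the maximum of a convex function over a polytope — hence attained at a vertex — and then bound the image of each vertex by a one-line operator-norm estimate.

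First I would invoke the structure of $\deltaset^1_\Pi$: it equals $\conv(\Dcorners^1_\Pi)$ with $\Dcorners^1_\Pi$ a finite set, and $\contrib\mapsto\lfrob{\bfC\contrib}$ is convex, so
\[
\sens_{\deltaset^1_\Pi}(\bfC)=\sup_{\contrib\in\deltaset^1_\Pi}\lfrob{\bfC\contrib}=\max_{\contrib\in\Dcorners^1_\Pi}\ltwo{\bfC\contrib},
\]
where I use that for $\mdim=1$ the Frobenius norm on $\R^\dimension$ is the Euclidean norm. It therefore suffices to bound $\ltwo{\bfC\contrib}$ for an arbitrary corner $\contrib\in\Dcorners^1_\Pi$.

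Second, I would fix such a $\contrib$ and let $\pi\in\Pi$ be the participation pattern with $\contrib_i\in\{-1,1\}$ for $i\in\pi$ and $\contrib_i=0$ otherwise (one exists by definition of $\Dcorners^1_\Pi$). Writing $\contrib_\pi\in\{-1,1\}^{|\pi|}$ for the restriction of $\contrib$ to its support, the zero coordinates of $\contrib$ annihilate all columns of $\bfC$ outside $\pi$, so $\bfC\contrib=\bfC\idx{:}{\pi}\,\contrib_\pi$. Then applying the spectral norm as the $\ell_2\!\to\!\ell_2$ operator norm,
\[
\ltwo{\bfC\contrib}=\ltwo{\bfC\idx{:}{\pi}\contrib_\pi}\le\ltwo{\bfC\idx{:}{\pi}}\,\ltwo{\contrib_\pi}\le\lambda\sqrt{|\pi|}\le\lambda\sqrt{k},
\]
using $\ltwo{\bfC\idx{:}{\pi}}\le\lambda$ by definition of $\lambda$, $\ltwo{\contrib_\pi}=\sqrt{|\pi|}$ since $\contrib_\pi$ is $\pm1$-valued, and $|\pi|\le k$. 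Taking the maximum over $\contrib\in\Dcorners^1_\Pi$ finishes the proof.

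Honestly there is no hard part here: the whole argument is the reduction to corners (exactly the convexity fact recorded before \cref{cor:matrix_to_vector_sens}) followed by submultiplicativity of the spectral norm. The only subtle point is conceptual rather than technical — one should flag where the bound is loose (it replaces $\lfrob{\bfC\idx{:}{\pi}\contrib_\pi}$ by $\ltwo{\bfC\idx{:}{\pi}}\ltwo{\contrib_\pi}$, an equality only when $\contrib_\pi$ aligns with the leading right singular vector of $\bfC\idx{:}{\pi}$, and uses a single $\lambda$ uniformly over all $\pi$), which motivates also developing the exact computations in \cref{eq:bruteforce} and \cref{thm:multidim}; but the inequality itself is immediate and, crucially, cheap to evaluate — one SVD of an $\dimension\times k$ matrix per pattern $\pi$.
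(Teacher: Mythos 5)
Your proof is correct and follows essentially the same route as the paper's: restrict $\bfC$ to the columns indexed by the support pattern $\pi$, apply submultiplicativity of the spectral norm, and use $\ltwo{\contrib} \le \sqrt{k}$. The only difference is that you spell out the reduction to corners of $\deltaset^1_\Pi$ explicitly, which the paper's one-line argument leaves implicit; that added care is fine but not a different method.
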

In the \multiepoch-participation case, $|\Pi| = \assumedsep$. The complexity of computing the largest eigenvalue of the subselected $\bfC$ matrix is cubic in $\maxpart$. Thus, computing this upper bound is $\calO(b\maxpart^3)$, easily computable for the range of $\maxpart, b$ considered here ($\maxpart \leq 100, b \leq 500$).

\mypar{Differential Privacy Guarantee}
Using our generalization of adaptive streams to multiple participations we obtain the following result (a straightforward generalization of \citet[Theorem 2.1]{denisov22matfact}). The proof is identical to~\cite{denisov22matfact}, except we replace the sensitivity bound with that for multiple participations obtained via \cref{cor:matrix_to_vector_sens}.

\begin{thm}\label{thm:GaussianAdaptive}
Let $\bfA\in\mathbb{R}^{\dimension\times \dimension}$ be a lower-triangular full-rank query matrix, and let $\bfA=\bfB\bfC$ be any factorization with the following property: for any two \textit{neighboring} streams $\obs, \obsp\in \R^\datadim$, we have $\|\bfC(\obs - \obsp)\|_F\leq \kappa$.\ Let $\bfZ\sim\calN(0,\kappa^2\sigma^2)^{\dimension\times d}$ with $\sigma$ large enough so that $\mech(\obs) = \bfA\obs + \bfB\bfZ = \bfB(\bfC \obs +\bfZ)$ satisfies  $(\varepsilon,\delta)$-DP (or $\rho$-zCDP or $\mu$-Gaussian DP) in the \textit{nonadaptive} continual release model. Then, $\mech$ satisfies the same DP guarantee (with the same parameters) even when the 
rows of the input are chosen adaptively.
\end{thm}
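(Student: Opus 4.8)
The plan is to follow \citet[Theorem~2.1]{denisov22matfact} essentially verbatim, the only substantive change being which sensitivity bound is substituted. First I would reduce to analyzing $\mech'(\obs) := \bfC\obs + \bfZ$. Since $\bfA$ and $\bfB$ are lower-triangular (the latter as required of a streaming decoder) and $\bfA$ is full-rank, $\bfB$ is invertible and $\bfC = \bfB^{-1}\bfA$ is lower-triangular. Hence $\mech(\obs) = \bfB\,\mech'(\obs)$ and $\mech'(\obs) = \bfB^{-1}\mech(\obs)$ are fixed bijective post-processings of one another, so they satisfy exactly the same guarantees under $(\varepsilon,\delta)$-DP, $\rho$-zCDP, and $\mu$-Gaussian DP, in both the non-adaptive and the adaptive continual-release models. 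It therefore suffices to show that $\mech'$ satisfies the stated guarantee when the rows of $\obs$ are chosen adaptively.

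Next I would recast $\mech'$ as an adaptive composition of Gaussian mechanisms. Write $o_i = \mu_i + \bfZ\idx{i}{:}$ with $\mu_i := (\bfC\obs)\idx{i}{:} = \sum_{j\le i}\bfC\idx{i}{j}\,\obs\idx{j}{:}$ and rows $\bfZ\idx{i}{:} \sim \calN(0,\kappa^2\sigma^2)^{\mdim}$ i.i.d. Because $\bfC$ is lower-triangular, $\mu_i$ depends only on $\obs\idx{1}{:},\dots,\obs\idx{i}{:}$, which in the adaptive model are functions of the previously released $o_1,\dots,o_{i-1}$; so releasing $o_i$ is a Gaussian mechanism applied to a statistic $\mu_i$ measurable with respect to the history. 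Fix any two neighboring streams $\obs,\obsp$ produced in the course of the adaptive game (they differ only on rows indexed by some $\pi\in\Pi$, subject to the per-row norm bound, with the adversary committing to both rows at each step). Writing $\mu_i'$ for the analogous statistic under $\obsp$, the per-step difference $\mu_i-\mu_i' = (\bfC(\obs-\obsp))\idx{i}{:}$ is determined by the time step $i$ runs, and summing squared norms gives the pathwise identity
\begin{equation*}
\sum_{i=1}^{\dimension}\ltwo{\mu_i-\mu_i'}^2 \;=\; \lfrob{\bfC(\obs-\obsp)}^2 \;\le\; \kappa^2 ,
\end{equation*}
by the hypothesis of the theorem.

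Thus $\mech'$ is an adaptive composition of Gaussian mechanisms whose per-step sensitivities $\Delta_i := \ltwo{\mu_i-\mu_i'}$ are history-measurable and satisfy $\sum_i \Delta_i^2 \le \kappa^2$ along every trajectory. I would then invoke the standard fact that adaptivity does not degrade the Gaussian mechanism: a total $\ell_2$-sensitivity budget $\kappa$ against noise scale $\kappa\sigma$ may be spent adaptively across rounds without changing the privacy profile. Concretely, a one-line calculation gives $\Ex{e^{\lambda\ell_i}\mid \text{history}} = \exp\!\big((\lambda^2+\lambda)\Delta_i^2/(2\kappa^2\sigma^2)\big)$ for the per-step privacy loss $\ell_i$, and since each $\Delta_i$ is history-measurable with $\sum_i\Delta_i^2\le\kappa^2$, a backward induction over steps bounds the privacy-loss moment generating function by $\exp\!\big((\lambda^2+\lambda)/(2\sigma^2)\big)$---exactly that of a single Gaussian mechanism with $\ell_2$-sensitivity $\kappa$ and noise $\calN(0,\kappa^2\sigma^2)$ (the $f$-DP / hockey-stick formulations compose analogously). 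By hypothesis that mechanism satisfies the stated $(\varepsilon,\delta)$-DP (resp.\ $\rho$-zCDP, $\mu$-Gaussian DP) guarantee, so $\mech'$---and hence $\mech$---does as well.

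The only place the participation schema $\Pi$ enters is through the pathwise bound $\lfrob{\bfC(\obs-\obsp)}\le\kappa$; everything else is schema-agnostic. Hence the sole new ingredient relative to \citet{denisov22matfact}---which treated single participation, where $\kappa$ is simply the largest column norm of $\bfC$---is supplying this bound for \multiepoch-participation, which is exactly what \cref{cor:matrix_to_vector_sens}, \cref{thm:eval_sens_ub}, and the brute-force computation of \cref{eq:bruteforce} provide. I expect the only point needing care (already handled in \citet{denisov22matfact}) is confirming that the composition bookkeeping needs only a bound on the \emph{total} $\lfrob{\bfC(\obs-\obsp)}$ and not a per-round sensitivity bound: this is where lower-triangularity of $\bfC$ (history-measurability of each $\Delta_i$) and the fixed-participation-pattern neighboring relation (which forces $\sum_i\Delta_i^2$ to be bounded along every trajectory, the adversary having committed to both streams' rows) are used.
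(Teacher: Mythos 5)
Your overall plan is the paper's: the proof given for \cref{thm:GaussianAdaptive} is literally ``the proof of Theorem 2.1 of \citet{denisov22matfact}, with the sensitivity bound replaced by the multi-participation bound,'' and you correctly identify that the participation schema enters only through the pathwise bound $\lfrob{\bfC(\obs-\obsp)}\le\kappa$ (available because the multi-participation neighboring relation keeps the participation pattern fixed, which is exactly what \cref{cor:matrix_to_vector_sens}, \cref{thm:eval_sens_ub}, and \cref{eq:bruteforce} feed in). Your per-row Gaussian decomposition with history-measurable mean shifts $\Delta_i$, the identity $\sum_i\Delta_i^2=\lfrob{\bfC(\obs-\obsp)}_{\vphantom{F}}^2\le\kappa^2$ along every trajectory, and the conditional-MGF backward induction are the right core of that argument.

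There is, however, one step in your reconstruction that is not justified by the hypotheses and would fail in the cases this theorem is actually invoked for. You assume $\bfB$ is lower-triangular and invertible, so that $\mech$ and $\mech'(\obs)=\bfC\obs+\bfZ$ are bijective post-processings of one another and $\bfC=\bfB\inv\bfA$ is lower-triangular. The theorem allows \emph{any} factorization, and the paper applies it to factorizations in which the encoder is rectangular (or singular) and not lower-triangular and the decoder is not invertible---the binary-tree factorization and the FFT factorizations of \cref{sec:fft} and \cref{app:two_dft_mechanisms} (see the proof of \cref{thm:privFFT}, which cites exactly this adaptive-privacy step). In those cases $\mech'$ cannot be recovered from $\mech$, and $\mech'$ need not even be releasable in streaming order, so your reduction collapses. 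The missing ingredient, handled in \citet{denisov22matfact}, is to exploit isotropy of $\bfZ$: since $\bfA=\bfB\bfC$ is full rank, $\bfB\bfB\tp$ is positive definite, so $\bfB\bfZ$ is distributed as $\bfL\bfW$ with $\bfL$ the lower-triangular invertible Cholesky factor of $\bfB\bfB\tp$ and $\bfW$ isotropic of the same scale; replacing $(\bfB,\bfC)$ by $(\bfL,\,\bfL\inv\bfA)$ leaves the mechanism's output distribution unchanged, makes the encoder square and lower-triangular, and does not increase sensitivity because $(\bfL\inv\bfA)\tp(\bfL\inv\bfA)=\bfC\tp\bfP\bfC\preceq\bfC\tp\bfC$ with $\bfP=\bfB\tp(\bfB\bfB\tp)\inv\bfB$ an orthogonal projection, so $\lfrob{\bfL\inv\bfA(\obs-\obsp)}\le\lfrob{\bfC(\obs-\obsp)}\le\kappa$. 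After this reduction your bijective-post-processing step and the MGF induction go through verbatim and coincide with the cited argument.
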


\section{Optimal Matrices for Multiple Epochs}\label{sec:optimizing_mechanisms}

\begin{figure*}[t]
    \centering
    \vspace{-0.5em}
    \includegraphics[width=0.75\linewidth]{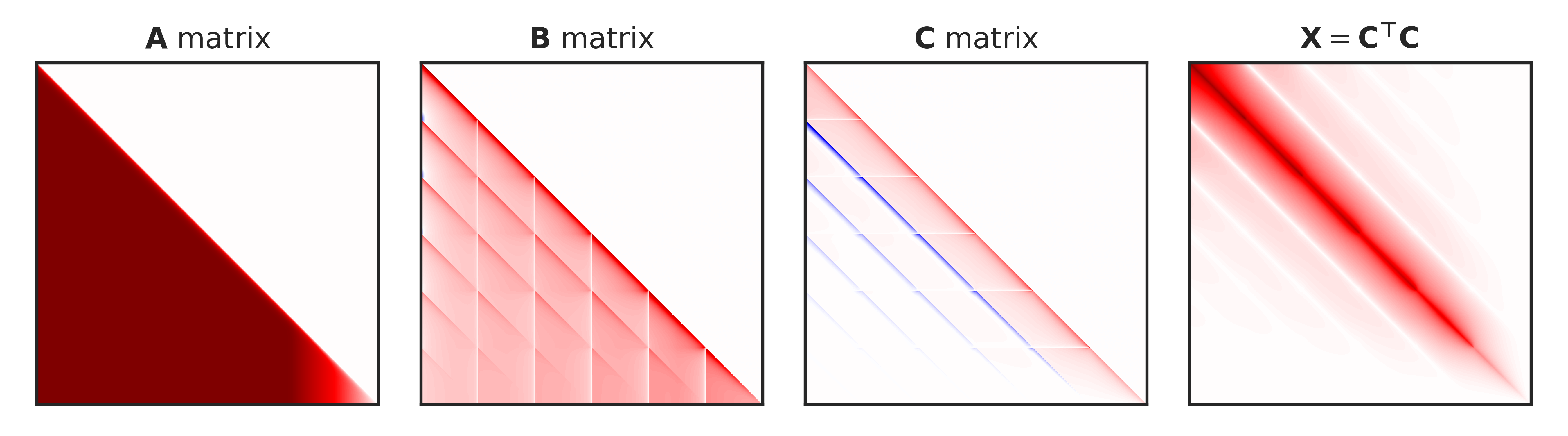}
    \vspace{-1.5em} %
    \caption{The optimal factorization $\bfA = \bfB \bfC$ under \maxpartsep{6}{342}-participation, constructed by solving the optimization problem \cref{eq:constrained_problem}.  Matrix $\bfA$ encodes SGD with momentum 0.95 and a learning-rate cooldown schedule for the last 25\% of rounds, as used in our StackOverlow experiments (\cref{ssec:federated-experiments}). The constraints on sensitivity imposed by this participation schema are evident in the resulting matrices. For example, the white diagonals with a period of $b=342$ in $\bfX = \bfC\tp \bfC$ show that the columns of $\bfC$ that could correspond to a pair of rounds $(i, j)$ where the same user might participate are in fact orthogonal. See \cref{fig:axbc_matrices} in \cref{ssec:app:complete-results} for a larger view.}
    \label{fig:axbc_matrices_row}
\end{figure*}

We now present methods for computing optimal matrix mechanisms that are specialized to (optimized for) a specific participation schema $\Pi$ and query matrix $\bfA$. For example, \cref{fig:axbc_matrices_row} shows the optimal factorization for $\bfA$ representing SGD with momentum and learning-rate cooldown under \maxpartsep{6}{342}-participation, used in \cref{ssec:federated-experiments}. Specializing the mechanism to both the participation pattern and specific query workload enables us to obtain state-of-the-art results in ML (\cref{sec:empirical-eval}).

We build on the approach of~\citet{denisov22matfact}. We begin by defining the loss of interest, i.e., the total variance of noise added, for the mechanism defined in \cref{eq:mat_mech}. Note that this loss characterizes other downstream tasks like DP mean estimation. Given $\deltaset$, assume that we may represent $\deltaset = \conv(\Dcorners)$ for some finite set $\Dcorners$---as we have seen, this is the case, e.g., in \multiepoch-participation. Then the loss which corresponds to total squared error of a factorization, at a fixed privacy level, may be expressed as: 
\begin{equation}\label{eq:l_expression}
\begin{aligned}
  \calL(\bfB, \bfC) = \sensd^2(\bfC) \left\|\bfB\right\|_F^2
  \qquad \text{where}\\
  \qquad \sensd^2(\bfC) = \sup_{\contrib \in \deltaset} \| \bfC\contrib \|_2^2 = \sup_{\contrib \in \Dcorners} \| \bfC\contrib \|_2^2.
 \end{aligned}
\end{equation}
Observing that $\forall\alpha$ the mechanism $\bfA = \big(\alpha\bfB\big)\big(\frac{1}{\alpha}\bfC\big)$ has identical loss, we conclude that we may consider the constrained version of the problem of minimizing this loss where $\sensd(\bfC) \le 1$.
Since for any $\bfC$, $\bfB = \bfA\bfC\pinv$ produces the minimum-Frobenius norm $\bfB$-matrix, it is sufficient to solve:
\begin{equation}\label{eq:constrained_problem}
\min_{\bfC} \calL\left(\bfA\bfC^\dagger, \bfC\right) 
    = \min_{\bfC: \sensd^2(\bfC) = 1} \left\|\bfA\bfC^\dagger\right\|_F^2.
\end{equation}
With the change of variables $\bfX = \bfC\tp\bfC$, equivalently: 
\begin{align}
    \min_\bfX\ \  &\tr(\AAT \bfX^{-1})  \label{eq:symmetrized_problem}\\    
    \text{s.t.}\ \  &  \bfX \text{ is PD} %
    \quad \text{and} \quad 
    \max_{\contrib \in \Dcorners^1_\Pi} \contrib\tp \bfX \contrib \le 1. \notag
\end{align}
One of our main contributions is the following theorem which leads directly to efficient algorithms with provable optimality gaps for the mathematical program \cref{eq:symmetrized_problem}:
\begin{thm}\label{thm:dual_fn}
Let a finite $\Dcorners = \{\contrib_i\}_{i=1}^k$ be given, and assume that the vectors $\{\contrib_i\}_{i=1}^k$ span $\R^n$. Assume that $\bfA$ is full-rank, and for $\dualv \in \R^k$ define 
$
\Hv = [\contrib_1, \dots, \contrib_k] \diag(\dualv)^{1/2}, \quad \bfU = \Hv\Hv\tp.
$
Define the Lagrangian $L$ as
$
L\left(\bfX, \dualv\right) := \tr(\AAT\bfX^{-1})
     + \sum_{\contrib \in \Dcorners} \dualv_\contrib \left(\contrib\tp \bfX \contrib - 1\right).
$
Then, for Lagrange multipliers $\bfv$ such that the $\bfU$ is full-rank, the minimizer $\bfX\left(\dualv\right)$ of $L$ for this fixed $\dualv$ may be represented
$
\bfX\left(\dualv\right) = \bfU\nhlf \big(\bfU\hlf \AAT \bfU\hlf \big)\hlf \bfU\nhlf,
$
and the Lagrange dual function $g$ for the problem \cref{eq:symmetrized_problem} can be expressed in closed form in terms of the dual variables $\dualv$: 
\begin{equation}\label{eq:dual_fn_rep}
\begin{gathered}[b]
g(\dualv) := \inf_{\bfX\text{ is PD}} L(\bfX, \dualv) = \\
2\,\tr \left( \big(\bfU\hlf \AAT \bfU\hlf \big)\hlf \right) - \sum_{\contrib\in\Dcorners} \dualv_\contrib.
\end{gathered}
\end{equation}
\end{thm}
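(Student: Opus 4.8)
The plan is to collapse the quadratic penalty in the Lagrangian into a single trace, minimize the resulting convex function of $\bfX$ over the PD cone in closed form, and then substitute the minimizer back to read off $g(\dualv)$.

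First, since $\contrib\tp\bfX\contrib = \tr(\bfX\contrib\contrib\tp)$ and, by construction, $\bfU = \Hv\Hv\tp = \sum_{i=1}^k \dualv_{\contrib_i}\,\contrib_i\contrib_i\tp = \sum_{\contrib\in\Dcorners}\dualv_\contrib\,\contrib\contrib\tp$, we get $\sum_{\contrib\in\Dcorners}\dualv_\contrib\,\contrib\tp\bfX\contrib = \tr(\bfU\bfX)$, hence $L(\bfX,\dualv) = \tr(\AAT\bfX\inv) + \tr(\bfU\bfX) - \sum_{\contrib\in\Dcorners}\dualv_\contrib$, with the last term constant in $\bfX$. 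The map $\bfX\mapsto\tr(\AAT\bfX\inv)$ is (strictly) convex on $\Snpp$ because $\AAT\succ0$ ($\bfA$ full-rank) and matrix inversion is operator convex, while $\tr(\bfU\bfX)$ is linear; so $L(\cdot,\dualv)$ is strictly convex on the open PD cone. In the regime of interest, $\bfU\succ0$ — which is what ``$\bfU$ full-rank'' amounts to once we restrict to $\dualv$ for which the dual is finite (if $\bfU$ is indefinite, taking $\bfX = \bfI + t\,ww\tp$ along an eigenvector $w$ with $w\tp\bfU w<0$ sends $L\to-\infty$) — the objective is also coercive as $\bfX$ degenerates toward $\partial\Snpp$ (the term $\tr(\AAT\bfX\inv)$ blows up) or $\|\bfX\|\to\infty$ (the term $\tr(\bfU\bfX)$ blows up). Thus a unique minimizer exists in the interior and is characterized by the stationarity condition $\nabla_\bfX L = -\bfX\inv\AAT\bfX\inv + \bfU = 0$, i.e. $\bfX\bfU\bfX = \AAT$.

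Next I would solve this matrix equation by the congruence substitution $\bfY := \bfU\hlf\bfX\bfU\hlf$ (PD iff $\bfX$ is), which turns it into $\bfY^2 = \bfU\hlf\AAT\bfU\hlf$, a PD right-hand side since $\bfA$ is full-rank and $\bfU\succ0$. Taking the unique PD square root gives $\bfY = \big(\bfU\hlf\AAT\bfU\hlf\big)\hlf$ and hence $\bfX(\dualv) = \bfU\nhlf\big(\bfU\hlf\AAT\bfU\hlf\big)\hlf\bfU\nhlf$, as claimed. Finally I substitute back: writing $\bfY = \big(\bfU\hlf\AAT\bfU\hlf\big)\hlf$ and $\bfX(\dualv)\inv = \bfU\hlf\bfY\inv\bfU\hlf$, cyclicity of the trace gives $\tr(\bfU\bfX(\dualv)) = \tr(\bfY)$ and $\tr(\AAT\bfX(\dualv)\inv) = \tr(\bfU\hlf\AAT\bfU\hlf\,\bfY\inv) = \tr(\bfY^2\bfY\inv) = \tr(\bfY)$; adding these and the constant term yields $g(\dualv) = 2\,\tr\big((\bfU\hlf\AAT\bfU\hlf)\hlf\big) - \sum_{\contrib\in\Dcorners}\dualv_\contrib$.

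The main obstacle is the middle step: one must be certain the stationary point is a genuine global minimizer over the \emph{open} PD cone and not merely a critical point, which needs the convexity observation together with a coercivity/attainment argument and a clean delineation of the $\bfU\succ0$ regime. Once the congruence substitution $\bfY = \bfU\hlf\bfX\bfU\hlf$ is in hand, solving $\bfX\bfU\bfX = \AAT$ and evaluating the two traces are routine manipulations.
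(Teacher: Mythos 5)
Your proposal is correct and follows essentially the same route as the paper: rewrite the Lagrangian as $\tr(\AAT\bfX\inv)+\tr(\bfU\bfX)-\sum_\contrib \dualv_\contrib$, set the gradient to zero to get $\bfX\bfU\bfX=\AAT$, solve it via the square-root/congruence argument (the paper packages this as its Lemma on $\bfX\bfU\bfX=\bfV$, whose uniqueness proof is the same $\bfU\hlf(\cdot)\bfU\hlf$ trick you use), and substitute back to obtain $g(\dualv)=2\,\tr\big((\bfU\hlf\AAT\bfU\hlf)\hlf\big)-\sum_\contrib\dualv_\contrib$. Your explicit convexity-plus-coercivity argument that the stationary point is the global minimizer over the open PD cone is a welcome bit of extra rigor where the paper is terser, but it does not change the substance of the argument.
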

\mypar{Remark} The restriction that $\dualv$ yields a full-rank $\bfU$ serves to restrict to cases where the Lagrangian has a finite, positive-definite minimizer in the primal variable; if the vectors $\{\contrib\}$ span $\R^n$, the problem \cref{eq:symmetrized_problem} has a finite minimizer by \cref{lem:finite_minimizer}. Any setting of the dual variables $\dualv$ corresponding to this minimizer is contained in a neighborhood uniformly satisfying this full-rank property, and so it is valid to differentiate our expression for $g$ with respect to such $\dualv$ (as we will do in \cref{app:ssec:dual-proof-multipliers-proof}).
\begin{cor}\label{cor:gen_fixed_point}
In the same setup as \cref{thm:dual_fn}, the gradient of the dual function $g$ is:
$
 \frac{\partial g}{\partial \dualv_i} =  \contrib_i\tp \bfU\nhlf \big(\bfU\hlf \AAT \bfU\hlf \big)\hlf \bfU\nhlf \contrib_i - 1.
$
Moreover, a maximizer of the dual $\vopt$ must satisfy:
\begin{align}
\vopt 
  &= \diagpart\left(\big(\Hvopt\tp \AAT \Hvopt\big)\hlf\right)  \label{eq:genfixedpoint}.
\end{align}
The optimal value of \cref{eq:symmetrized_problem} is $\tr\left(\vopt\right)$.
\end{cor}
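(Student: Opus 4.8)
Throughout I inherit the hypotheses of \cref{thm:dual_fn}: $\Dcorners=\{\contrib_i\}_{i=1}^k$ spans $\R^\dimension$, $\bfA$ is full-rank, and one works on the open set of $\dualv$ for which $\bfU$ is full-rank (which, by the Remark, contains a neighborhood of any dual maximizer). I will write $\bfV=[\contrib_1,\dots,\contrib_k]$, so that $\Hv=\bfV\diagv\hlf$ and $\bfU=\bfV\diagv\bfV\tp$, and carry out three steps: the gradient formula, the fixed point, and the optimal value.

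\textbf{Gradient of $g$.} The plan is to use the envelope (Danskin) theorem rather than differentiate the nested square roots in the closed form of $g$. Since $\bfA$ is full-rank, $\bfX\mapsto\tr(\AAT\bfX^{-1})$ is strictly convex on the PD cone and coercive (it blows up both at the boundary of the cone and, once $\bfU\succ0$, as $\lfrob{\bfX}\to\infty$ via the term $\tr(\bfU\bfX)$), while the remaining terms of $L(\cdot,\dualv)$ are linear in $\bfX$; hence $L(\cdot,\dualv)$ has the unique interior minimizer $\bfX(\dualv)$ identified in \cref{thm:dual_fn}, at which $\nabla_\bfX L(\bfX(\dualv),\dualv)=0$. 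The envelope theorem then gives that $g(\dualv)=L(\bfX(\dualv),\dualv)$ is differentiable and that $\partial_i g(\dualv)$ equals the partial $\partial L/\partial\dualv_i$ evaluated at $(\bfX(\dualv),\dualv)$, namely $\contrib_i\tp\bfX(\dualv)\contrib_i-1$; substituting $\bfX(\dualv)=\bfU\nhlf(\bfU\hlf\AAT\bfU\hlf)\hlf\bfU\nhlf$ yields the claimed formula.

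\textbf{Fixed point at a dual maximizer.} Fix a maximizer $\vopt$ and set $\bfU_\star=\bfV\diag(\vopt)\bfV\tp$, which is full-rank. The key observation is that $\bfQ:=\bfU_\star\nhlf\Hvopt$ is a co-isometry (orthonormal rows): $\bfQ\bfQ\tp=\bfU_\star\nhlf(\Hvopt\Hvopt\tp)\bfU_\star\nhlf=\bfI_\dimension$. Writing $\bfK:=\bfU_\star\hlf\AAT\bfU_\star\hlf$, this gives $\Hvopt\tp\AAT\Hvopt=\bfQ\tp\bfK\bfQ$, and conjugation by a co-isometry commutes with the matrix square root: $(\bfQ\tp\bfK\hlf\bfQ)^2=\bfQ\tp\bfK\hlf(\bfQ\bfQ\tp)\bfK\hlf\bfQ=\bfQ\tp\bfK\bfQ$ with $\bfQ\tp\bfK\hlf\bfQ\succeq0$, so by uniqueness of the PSD square root $(\Hvopt\tp\AAT\Hvopt)\hlf=\bfQ\tp\bfK\hlf\bfQ$. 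Since the $i$-th column of $\Hvopt$ is $\sqrt{\vopt_i}\,\contrib_i$, we have $\bfQ e_i=\sqrt{\vopt_i}\,\bfU_\star\nhlf\contrib_i$, whence
\[
\bigl[(\Hvopt\tp\AAT\Hvopt)\hlf\bigr]_{ii}
= \vopt_i\,\contrib_i\tp\bfU_\star\nhlf\bigl(\bfU_\star\hlf\AAT\bfU_\star\hlf\bigr)\hlf\bfU_\star\nhlf\contrib_i .
\]
By Step 1 the scalar $\contrib_i\tp\bfU_\star\nhlf(\cdots)\hlf\bfU_\star\nhlf\contrib_i$ equals $1+\partial_i g(\vopt)$; first-order optimality of $\vopt$ for $\max_{\dualv\ge0}g(\dualv)$ forces $\partial_i g(\vopt)=0$ whenever $\vopt_i>0$, so the right-hand side above is $\vopt_i$, and when $\vopt_i=0$ both sides vanish. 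Hence $\vopt=\diagpart\bigl((\Hvopt\tp\AAT\Hvopt)\hlf\bigr)$.

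\textbf{Optimal value.} Taking traces in $(\Hvopt\tp\AAT\Hvopt)\hlf=\bfQ\tp\bfK\hlf\bfQ$ and using $\bfQ\bfQ\tp=\bfI_\dimension$ gives $\tr(\vopt)=\sum_i[(\Hvopt\tp\AAT\Hvopt)\hlf]_{ii}=\tr(\bfK\hlf)=\tr\bigl((\bfU_\star\hlf\AAT\bfU_\star\hlf)\hlf\bigr)$; substituting into the closed form of \cref{thm:dual_fn} gives $g(\vopt)=2\tr(\vopt)-\tr(\vopt)=\tr(\vopt)$. Weak duality (the multipliers $\vopt_\contrib\ge0$ against the constraints $\contrib\tp\bfX\contrib-1\le0$) shows the optimum of \cref{eq:symmetrized_problem} is $\ge g(\vopt)=\tr(\vopt)$. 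For the matching upper bound take $\bfX_\star:=\bfX(\vopt)$: it is PD, it is feasible because $\contrib_i\tp\bfX_\star\contrib_i=1+\partial_i g(\vopt)\le1$ (equality when $\vopt_i>0$, inequality from the KKT condition when $\vopt_i=0$), and $\tr(\AAT\bfX_\star^{-1})=\tr\bigl((\bfU_\star\hlf\AAT\bfU_\star\hlf)\hlf\bigr)=\tr(\vopt)$ using $\bfX_\star^{-1}=\bfU_\star\hlf(\bfU_\star\hlf\AAT\bfU_\star\hlf)\nhlf\bfU_\star\hlf$ and cyclicity of the trace. Hence the optimal value of \cref{eq:symmetrized_problem} is exactly $\tr(\vopt)$.

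\textbf{Main obstacle.} The crux is the co-isometry identity in the second step — recognizing that $\bfU_\star\nhlf\Hvopt$ has orthonormal rows and that conjugation by such a matrix commutes with the matrix square root — since this is exactly what converts the opaque $\diagpart$ of a $k\times k$ square root into the coordinatewise stationarity conditions of $g$. The remaining care is the bookkeeping for dual coordinates with $\vopt_i=0$, where the first-order condition degrades to an inequality: the fixed-point identity still holds trivially there, and that same inequality is precisely what is needed for feasibility of $\bfX_\star$.
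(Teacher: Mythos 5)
Your proposal is correct, and it reaches the conclusion by a route that differs from the paper's in three respects. First, for the gradient you invoke an envelope/stationarity argument ($\partial_i g = \partial L/\partial \dualv_i$ at the minimizer $\bfX(\dualv)$, justified because the explicit minimizer from \cref{thm:dual_fn} is smooth in $\dualv$ and satisfies $\nabla_\bfX L = 0$), whereas the paper differentiates the closed-form expression $2\tr\big((\bfU\hlf\AAT\bfU\hlf)\hlf\big)-\sum_\contrib\dualv_\contrib$ directly by matrix calculus; both yield $\contrib_i\tp\bfX(\dualv)\contrib_i-1$. Second, for the fixed point the paper routes through \cref{cor:reps_for_x}, i.e.\ the pseudoinverse representation $\bfX = \Hv\tppinv(\Hv\tp\AAT\Hv)\hlf\Hv\pinv$, and then sandwiches with $\Hv\tp,\Hv$; your co-isometry identity ($\bfQ=\bfU\nhlf\Hvopt$ has $\bfQ\bfQ\tp=\bfI$, and conjugation by $\bfQ$ commutes with the PSD square root by uniqueness) proves the same key relation $(\Hvopt\tp\AAT\Hvopt)\hlf = \Hvopt\tp\bfX_\star\Hvopt$ in a more self-contained way, and you additionally handle coordinates with $\vopt_i=0$ explicitly (complementarity gives only $\partial_i g\le 0$ there), a boundary case the paper's proof passes over by asserting the derivative vanishes at the maximizer. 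Third, for the optimal value the paper shows $g(\vopt)=\sum_\contrib\vopt_\contrib$ and then implicitly relies on strong duality to identify this with the primal optimum; you instead close the loop constructively, verifying that $\bfX_\star=\bfX(\vopt)$ is primal feasible (using exactly the KKT inequality at zero multipliers) and that $\tr(\AAT\bfX_\star^{-1})=\tr(\vopt)$, so the zero duality gap is proved rather than invoked. The paper's approach buys a reusable structural lemma (\cref{lem:Xsoln}/\cref{cor:reps_for_x}) that it also uses elsewhere; yours buys a shorter, more elementary derivation of the fixed point plus a rigorous treatment of the $\dualv\ge 0$ boundary and of strong duality. I see no gaps in your argument.
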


\mypar{Remark} In the single-participation case of~\citet{denisov22matfact}, $\Hv = \diag(\dualv)\hlf$, and \cref{eq:genfixedpoint} recovers the fixed point expression of that paper's Theorem 3.2. Our \cref{cor:gen_fixed_point} implies that the optimization methods presented in~\citet{denisov22matfact} may be applied, with suitable translation, to our setting; we use these methods to generate the optimal matrices studied empirically in \cref{sec:empirical-eval}.

\mypar{Additional constraints on $\bfX$} While \cref{eq:symmetrized_problem} gives a mathematical program for finding optimal matrices, two challenges arise: 1) for $\multiepoch$-participation, we have $|\Dcorners^1_\Pi| = \assumedsep 2^\maxpart$, equal to the number of Lagrange multipliers that must be used to represent the constraint in \cref{eq:symmetrized_problem}. Hence, solving the dual problem will be intractible for moderately large $\maxpart$.  2) Even if we could surmount this issue, we cannot in general use \cref{cor:matrix_to_vector_sens} to bound sensitivity under vector contributions. In order to resolve these issues, in our experiments we introduce an additional constraint $\bfX \ge 0$ element-wise, requiring only $\dimension^2$ additional Lagrange multipliers. This lets us only consider the positive vectors $\bfu \in \Dcorners^1_\Pi$ in the sensitivity constraint (following \cref{eq:bruteforce}), and hence we need only a total of $\dimension^2 + \assumedsep$ Lagrange multipliers in the dual optimization. Our empirical results indicate that $\bfX \ge 0$ holds for the optimal solution for the prefix-sum workload $\bfA$ (proving this conjecture is an interesting open problem). However, this conjecture does not hold in general, and in particular it fails for momentum workloads; see \cref{sec:nonneg}. Nevertheless, enforcing $\bfX \ge 0$ produces mechanisms that lead to state-of-the-art learning performance in all cases.  \cref{sec:nonneg} demonstrates these gaps on small examples and provides additional discussion.

\section{FFT-based Matrix Factorization}
\label{sec:fft}
Our work has two types of computation costs: \textbf{optimization costs} are those associated with optimizing and generating (or, computing) a mechanism whereas \textbf{noise generation costs} are those associated with using the mechanism to sample noise as part of a ML training algorithm. Once optimized, a single mechanism can be reused indefinitely to generate noise for other runs by simply resampling new noise and applying the same decoder.
The best known methods for computing the optimal factorizations scale as at least $\calO(\dimension^3)$~\citep{opt_convex_fact,denisov22matfact}. This optimization cost can become intractable when $\dimension$ grows too large. Thus, in this section we focus on reducing optimization computation at a small decrease in the achievable privacy-utility tradeoff.

A prime candidate for this goal is the Discrete Fourier Transform (DFT) because there are known algorithms both for nearly-optimal private convolutions~\citep{fawaz2013nearly} which are intimately related to the DFT,
and for efficient calculation of the DFT using the Fast Fourier Transform (FFT)~\citep{cooley1965algorithm,nussbaumer1981fast}. 
We present an FFT-based mechanism that reduces noise generation costs, prove rigorous DP guarantees for it, and show that these lead to near-optimal privacy-utility tradeoffs in the single-epoch setting. We provide two improvements over prior work~\cite{fawaz2013nearly}: 1) extending the result to the multi-epoch and multi-dimensional setting and 2) providing explicit non-asymptotic analysis of the algorithm's utility.%

Let $\bfA$ represent the (Toeplitz) matrix of all $1$s on or below the main diagonal and $0$s elsewhere; i.e., the prefix-sum matrix. We perform our analysis in the Fourier domain. To release $\bfA\obs$, we define a circulant matrix $\acirc\in\mathbb{R}^{2n\times 2n}$
with a corresponding input vector $\exd={\sf concat}(\obs,{\bf0}_n),{\bf0}_n\in\{0\}^n$ so that the first $n$ entries of $\acirc\exd$ are equal to $\bfA\obs$ (see \cref{app:ssec:fft-additional-definitions}). Thus, we study the DP release of $\acirc\exd$. 
Note, to be consistent with the literature on FFT, in this section, and in \cref{app:sfft-details}, we will index all the vectors and matrices with starting index of~\emph{zero}. 
 
\cref{thm:circDiag} of~\citet{gray2006toeplitz} (restated in \cref{app:ssec:fft-additional-definitions}) shows there exists a diagonal $\Sigma$ such that $\acirc=\fft\herm\Sigma\fft$ for diagonal $\Sigma$, where $\fft$ is the DFT matrix.  Then, $\acirc$ can then be factorized as  $\acirc=\bcirc\ccirc$ where $\bcirc=\fft\herm\Sigma^{1/2}$ and $\ccirc=\Sigma^{1/2}\fft$.

The (complex-valued) matrix mechanism specified by the factorization above (and presented as \cref{alg:fft} of \cref{app:ssec:comp-costs}) is empirically nearly optimal in the class of matrix-factorization-based mechanisms, as we show in \cref{app:sfft-details}. Though we prove a simple zCDP guarantee for \emph{any} participation schema having at most $\maxpart = \max_{\pi \in \Pi} |\pi|$ participations in \cref{thm:multip}, we can instead use our \cref{cor:matrix_to_vector_sens} when the participation schema is known in advance (as in our experiments with $\multiepoch$-participation).
\begin{thm}
Under $\maxpart$-participation, Algorithm~\ref{alg:fft} satisfies $(\maxpart^2\rho)$-zCDP.
\label{thm:multip}
\end{thm}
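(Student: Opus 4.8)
Algorithm~\ref{alg:fft} is the matrix mechanism \cref{eq:mat_mech} instantiated with the circulant factorization $\acirc = \bcirc\ccirc$, $\ccirc = \Sigma^{1/2}\fft$, $\bcirc = \fft\herm\Sigma^{1/2}$, and an isotropic (complex) Gaussian whose multiplier $\sigma$ is calibrated so that under \emph{single}-participation the released statistic is $\rho$-zCDP; equivalently $\rho = \gamma^2/(2\sigma^2)$ where $\gamma$ is the single-participation sensitivity of the encoder $\ccirc$. As in the Gaussian-mechanism analysis underlying \cref{thm:GaussianAdaptive} (in the form for the circulant/complex mechanism spelled out in \cref{app:sfft-details}), the zCDP parameter of the mechanism under a participation schema $\Pi$ is $\sens_{\deltaset^\mdim_\Pi}(\ccirc)^2/(2\sigma^2)$. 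Since $\sigma$ is fixed, the whole statement reduces to the sensitivity claim
\[
\sens_{\deltaset^\mdim_\Pi}(\ccirc) \;\le\; \maxpart\cdot \sens_{\deltaset^\mdim_{\sf single}}(\ccirc)
\qquad\text{whenever}\qquad \max_{\pi\in\Pi}|\pi|\le\maxpart ,
\]
after which the $\rho$ value degrades by the square of this factor, i.e.\ $\rho\mapsto\maxpart^2\rho$.

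To establish the claim I would use two elementary observations. First, \emph{all columns of $\ccirc = \Sigma^{1/2}\fft$ have the same Euclidean norm}: the $(j,\ell)$ entry of $\ccirc$ has modulus $\sqrt{|\Sigma_{jj}|}\,|\fft_{j,\ell}|$, and $|\fft_{j,\ell}|$ does not depend on $\ell$, so $\ltwo{\ccirc\idx{:}{\ell}}^2 = \sum_j |\Sigma_{jj}|\,|\fft_{j,\ell}|^2 =: \gamma^2$ for every $\ell$. Hence, by the trivial single-participation reduction (the sensitivity is the max $\ell_2$ norm over columns, which holds for every $\mdim\ge1$), $\sens_{\deltaset^\mdim_{\sf single}}(\ccirc)=\gamma$. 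Second, any neighboring difference allowed by a schema $\Pi$ with at most $\maxpart$ participations is a matrix $\bfG$ whose nonzero rows lie in some $\pi\in\Pi$, $|\pi|\le\maxpart$, with $\ltwo{\bfG\idx{i}{:}}\le1$ for $i\in\pi$; writing $\ccirc\bfG=\sum_{i\in\pi}\ccirc\idx{:}{i}\,\bfG\idx{i}{:}$ as a sum of at most $\maxpart$ rank-one terms and applying the triangle inequality for the Frobenius norm,
\[
\lfrob{\ccirc\bfG}\;\le\;\sum_{i\in\pi}\lfrob{\ccirc\idx{:}{i}\,\bfG\idx{i}{:}}\;=\;\sum_{i\in\pi}\ltwo{\ccirc\idx{:}{i}}\,\ltwo{\bfG\idx{i}{:}}\;\le\;\maxpart\,\gamma .
\]
Taking the supremum over $\bfG\in\deltaset^\mdim_\Pi$ gives $\sens_{\deltaset^\mdim_\Pi}(\ccirc)\le\maxpart\gamma$, the claim. (Equivalently, $\deltaset^\mdim_\Pi\subseteq\maxpart\cdot\deltaset^\mdim_{\sf single}$, since a difference supported on a size-$\le\maxpart$ pattern is $\maxpart$ times an average of at most $\maxpart$ single-row differences, each lying in the single-participation polytope.)

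The only genuinely delicate point, and the one I would spend the most care on, is the bookkeeping for the complex-valued mechanism: $\ccirc\exd$ and the injected noise live in $\mathbb{C}^{2\dimension}$ (resp.\ $\mathbb{C}^{2\dimension\times\mdim}$), the decoded output $\bcirc(\ccirc\exd+\text{noise})$ must be real, and one must verify that the zCDP analysis of the complex Gaussian mechanism—with its split of variance between real and imaginary parts—reproduces \emph{exactly} the single-participation $\rho$ used to normalize $\sigma$, with no stray constant factor; this is handled in \cref{app:sfft-details}, and modulo it the argument is the two-line sensitivity inequality above. I would also note in the write-up that this route deliberately uses the crude, schema-agnostic bound $\sens_{\deltaset^\mdim_\Pi}(\ccirc)\le\maxpart\cdot\sens_{\deltaset^\mdim_{\sf single}}(\ccirc)$ (hence the loose $\maxpart^2$), rather than the sharper reduction of \cref{cor:matrix_to_vector_sens} together with an exact $\maxpart$-participation computation, which would be available only when the specific schema is known in advance.
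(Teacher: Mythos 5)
Your proof is correct and reaches the same $\maxpart^2\rho$ bound, but it is packaged differently from the paper's. The paper's proof of \cref{thm:multip} is a one-line modification of the proof of \cref{thm:privFFT}: it keeps the per-Fourier-coordinate argument (add complex Gaussian noise to each coordinate of $\fft\exd$, bound the per-coordinate zCDP, and compose over the $2n$ coordinates), and simply replaces the single-participation bound $\linfty{\fft(\exd-\exd')}=\kappa/\sqrt{2n}$ with $\linfty{\fft(\exd-\exd')}\le \maxpart\kappa/\sqrt{2n}$, so each $\rho_i$ and hence the composed $\rho$ scales by $\maxpart^2$. You instead view \cref{alg:fft} globally as the matrix mechanism with encoder $\ccirc=\Sigma^{1/2}\fft$ and isotropic noise, note that all columns of $\ccirc$ have equal $\ell_2$ norm (each equal to the single-participation sensitivity $\gamma$ with $\gamma^2=\kappa^2\lone{\fexd}/(2n)$), and use the rank-one/triangle-inequality bound $\lfrob{\ccirc\bfG}\le\maxpart\gamma$ to get a factor-$\maxpart$ blowup in Frobenius sensitivity, hence $\maxpart^2$ in $\rho$ by the quadratic dependence of Gaussian-mechanism zCDP on sensitivity. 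The two arguments are algebraically equivalent (both are triangle-inequality bounds, and $\gamma^2/(2\sigma^2)=\rho$ matches the paper's calibration exactly), but your route buys two things: it avoids redoing the weighted per-coordinate composition by leaning on the single-participation calibration as a black box via a ratio argument that is robust to the real/complex variance convention you rightly flag as the delicate point, and it handles vector-valued contributions ($\mdim\ge 1$) directly through the Frobenius norm, whereas the paper's statement and proof are written for the scalar stream of \cref{alg:fft}. The paper's route, in turn, stays entirely inside the already-proved machinery of \cref{thm:privFFT} (including the adaptivity step via \citet{denisov22matfact}), which is why it can be stated in one line; you should make sure your write-up inherits that adaptivity argument in the same way, as you indicate.
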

\mypar{The optimal FFT decoder} Observe from \cref{thm:eval_sens_ub,thm:GaussianAdaptive} that the privacy guarantee of our multi-participation adaptive setting is independent of the choice of decoder, $\bfB$. Thus, instead of taking $\bcirc$ above, we take the optimal decoder using the Moore-Penrose pseudoinverse of a distributionally equivalent real-valued encoder, as discussed in \cref{app:two_dft_mechanisms}.
This optimization leads to significant improvements in the privacy-utility tradeoff (see \cref{fig:fig1-appendix} in \cref{app:cifar10_details}).

\mypar{Computation Costs}
Though we define the optimal FFT decoder as a pseudoinverse, observe that we do not need to optimize (or even compute) the decoder; by \cref{app:two_dft_mechanisms}, the problem is reduced to that of solving a highly structured linear system. However, we find that even suboptimal implementations using the pseudoinverse can still factorize a mechanism for $\dimension=10,000$ in 146 minutes on a V100 GPU, remaining well within practical requirements since we need only generate a mechanism once, before it can be reused indefinitely for training. In contrast, computing optimal matrices becomes practically difficult near $\dimension\approx10,000$, taking $24$ hours to compute an effective factorization for $\dimension=8,192$ using batch-priority cloud CPU resources. We remark that this regime of $\dimension$ is highly practical, e.g., standard federated benchmarks use $\dimension\approx 2000$~\citep{reddi20adaptive} and our central image classification uses $\dimension=2000$.
In terms of noise generation, the FFT mechanism shows preferable asymptotic properties, scaling as $\calO(\mdim\dimension\log^2{\dimension})$. However, even the optimal matrix mechanism with runtime scaling as $\calO(\mdim\dimension^2)$, noise generation on a GPU (even with significantly suboptimal implementation) takes negligible time. Further, noise from our mechanisms can be pre-generated if needed, at a tradeoff in the (typically cheaper) disk space. We discuss these tradeoffs in \cref{app:ssec:comp-costs}.

\section{Empirical Evaluation}\label{sec:empirical-eval}
We compare four main mechanism classes: tree-based mechanisms~\citep{honaker_trick}, including `tree-completion' of~\citet{kairouz2021practical}; our FFT mechanism; our optimal factorizations; and \DPSGD (incorrectly) assuming amplification via Poisson subsampling~\citep{dpsgd_2016}.

The manner in which baselines from the extant literature map to this setting can be found in \cref{app:ssec:baselines}.
Since the matrix mechanism reduces privacy cost of training to that of the release of a single Gaussian mechanism, accounting in our case becomes quite simple; see \cref{app:ssec:privacy-accounting}.

\subsection{Applying our Matrix Mechanisms to ML}\label{ssec:empirical-pragmatics}
Our work can be viewed as a drop-in replacement for noise sampling in DP optimizers through one additional step in training. Concretely, the three main steps to create DP-SGD are to 1) compute the per-example gradients, 2) clip each one to some chosen threshold $\clipnorm$, then compute the average as $\obs$, and 3) add noise $\bfz \sim \mathcal{N}(0, \sigma)$ to $\bfx$ where $\sigma,\clipnorm$ are calibrated for DP. In our work, we define an additional step 4) which uses \cref{eq:mat_mech} to generate the noise as $\bfB\bfz$ using $\obs$ from step 2) and $\bfz$ from step 3) with $\sigma=1$. 
Because $\bfA=\bfB\bfC$ is the chosen optimizer workload (e.g., residual prefix-sum or momentum-sum corresponding with SGD and SGD-M), we may ignore operations on $\obs$.

To generate the $\bfB,\bfC$, we must solve  \cref{eq:symmetrized_problem}.\footnote{Equivalently we must use \cref{alg:fft} of \cref{app:ssec:comp-costs} for the FFT mechanism or \cref{app:two_dft_mechanisms} for the Optimal FFT Decoder.} Observe that solving this linear task minimizes the noise required for a formal DP guarantee; instead, the ML optimizer generates gradients $\obs$ for the non-linear learning task.
We can instantiate the sensitivity constraint set $\Dcorners$ to encode the exact \multiepoch-participation schema used in training. However, an encoder $\bfC$ factorized for one value of $\maxpart$ can be applied for another, by simply computing its new sensitivity (due to our \cref{sec:sensitivity}), though this will alter its privacy-utility tradeoffs. For example, 
our \OneEpochSix in \cref{fig:stackoverflow_main} uses this to extend~\citet{denisov22matfact} to $\maxpart>1$.

When applying a mechanism that is determined independently of the number of participations (e.g., FFT or tree aggregation) or extending an optimal mechanism for $\maxpart$ participations to a larger number of participations, the sensitivity may scale poorly in $\maxpart$. In such cases, it may actually have lower sensitivity to reuse a single encoder multiple times over the course of $\dimension$ steps, and hence more favorable privacy-utility tradeoffs. We term this approach \textbf{encoder stamping}. This also provides a straightforward method for extending any factorization to handle more iterations without, e.g., re-optimizing \cref{eq:symmetrized_problem}. Combined with our \cref{sec:sensitivity}, stamping lets us apply mechanisms from~\citet{denisov22matfact}, e.g., \textbf{MF(}$\mathbf{k{=}1,n{=}1000}$\textbf{)}$\mathbf{\times2}$ in \cref{fig:cifar10_results}; mechanisms with stamping have ``$\times\stamps$" appended in this way. Discussion of stamping and its relation to existing literature are in \cref{app:ssec:repeated_mechs}.

\begin{figure*}[t!]
    \centering
    \vspace{-0.5em} %
    \includegraphics[height=2.15in]{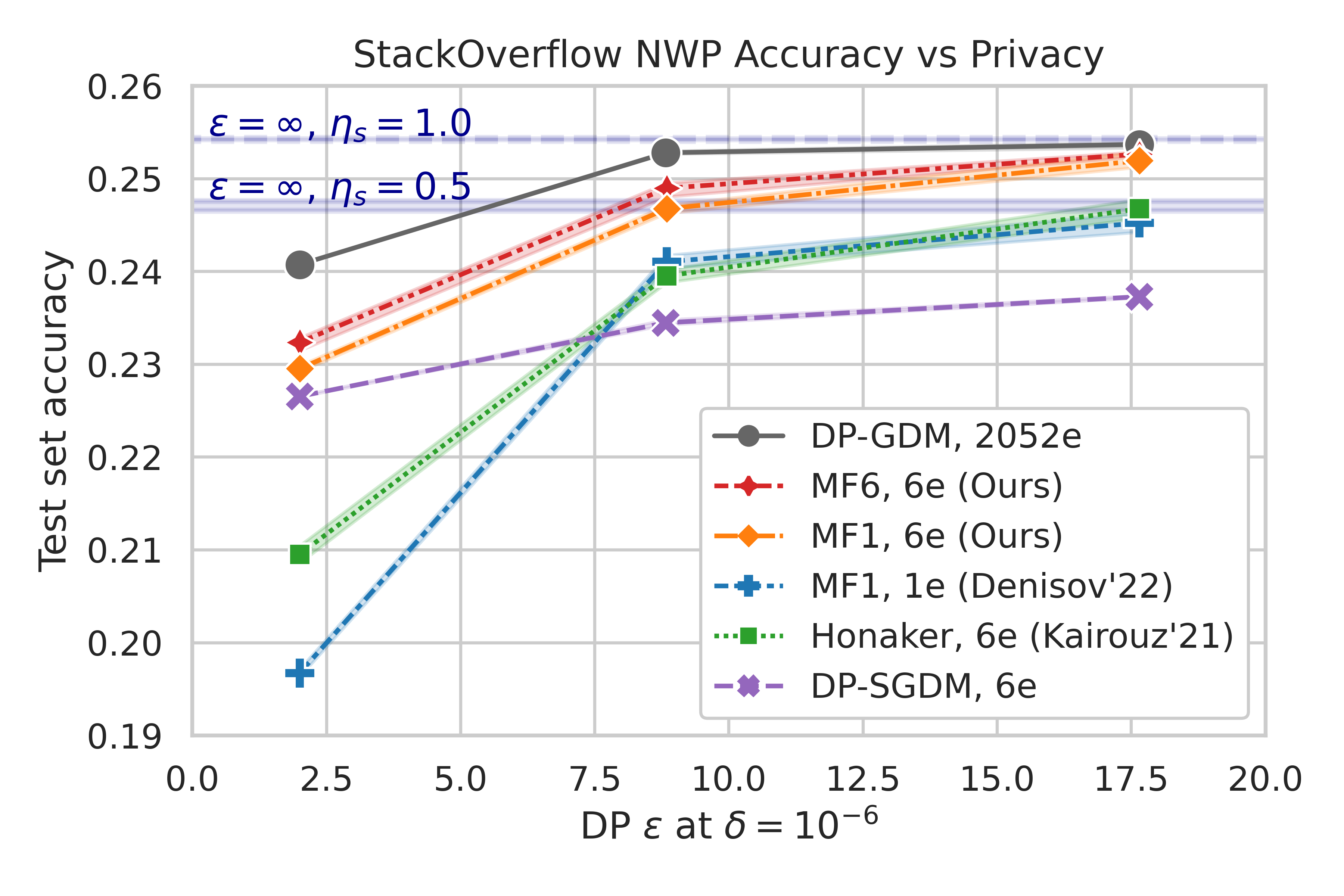}
    \includegraphics[height=2.15in]{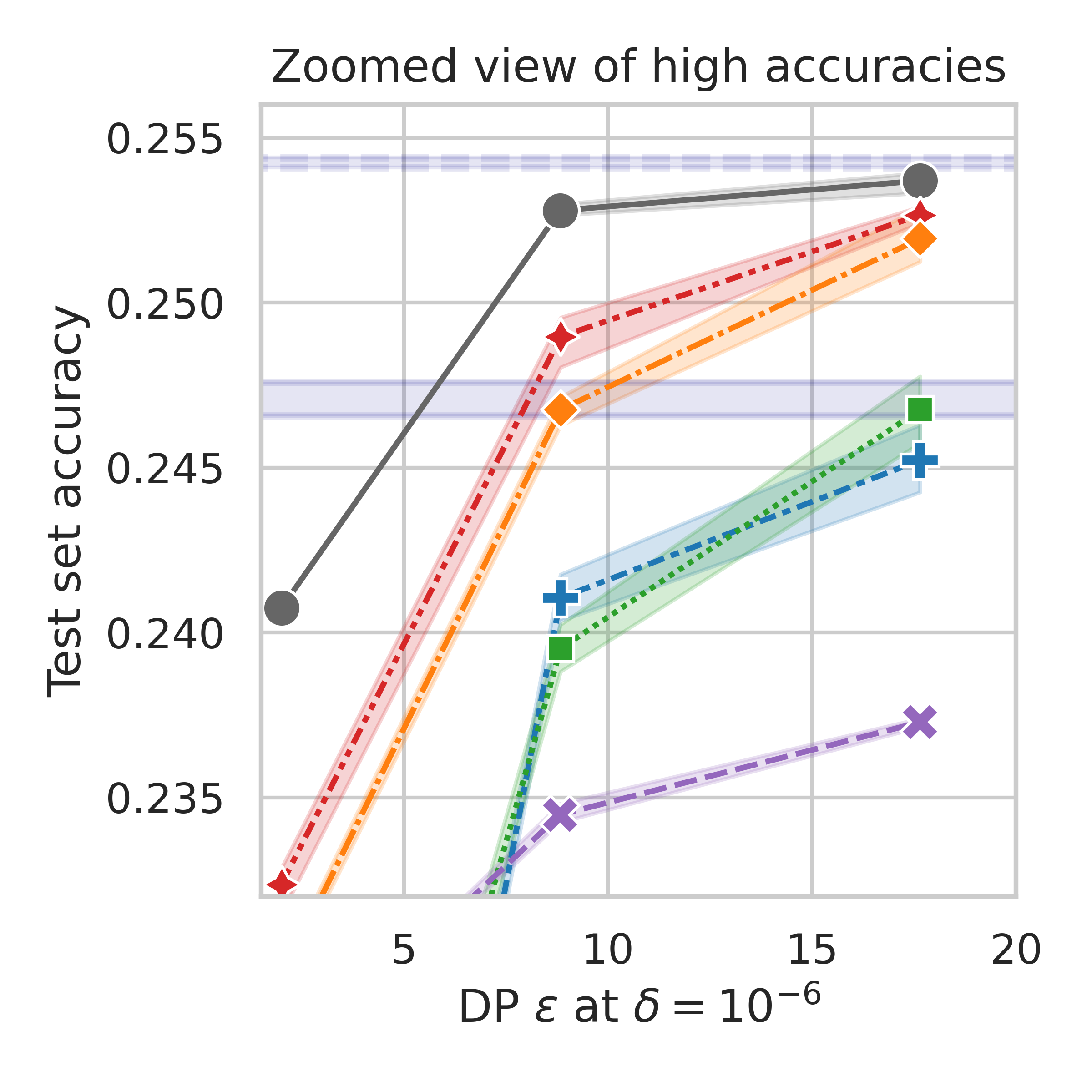}
    \vspace{-1.5em}
    \caption{\textbf{Our \SixEpoch achieves within $2\%$ relative difference in performance from the non-private baseline at} \boldmath$\varepsilon\!=\!8.8,\delta\!=\!10^{-6}$.  Our \OneEpochSix (which is not specifically optimized for the participation schema) still outperforms all baselines from the literature. Select runs were replicated multiple times, with bootstrap $95\%$ intervals shown. \DPGD is the extreme case of every-round participation ($342\times$ more computationally expensive than 6 epochs of training), and hence generally infeasible. The server learning rate $\eta_s$ was optimized over 0.5 and 1.0, and additionally 0.25 for $\varepsilon = 2$. The blue bands give the non-private baseline accuracy for 6 epochs of training with $\eta_s = 1.0$ and $0.5$.}
    \label{fig:stackoverflow_main}
\end{figure*}

\subsection{Example-level DP for an Image Classification Task}\label{ssec:central-experiments}
We train image classification models on CIFAR10~\citep{cifar10} which has become a de facto standard for comparing DP ML algorithms---sufficiently easy for existing DP algorithms to achieve nontrivial accuracy, but not so simple so as to be unable differentiating approaches. Details on our full setup are in \cref{app:cifar10_details}; generally, they match those of~\citet{kairouz2021practical}. Notably, we make improvements on their Online Honaker-based approach by not just completing the tree with virtual steps, but also zeroing out noise from virtual steps as detailed in \cref{app:ssec:tree_completion}. We find this led to significant improvements around a few percentage points. For all matrix mechanisms except the~\citet{denisov22matfact} baseline and our Optimal MF$(\maxpart=20,\dimension=2000)\times1$, we optimize over the stamps $\stamps$ by the losses in \cref{app:ssec:losses}, which we find well match the ordering in ML accuracy.

In contrast to \cref{ssec:federated-experiments}, in this section we only compare factorizations of the prefix-sum matrix; we do not incorporate momentum or cooldown directly into the mechanisms, though we use both momentum and cooldown as postprocessing for the matrix-factorization-based mechanisms and report results for the best settings we find (with both). For \DPSGD, we report results for the best setting (no momentum, with cooldown). Details are in \cref{app:cifar10_details}.
\mypar{Main results (\cref{fig:cifar10_results})} First, we see that the optimal factorization for the target $\multiepoch = (20, 100)$ setting outperforms all other mechanisms across (nearly) all privacy levels, only slightly underperforming \DPSGD with amplification at $\varepsilon=2$. To the best of our knowledge, this represents the first empirical demonstration of an ML algorithm which is competitive with \DPSGD into this high-privacy regime, \emph{without any amplification by sampling}. The FFT (optimal decoder) mechanism outperforms all baselines, again well toward the high-privacy regime at $\varepsilon\approx 4$. Though at a worse privacy-utility tradeoff compared with our multi-epoch optimal matrices, this mechanism shows promise for outperforming prior work when $\dimension$ grows too large for generating optimal factorizations.

\mypar{Discussion of results}
A major reason we outperform DP-SGD is because we take a fundamentally different approach that enables us to account and optimize for the specifics of the entire training procedure in a single shot. That is, DP-SGD views itself as the repeated independent application of a single mechanism on each iteration, using techniques like strong composition to obtain a finite DP guarantee across 
 independent and arbitrary queries (gradient steps). Our matrix-factorization based ML training procedures, by contrast, consider the entire training procedure to be the application of a single mechanism, parameterized by factorizations of the matrix $\bfA$; we optimize over this space of mechanisms. We believe this is a more powerful class of mechanisms because it essentially enables us to (anti-) correlate noise across iterations to achieve minimal total squared error (DP-SGD can only use independent noise).

\subsection{User-level DP for a Next Word Prediction Task}\label{ssec:federated-experiments}
User-level DP for language models is an important real-word task~\citep{mcmahan2022dpftrlblog}. There is much history in the DP language modelling literature which we briefly describe in \cref{app:ssec:sonwp-history}. We use the standard benchmark: StackOverflow next-word prediction~\citep{reddi20adaptive}. We use the same empirical setup as~\citet{kairouz2021practical} and~\citet{denisov22matfact} except notable changes below. Details are in \cref{app:ssec:sonwp-setup}.
\mypar{Notable changes from prior work} We use a much higher $1000$ clients per round ($\approx 100$ in prior work)---enabled mainly by our multi-epoch factorizations. We also zero out large updates with $\ell_\infty$ norm greater than $100$ (rather than scaling down to our clipping norm of $\clipnorm = 1$) as we found this improved the stability of noisy training (see \cref{tab:zeroing} in \cref{sec:zeroing_details}). This may have enabled more consistent success of a higher server learning rate $ \eta_s = 1.0$.

We conducted initial simulations which verified that two observations from~\citet{denisov22matfact} for the single-epoch setting extended to our multi-epoch and large-batch (1000 clients/round instead of 167) setting. First, linear server learning rate cooldown from $1\times$ to $0.05\times$ over the final 512 rounds offered a small improvement over constant rates (more so in higher-privacy regimes). Second, optimizing and factorizing with momentum and cooldown encoded in the query matrix $\bfA$, rather than applying both as post-processing, consistently offers a small benefit. See \cref{app:ssec:sonwp-setup} for details. Thus, we fix these preferable design choices here and compare the following algorithms.

\vspace{-0.5em} %
\mypar{Algorithms} All algorithms train for $6$ epochs $2052$ rounds, and a large-batch 1000 clients/round (better for DP training) unless otherwise noted. \Honaker is the DP-FTRL algorithm of~\citet{kairouz2021practical}, trained for 2048 rounds (a power of 2). \OneEpochOne \citep{denisov22matfact} is the state-of-the-art for single-epoch training, using $167$ clients/round and $\maxpart{=}1$. \OneEpochSix uses our \cref{eq:bruteforce} to take the (non-negative) $(\maxpart{=}1)$-optimized matrix of the previous approach but compute the sensitivity under \maxpartsep{6}{342}-participation, allowing us to train for 6 epochs (2048 rounds) with large batches. \SixEpoch is our approach directly optimizing the matrix factorization for \maxpartsep{6}{342}-participation via \cref{eq:symmetrized_problem}. \DPSGDSix is the \texttt{DP-FedAvg} algorithm of~\citet{mcmahan2017learning}, to 2052 rounds, and accounted with Poisson sampling---incorrect, though standard, as noted in \cref{sec:intro}. \DPGD estimates the \emph{infeasible-to-actually-run} benchmark of full-batch gradient descent for 2052 rounds and 2052 epochs, or $342\times$ the computation cost of our 6 epoch runs. We compute the exact privacy cost, and estimate the accuracy from experiments with 1000 clients per round, following the methodology of \citet{kairouz2021practical}. This is essentially an upper bound on the best privacy-accuracy tradeoffs with unlimited computational resources.

\vspace{-0.5em}
\newcommand{\epsdelta}[1]{$(#1, 10^{-6})$-DP}
\mypar{Main results (\cref{fig:stackoverflow_main})}
We find that our \SixEpoch, is the best feasible private result at $25.25\%$ accuracy and \epsdelta{17.7}. This exceeds the non-private baseline of $25.2\%$ accuracy reported by~\citet{kairouz2021practical}, is within the margins of small hyperparameter tuning differences of our improved non-private baselines (25.43\%) and private full-batch gradient descent ($25.31\%$).
At \epsdelta{8.84}, \SixEpoch achieves $24.94\%$ accuracy, substantially improving over the previous state-of-the-art at this privacy level given by \OneEpochOne at $24.11\%$ accuracy, and considerably improves on both the accuracy and privacy of \Honaker ($24.86\%$ accuracy at \epsdelta{21.0}). In fact, we achieve better accuracy at $\varepsilon=8.84$ than prior methods achieve at $\varepsilon=17.7$. \DPSGDSix is outperformed by \SixEpoch and \OneEpochSix across all $\varepsilon$ values evaluated. \cref{fig:stackoverflow_per_lr} in \cref{sec:addl_stackoverflow} shows results for each learning rate separately, with numeric results in \cref{tab:mf,tab:sgd}.

\vspace{-1em}
\section{Discussion and Conclusions}
Our work significantly improves the privacy-utility tradeoffs in DP ML.
Indeed, our work outperforms the state-of-the-art (and, DPSGD with amplification) by $\approx 5$ percentage points across many privacy levels---and as low as $\varepsilon\approx 2$---with practically implementable assumptions. 

We compare our mechanisms with \DPSGD on a level-ground using well-performing but not state-of-the-art models and training protocols---e.g., very large models, augmentations before clipping~\citep{de2022unlocking}, public data usage, and large batch sizes can all aid training. Many, if not all, of these techniques are applicable to our setting and can thus be used with our mechanisms to realize additional absolute performance gains, again, likely beyond the performances achieved by \DPSGD. \cref{app:sec:discussion-conclusion} contains limitations and ethical considerations.

 \paragraph{Acknowledgements}
The authors would like to thank Adam Smith for helpful conversations in the formulation of the FFT approach. Shuang Song helped ensure correctness and comparability with~\citet{kairouz2021practical} on CIFAR. Galen Andrew and Sewoong Oh contributed their expertise in matrix calculus, and Ryan McKenna provided helpful feedback on the manuscript as well as contributing the concise counter-example presented in \cref{sec:counterexample}.

\clearpage
\bibliography{references}

\begin{thebibliography}{41}
\providecommand{\natexlab}[1]{#1}
\providecommand{\url}[1]{\texttt{#1}}
\expandafter\ifx\csname urlstyle\endcsname\relax
  \providecommand{\doi}[1]{doi: #1}\else
  \providecommand{\doi}{doi: \begingroup \urlstyle{rm}\Url}\fi

\bibitem[Abadi et~al.(2016{\natexlab{a}})Abadi, Chu, Goodfellow, McMahan,
  Mironov, Talwar, and Zhang]{abadi2016deep}
Abadi, M., Chu, A., Goodfellow, I., McMahan, H.~B., Mironov, I., Talwar, K.,
  and Zhang, L.
\newblock Deep learning with differential privacy.
\newblock In \emph{Proceedings of the 2016 ACM SIGSAC conference on computer
  and communications security}, pp.\  308--318, 2016{\natexlab{a}}.

\bibitem[Abadi et~al.(2016{\natexlab{b}})Abadi, Chu, Goodfellow, McMahan,
  Mironov, Talwar, and Zhang]{dpsgd_2016}
Abadi, M., Chu, A., Goodfellow, I., McMahan, H.~B., Mironov, I., Talwar, K.,
  and Zhang, L.
\newblock Deep learning with differential privacy.
\newblock \emph{Proceedings of the 2016 ACM SIGSAC Conference on Computer and
  Communications Security}, Oct 2016{\natexlab{b}}.
\newblock \doi{10.1145/2976749.2978318}.
\newblock URL \url{http://dx.doi.org/10.1145/2976749.2978318}.

\bibitem[Bassily et~al.(2014)Bassily, Smith, and Thakurta]{BST14}
Bassily, R., Smith, A., and Thakurta, A.
\newblock Private empirical risk minimization: Efficient algorithms and tight
  error bounds.
\newblock In \emph{Proc. of the 2014 IEEE 55th Annual Symp. on Foundations of
  Computer Science (FOCS)}, pp.\  464--473, 2014.

\bibitem[Bun \& Steinke(2016)Bun and Steinke]{bun2016concentrated}
Bun, M. and Steinke, T.
\newblock Concentrated differential privacy: Simplifications, extensions, and
  lower bounds.
\newblock In \emph{Theory of Cryptography Conference}, pp.\  635--658.
  Springer, 2016.

\bibitem[Campbell \& Meyer(1979)Campbell and Meyer]{nla.cat-vn1139431}
Campbell, S.~L. and Meyer, C.~D.
\newblock \emph{Generalized inverses of linear transformations / S. L.
  Campbell, C. D. Meyer}.
\newblock Pitman London ; San Francisco, 1979.
\newblock ISBN 0273084224.

\bibitem[Cao et~al.(2022)Cao, McLaren, and Plosker]{cao2022centrosymmetric}
Cao, L., McLaren, D., and Plosker, S.
\newblock Centrosymmetric stochastic matrices.
\newblock \emph{Linear and Multilinear Algebra}, 70\penalty0 (3):\penalty0
  449--464, 2022.

\bibitem[Carlini et~al.(2019)Carlini, Liu, Erlingsson, Kos, and
  Song]{carlini19secret}
Carlini, N., Liu, C., Erlingsson, U., Kos, J., and Song, D.
\newblock The secret sharer: Evaluating and testing unintended memorization in
  neural networks.
\newblock In \emph{Proceedings of the 28th USENIX Conference on Security
  Symposium}, SEC'19, pp.\  267–284, USA, 2019. USENIX Association.
\newblock ISBN 9781939133069.

\bibitem[Carlini et~al.(2021)Carlini, Tram{\`e}r, Wallace, Jagielski,
  Herbert-Voss, Lee, Roberts, Brown, Song, Erlingsson, Oprea, and
  Raffel]{carlini21extracting}
Carlini, N., Tram{\`e}r, F., Wallace, E., Jagielski, M., Herbert-Voss, A., Lee,
  K., Roberts, A., Brown, T., Song, D., Erlingsson, {\'U}., Oprea, A., and
  Raffel, C.
\newblock Extracting training data from large language models.
\newblock In \emph{30th USENIX Security Symposium (USENIX Security 21)}, pp.\
  2633--2650. USENIX Association, August 2021.
\newblock ISBN 978-1-939133-24-3.
\newblock URL
  \url{https://www.usenix.org/conference/usenixsecurity21/presentation/carlini-extracting}.

\bibitem[Chan et~al.(2011)Chan, Shi, and Song]{CSS11-continual}
Chan, T.-H.~H., Shi, E., and Song, D.
\newblock Private and continual release of statistics.
\newblock \emph{{ACM} Trans. on Information Systems Security}, 14\penalty0
  (3):\penalty0 26:1--26:24, November 2011.

\bibitem[Cooley \& Tukey(1965)Cooley and Tukey]{cooley1965algorithm}
Cooley, J.~W. and Tukey, J.~W.
\newblock An algorithm for the machine calculation of complex fourier series.
\newblock \emph{Mathematics of computation}, 19\penalty0 (90):\penalty0
  297--301, 1965.

\bibitem[De et~al.(2022)De, Berrada, Hayes, Smith, and Balle]{de2022unlocking}
De, S., Berrada, L., Hayes, J., Smith, S.~L., and Balle, B.
\newblock Unlocking high-accuracy differentially private image classification
  through scale.
\newblock \emph{arXiv preprint arXiv:2204.13650}, 2022.

\bibitem[{de Hoog}(1987)]{invert_toeplitz}
{de Hoog}, F.
\newblock A new algorithm for solving toeplitz systems of equations.
\newblock \emph{Linear Algebra and its Applications}, 88-89:\penalty0 123--138,
  1987.
\newblock ISSN 0024-3795.
\newblock \doi{https://doi.org/10.1016/0024-3795(87)90107-8}.
\newblock URL
  \url{https://www.sciencedirect.com/science/article/pii/0024379587901078}.

\bibitem[Denisov(2023)]{denisov_grothendieck}
Denisov, S.
\newblock private communication, 2023.

\bibitem[Denisov et~al.(2022)Denisov, McMahan, Rush, Smith, and
  Thakurta]{denisov22matfact}
Denisov, S., McMahan, B., Rush, K., Smith, A., and Thakurta, A.~G.
\newblock Improved differential privacy for sgd via optimal private linear
  operators on adaptive streams, 2022.
\newblock URL \url{https://arxiv.org/abs/2202.08312}.

\bibitem[Dwork et~al.(2010)Dwork, Naor, Pitassi, and Rothblum]{Dwork-continual}
Dwork, C., Naor, M., Pitassi, T., and Rothblum, G.~N.
\newblock Differential privacy under continual observation.
\newblock In \emph{Proc. of the Forty-Second {ACM} Symp. on Theory of Computing
  ({STOC}'10)}, pp.\  715--724, 2010.

\bibitem[Edmonds et~al.(2020)Edmonds, Nikolov, and
  Ullman]{edmonds_nikolov_ullman}
Edmonds, A., Nikolov, A., and Ullman, J.
\newblock \emph{The Power of Factorization Mechanisms in Local and Central
  Differential Privacy}, pp.\  425–438.
\newblock Association for Computing Machinery, New York, NY, USA, 2020.
\newblock ISBN 9781450369794.
\newblock URL \url{https://doi.org/10.1145/3357713.3384297}.

\bibitem[Erlingsson et~al.(2019)Erlingsson, Feldman, Mironov, Raghunathan,
  Talwar, and Thakurta]{erlingsson2019amplification}
Erlingsson, {\'U}., Feldman, V., Mironov, I., Raghunathan, A., Talwar, K., and
  Thakurta, A.
\newblock Amplification by shuffling: From local to central differential
  privacy via anonymity.
\newblock In \emph{Proceedings of the Thirtieth Annual ACM-SIAM Symposium on
  Discrete Algorithms}, pp.\  2468--2479. SIAM, 2019.

\bibitem[Fawaz et~al.(2013)Fawaz, Muthukrishnan, and Nikolov]{fawaz2013nearly}
Fawaz, N., Muthukrishnan, S., and Nikolov, A.
\newblock Nearly optimal private convolution.
\newblock In \emph{European Symposium on Algorithms}, pp.\  445--456. Springer,
  2013.

\bibitem[Feldman et~al.(2022)Feldman, McMillan, and Talwar]{feldman2022hiding}
Feldman, V., McMillan, A., and Talwar, K.
\newblock Hiding among the clones: A simple and nearly optimal analysis of
  privacy amplification by shuffling.
\newblock In \emph{2021 IEEE 62nd Annual Symposium on Foundations of Computer
  Science (FOCS)}, pp.\  954--964. IEEE, 2022.

\bibitem[Fichtenberger et~al.(2022)Fichtenberger, Henzinger, and
  Upadhyay]{fichtenberger_constant22}
Fichtenberger, H., Henzinger, M., and Upadhyay, J.
\newblock Constant matters: Fine-grained complexity of differentially private
  continual observation, 2022.
\newblock URL \url{https://arxiv.org/abs/2202.11205}.

\bibitem[Gray(2006)]{gray2006toeplitz}
Gray, R.~M.
\newblock Toeplitz and circulant matrices: A review.
\newblock 2006.

\bibitem[Grothendieck(1956)]{grothendieck1956resume}
Grothendieck, A.
\newblock \emph{Resume de la theorie metrique des produits tensoriels
  topologiques}.
\newblock {\'e}diteur non identifi{\'e}, 1956.
\newblock URL \url{https://books.google.com/books?id=N7COGwAACAAJ}.

\bibitem[Hardt \& Talwar(2010)Hardt and Talwar]{HT10}
Hardt, M. and Talwar, K.
\newblock On the geometry of differential privacy.
\newblock In \emph{STOC}, 2010.

\bibitem[Henzinger et~al.(2022)Henzinger, Upadhyay, and Upadhyay]{HUU22}
Henzinger, M., Upadhyay, J., and Upadhyay, S.
\newblock Almost tight error bounds on differentially private continual
  counting.
\newblock \emph{Personal communication}, 2022.

\bibitem[Honaker(2015)]{honaker_trick}
Honaker, J.
\newblock Efficient use of differentially private binary trees.
\newblock \emph{Theory and Practice of Differential Privacy (TPDP 2015),
  London, UK}, 2015.

\bibitem[Kairouz et~al.(2016)Kairouz, Oh, and Viswanath]{Kairouz-extremal}
Kairouz, P., Oh, S., and Viswanath, P.
\newblock Extremal mechanisms for local differential privacy.
\newblock \emph{Journal of Machine Learning Research}, 17\penalty0
  (17):\penalty0 1--51, 2016.
\newblock URL \url{http://jmlr.org/papers/v17/15-135.html}.

\bibitem[Kairouz et~al.(2021)Kairouz, McMahan, Song, Thakkar, Thakurta, and
  Xu]{kairouz2021practical}
Kairouz, P., McMahan, B., Song, S., Thakkar, O., Thakurta, A., and Xu, Z.
\newblock Practical and private (deep) learning without sampling or shuffling.
\newblock In \emph{ICML}, 2021.

\bibitem[Khot \& Naor(2011)Khot and Naor]{khot2011grothendiecktype}
Khot, S. and Naor, A.
\newblock Grothendieck-type inequalities in combinatorial optimization, 2011.

\bibitem[Krizhevsky(2009)]{cifar10}
Krizhevsky, A.
\newblock Learning multiple layers of features from tiny images, 2009.

\bibitem[Li et~al.(2015)Li, Miklau, Hay, Mcgregor, and Rastogi]{Li2015TheMM}
Li, C., Miklau, G., Hay, M., Mcgregor, A., and Rastogi, V.
\newblock The matrix mechanism: optimizing linear counting queries under
  differential privacy.
\newblock \emph{The VLDB Journal}, 24:\penalty0 757--781, 2015.

\bibitem[McKenna et~al.(2018)McKenna, Miklau, Hay, and Machanavajjhala]{hdmm}
McKenna, R., Miklau, G., Hay, M., and Machanavajjhala, A.
\newblock Optimizing error of high-dimensional statistical queries under
  differential privacy.
\newblock \emph{Proc. VLDB Endow.}, 11\penalty0 (10):\penalty0 1206–1219, jun
  2018.
\newblock ISSN 2150-8097.
\newblock \doi{10.14778/3231751.3231769}.
\newblock URL \url{https://doi.org/10.14778/3231751.3231769}.

\bibitem[McMahan et~al.(2018)McMahan, Ramage, Talwar, and
  Zhang]{mcmahan2017learning}
McMahan, B., Ramage, D., Talwar, K., and Zhang, L.
\newblock Learning differentially private recurrent language models.
\newblock In \emph{International Conference on Learning Representations
  (ICLR)}, 2018.
\newblock URL \url{https://openreview.net/pdf?id=BJ0hF1Z0b}.

\bibitem[McMahan \& Thakurta(2022)McMahan and Thakurta]{mcmahan2022dpftrlblog}
McMahan, H.~B. and Thakurta, A.
\newblock Federated learning with formal differential privacy guarantees.
\newblock Google AI Blog, 2022.
\newblock URL
  \url{https://ai.googleblog.com/2022/02/federated-learning-with-formal.html}.

\bibitem[Nussbaumer(1981)]{nussbaumer1981fast}
Nussbaumer, H.~J.
\newblock The fast fourier transform.
\newblock In \emph{Fast Fourier Transform and Convolution Algorithms}, pp.\
  80--111. Springer, 1981.

\bibitem[Reddi et~al.(2020)Reddi, Charles, Zaheer, Garrett, Rush,
  Kone{\v{c}}n{\'y}, Kumar, and McMahan]{reddi20adaptive}
Reddi, S.~J., Charles, Z., Zaheer, M., Garrett, Z., Rush, K.,
  Kone{\v{c}}n{\'y}, J., Kumar, S., and McMahan, H.~B.
\newblock Adaptive federated optimization.
\newblock \emph{CoRR}, abs/2003.00295, 2020.
\newblock URL \url{https://arxiv.org/abs/2003.00295}.

\bibitem[Song \& Shmatikov(2019)Song and Shmatikov]{song2019auditing}
Song, C. and Shmatikov, V.
\newblock Auditing data provenance in text-generation models.
\newblock In \emph{Proceedings of the 25th ACM SIGKDD International Conference
  on Knowledge Discovery \& Data Mining}, pp.\  196--206, 2019.

\bibitem[Song et~al.(2013)Song, Chaudhuri, and Sarwate]{song2013stochastic}
Song, S., Chaudhuri, K., and Sarwate, A.~D.
\newblock Stochastic gradient descent with differentially private updates.
\newblock In \emph{2013 IEEE Global Conference on Signal and Information
  Processing}, pp.\  245--248. IEEE, 2013.

\bibitem[Tropp(2004)]{tropp_thesis}
Tropp, J.
\newblock \emph{Topics in Sparse Approximation}.
\newblock PhD thesis, The University of Texas at Austin, 2004.

\bibitem[Wang et~al.(2019)Wang, Balle, and Kasiviswanathan]{WangBK19}
Wang, Y., Balle, B., and Kasiviswanathan, S.~P.
\newblock Subsampled renyi differential privacy and analytical moments
  accountant.
\newblock In \emph{The 22nd International Conference on Artificial Intelligence
  and Statistics, {AISTATS} 2019, 16-18 April 2019, Naha, Okinawa, Japan}, pp.\
   1226--1235, 2019.

\bibitem[Yuan et~al.(2016)Yuan, Yang, Zhang, and Hao]{opt_convex_fact}
Yuan, G., Yang, Y., Zhang, Z., and Hao, Z.
\newblock Optimal linear aggregate query processing under approximate
  differential privacy.
\newblock \emph{CoRR}, abs/1602.04302, 2016.
\newblock URL \url{http://arxiv.org/abs/1602.04302}.

\bibitem[Zhu \& Wang(2019)Zhu and Wang]{zhu2019poission}
Zhu, Y. and Wang, Y.-X.
\newblock Poisson subsampled r{\'e}nyi differential privacy.
\newblock In \emph{International Conference on Machine Learning}, pp.\
  7634--7642. PMLR, 2019.

\end{thebibliography}
\bibliographystyle{icml2023}

\clearpage
\appendix
\onecolumn

\section{Summary of notation and terminology}\label{sec:notation}
The following table summarizes the notation used throughout the paper:

\begin{table}[H]
\renewcommand{\arraystretch}{1.4}%
\begin{tabular}{ll}
\toprule
\dimension & Number of steps of the streaming linear query (SGD steps or FL rounds) \\
\mdim & Dimension of per-step user contributions. \\
$\obs_i \in \R$ or $\R^\mdim$ & Sum of per-example gradients (or per-user model updates) on step $i$. \\
$\obs \in \R^\datadim$ & Stream of inputs $\obs_i$, equiv. matrix with rows $\obs_i$ (so  $\obs_i = \obs\idx{i}{:}$). \\
$\clipnorm$ & Clipping norm that limits the size of per-example contributions to $\obs_i$.  \\
$\pi$ & Participation pattern, the set of steps that an example could participation in.\\
$\Pi$ & Participation schema, set of sets of steps (set of all $\pi$) an example could participate in. \\
$\deltaset$ & $=\left\{ \obs - \obsp \mid (\obs, \obsp) \in \bfN \right\}$, the set of deltas between neighboring input streams $\obs,\obsp$. \\
$\Dcorners$ & Corners of $\deltaset$ when assumed to be a polytope, $\deltaset = \conv(\Dcorners)$. \\
\multiepoch-participation & participation schema $\Pi$ with at most $\maxpart$ participations, separated by exactly $\assumedsep$. \\
$\bfA \in \R^\dimdim$  & Lower-triangular linear query matrix to be factorized as $\bfA = \bfB \bfC$. \\
$\bfT \in \R^\dimdim$  & $\bfT := \AAT$ for convenience. \\
$\lambda_{\min}(\bfA)$, $\lambda_{\max}(\bfA)$. & Smallest and largest eigenvalues of real matrix $\bfA$. \\
$\bfA\herm$  & Conjugate (Hermitian) transpose of $\bfA$. \\
$\bfX^\star$  & A matrix $\bfX$ that is ``optimal'' in a context-dependent sense. \\
$\bfA^\dagger$  & Moore-Penrose pseudoinverse of matrix $\bfA$. \\
$\bfA\idx{i}{j}$ & The $(i, j)^{\text{th}}$ entry of matrix $\bfA$. \\
$\bfA\idx{i}{:}$ and $\bfA\idx{:}{j}$  & The $i^{\text{th}}$ row and $j^{\text{th}}$ column. \\
$\stamps$ &  Number of encoder $\bfC$ replications (stamps) into a block-diagonal matrix.\\
$\conv\left(S\right)$ & Convex hull of the set $S$.\\
$[n]$ & $=\{1, \dots, n\}$ \\
$\lfrob{\bfX}$  & The Frobenius norm of a matrix $\bfX$. \\
\bottomrule
\end{tabular}
\end{table}
We utilize terminology from federated learning as well as standard centralized training, which generally map as follows:
\begin{table}[H]
\renewcommand{\arraystretch}{1.4}%
\begin{tabular}{ll}
\toprule
\textbf{Centralized} & \textbf{Federated}  \\

\midrule
example     & user or client        \\
batch size  & clients-per-round    \\
DP-SGD      & DP-FedAvg             \\
step or iteration   & (communication) round \\
gradient            & model update           \\
\bottomrule

\end{tabular}
\end{table}

\section{Generalized sensitivity as an operator norm}\label{sec:opnorm}
\cref{eq:sensopnorm} shows that our generalized notion of sensitivity can be viewed directly as a particular operator norm. To see this, view $\bfC: \bfV_1 \rightarrow \bfV_2$ as a linear operator from vector space $\bfV_1$ to $\bfV_2$. Then with $\normin{\cdot}$ the vector norm on $\bfV_1$ and similarly for $\bfV_2$, an operator norm is defined as
\[
\norm{\bfC}_{(1),(2)} = \max_{u \in \bfV_1: \normin{u} \le 1} \normout{\bfC \contrib}.
\]
Because we use the Gaussian mechanism and thus are interested in the $\ell_2$ sensitivity, $\normout{\cdot} = \norm{\cdot}_2$, and we define the norm
\[
\normin{\contrib} = \norm{\contrib}_\deltaset := \inf \left\{ r > 0 : \frac{\contrib}{r} \in \deltaset \right\}, 
\]
the vector norm induced by $\deltaset$ (the fact that $\deltaset$ is a closed, convex, symmetric set ensures this is a norm). Note $\contrib \in \deltaset\ \Leftrightarrow\ \norm{\contrib}_\deltaset \le 1$. Thus, we have
\begin{equation}
\sensd(\bfC) = \norm{\bfC}_{\deltaset, 2}.\label{eq:sensX}
\end{equation}

\section{The FFT Mechanisms and Reducing Computation}\label{app:ssec:comp-costs}
\begin{algorithm}[h]
\caption{DP-Prefix Sum Computation via FFT (with $\mdim = 1$)}

\begin{algorithmic}%
\STATE \textbf{Inputs:} Data vector $\obs\in\mathbb{R}^{n}$ (with each $| \obs_i | \leq \clipnorm$) and zCDP parameter $\rho$.
\STATE $\fexd\in\mathbb{C}^{2n} \leftarrow $ the DFT of $\bfv$ (defined in \cref{eq:circRep}). Let $\fexd_{[:n]}$ be the first $n$ coordinates.
\STATE $\fft\leftarrow$ DFT matrix in $2n$-dimensions, where the $k$-th row of $\fft$ is given by 
\[
\fft\idx{k}{:} = \frac{1}{\sqrt{2n}}\left[
  \exp\left(-\frac{j2\pi ka}{2n}\right):a\in\{0,\ldots,2n-1\}\right].
\] 
\STATE $(\Sigma, \bfw)\leftarrow $ ($\diag(\fexd)$, standard complex Normal in $2n$-dimensions).
\STATE  $(\trout,\actnoise)\leftarrow
   \left(\left[\obs_0, \obs_0 + \obs_1,\ldots,\sum\limits_{a=0}^{n-1}x_a\right],
   \sqrt{\frac{\kappa^2\lone{\fexdt}}{4n\rho}}\left(\fft\herm\Sigma^{1/2}\cdot\bfw\right)\right)$. 
\STATE \textbf{Output} $\trout+\actnoise.\texttt{real}[0,\ldots,n-1]$.
\end{algorithmic}
\label{alg:fft}
\end{algorithm}
We propose two FFT mechanisms. First, we propose the FFT mechanism which is described in \cref{alg:fft}. This mechanism has the same computation complexity---no optimization costs and $\calO(\dimension\mdim\log{\dimension})$ noise generation---as the Honaker method used in~\citet{kairouz2021practical} but at a better privacy-utility tradeoff, as shown in \cref{fig:fig1-appendix}. The privacy and utility analysis can be found in \cref{app:sfft-details}.

The FFT Optimal Decoder (FFT\_Opt\_Dec) mechanism presented in \cref{sec:fft} represents taking (a real-valued translation of) the encoder $\bfC$ from \cref{alg:fft} and using the optimal decoder, defined in terms of the Moore-Penrose pseudoinverse of $\bfC$. Similarly to~\cref{alg:fft}, there is no need to construct a literal matrix to multiply by in the case of noise defined by the optimal decoder; noise generation time of the mechanism, however, increases by a logarithmic factor to $\calO(\dimension\mdim\log^2{\dimension})$ (as discussed in \cref{app:two_dft_mechanisms}). This complexity is still feasible for many steps (which we will discuss below) and comes with significant utility benefits (see \cref{fig:cifar10_results}).

All the mechanisms we study scale as either $\calO(\dimension^2)$ or $\calO(n \cdot \polylog(n))$. For our $\dimension=2000$ step environments, and even far beyond to $\dimension\approx10,000$, our algorithms can be efficiently realized on GPUs with runtime on the order of seconds per step (including computing and applying gradients and noise). The main challenge in these cases are storing the $\dimension^2 + \dimension\mdim$ coordinates in GPU memory (the former for the decoder matrix, the latter for the noise samples). Given that each of the $\mdim$ coordinates of noise can be sampled independently, this algorithm is straightforwardly parallelizable and so work may be partitioned across many processors when needed. Noises could instead be pre-generated in an entirely separate process, and stored on disk, to be loaded into memory row-by-row concurrently with training.

Outside of computation, both our FFT mechanisms take $O(\dimension\mdim)$ space as all noises for the $\obs \in \R^\datadim$ must be pre-generated. This is in contrast to all prior work, and even our optimal factorizations, which require only $\calO(\mdim)$ space to generate the noise at the current step. Again, we note that space is ofter much cheaper so this is typically not the limiting factor.

\section{Mechanisms under consideration: baselines, subtleties, and losses.}\label{app:mechanism_discussion}

\subsection{Baselines}\label{app:ssec:baselines}
\citet{kairouz2021practical} and \citet{denisov22matfact} both present approaches for ML model training which can be understood as instances of the matrix mechanism--the former grounded in the binary-tree mechanism as refined by \citet{honaker_trick}, and the latter explicitly optimizing a factorization under single participation. These two works yield two natural baselines:
\begin{itemize}
    \item \citet{kairouz2021practical} explores `tree restarts' (generalized as our notion of `stamps', \stamps,~in \cref{sec:empirical-eval}) and the so-called `tree-completion trick' for the Honaker estimator-from-below variant of the binary tree method for computing differentially private prefix sums; the matrix-factorization perspective on this estimator allows us to implement slightly optimized versions of these methods; see \cref{app:ssec:tree_completion,app:ssec:repeated_mechs}.
    \item \citet{denisov22matfact} computes optimal factorizations of various optimization-related matrices, though only for a single epoch. For these matrices, we leverage the results in \cref{sec:sensitivity} to directly compute the sensitivity of the encoder matrices for multiple participations.
\end{itemize}

These two papers can be combined in other ways as well; e.g., the results of \citet{denisov22matfact} show that the `fully efficient estimator' of \citet{honaker_trick} may be used as a drop-in replacement for the estimator from below in \citet{kairouz2021practical}. We focus on the two mechanisms specified above as the natural baselines for the present work.

\subsection{Privacy Accounting}\label{app:ssec:privacy-accounting}
\newcommand{\nmult}{z}
\newcommand{\cpr}{K}
The matrix mechanism \cref{eq:mat_mech} conceptually adds isotropic Gaussian noise in some encoded space. In our case, we encode a matrix of gradients (clipped to $\ell_2$ norm $\clipnorm$) computed over the course of training, denoted by $\bfG$, with the matrix $\bfC$, and add Gaussian noise to each entry in the matrix $\bfC \bfG$. Under the assumption that the matrix factorization has been appropriately scaled so $\bfC$ has sensitivity 1, this Gaussian noise will have standard deviation $\sigma = \clipnorm \nmult$ in each coordinate, where $\nmult$ is the `noise multiplier' parameter determining the privacy level of the mechanism (see \cref{tab:noise_mult_stackoverflow} for example).

Privacy costs are computed as a single application of the Gaussian mechanism to $\bfG \bfC$ using the \texttt{PLDAccountant} provided by the Google DP Library\footnote{\url{https://github.com/google/differential-privacy}}. We also use this accountant to analyze DP-(S)GD baselines (which require more complex accounting), yielding a small improvements in $\varepsilon$ over the Renyi-DP accounting used in prior works.

\subsection{Improvements to `Tree completion' By Removing Noise from Virtual Steps}\label{app:ssec:tree_completion}
The ``tree completion'' trick of \citet{kairouz2021practical} is used on the last step of any restart (in ours, stamp) to reduce the noise added on this step. This is achieved by adding virtual steps (with 0 inputs) until the final step of that level in the tree, because this noise will be the lowest in that level. In this section, we show how to further improve on this trick and that our matrix mechanisms make analyzing such tricks easier. Our implementations of the binary-tree baslines~\citet{honaker_trick,kairouz2021practical} utilize these improvements.

For the online Honaker estimate, \citet{honaker_trick} obtain a DP estimate for the release node $i \in [n]$ (representing the prefix sum until $i$) but summing the corresponding subtrees prior to this node. These are exactly the subtrees corresponding to the binary representation of this node~\citep{honaker_trick}. 
\btodo{Can we re-word the following to be more clear. Are we talking about the error in an estimate of a prefix sum, or error in a node? What is $\mu$ below, it seems to be unbound.}
Then, the variance required to release node $i$, with subtrees of height $0, 1, ..., l_i-1$, is
\begin{equation*}
    \left(\sum\limits_{j=0}^{l_i-1} c_j^2\cdot 2^j\right)\cdot\sigma^2
    = \frac{1}{2\cdot(1-2^{-\mu})}\cdot \sigma^2
    \qquad \text{where} \qquad
    c_j=\frac{1/2^j}{\sum\limits_{j=0}^{l_1-1} (1/2^{j})}.
\end{equation*}
Notice that reaching a new height in the tree decreases the variance needed.
In~\citet{kairouz2021practical} and just before terminating on some non-power-of-two step $n'<n$, they run $n-n'$ virtual steps on zero gradients. This enables their mechanism to use the minimal noise for the power-of-two-step for the final real step. 

However, notice that in both these cases, the methods assume that these virtual steps must be privatized. Indeed, they do not need to be because we know apriori that these steps are virtual, i.e., not corresponding to real gradients. Thus, in our methods we account for this in our mechanism and reduce the noise of the final step accordingly. Importantly, this can be computed without altering the asymptotic runtime and storage complexity of the algorithm: on the last step, the contributions of the virtual steps to the power-of-two noise can be calculated using, e.g., a second binary tree, and removed. This leads to a significant benefit in the loss as we observed in Table~\ref{tab:cifar_mechanism_table} in \cref{app:ssec:losses}.

\newcommand{\treeencoder}{\ensuremath{\bfC_{\text{tree}}}}
\newcommand{\hondecoder}{\ensuremath{\bfB_{\text{Hon}}}}
\newcommand{\pwrtwo}{\ensuremath{2^{\lceil \log_2(\dimension)\rceil}}}
We believe this oversight of prior works showcases the power of our matrix mechanism approach. Indeed, let $\treeencoder$ \btodo{Can we give the dimension of this matrix for concreteness?} be a matrix representing the (complete) binary tree used in the mechanisms of~\citet{honaker_trick,kairouz2021practical} with $\pwrtwo$ leaves; for the sake of concreteness, assume this is the matrix constructed in Appendix C of~\citet{denisov22matfact}. Let $\hondecoder$ represent the Honaker estimator-from-below; in our language, the decoder used by~\citep{kairouz2021practical}.

The tree completion trick of~\citep{Kairouz-extremal} can be understood as follows. The matrix $\hondecoder \treeencoder$ is of size $\pwrtwo \times \pwrtwo$. In the case that $\dimension$ is not a power of two, the penultimate \btodo{I think $\pwrtwo - \dimension$} rows and columns of this matrix will go unused. However, for this factorization, the variance added by \hondecoder on the final row will be quite small, due to the binary tree's redundancy in encoding estimates of this sum. The matrix we wish to factorize is a prefix-sum matrix $\bfS$ of size $\dimension \times \dimension$; this matrix can be expressed as any one of a family of transformations of the (potentially larger) product $\hondecoder \treeencoder$:

$$\bfS = \bfP_j \hondecoder \treeencoder \bfE,$$

where $\bfE$ embeds a $\dimension$-dimensional vector into $\pwrtwo$ dimensions by padding with zeros, and $\bfP_j$ projects back down to $\dimension$ dimensions in a similarly axis-aligned way, taking the first $\dimension-1$ rows of its right-hand matrix argument, and only one, but \emph{any} of the $j^{th}$ rows for $n \leq j \leq \pwrtwo$.
To minimize the Frobenius norm of the constructed decoder in the factorization of $\bfS$, we may simply pick the row with the lowest $\ell_2$ norm; in the case of $\hondecoder$, this is the final row.

\ktodo{added from cc, check please}
One more optimization becomes clear when tree completion is formulated in this manner.
Similar to our optimal decoder of \cref{sec:fft}, any decoder can be used without changing the sensitivity of the encoder. Noting that nonzero entries in the decoder increase our loss of \cref{eq:l_expression}, we can simply zero out the columns of the decoder corresponding to these virtual steps---this decreases the loss, preserves the same error in the DP estimate of the prefix sum (the inputs are 0), and maintains the same DP guarantee. In other words, we need not account for the noise, or the error it introduces, of virtual steps. We now provide a more rigorous explanation.

The image of $\treeencoder \bfE$ can be contained in an axis-aligned subspace; effectively, the subspace corresponding to elements that may be nonzero in the binary tree when run for $\dimension$ steps. 
In other words, the columns of the decoder corresponding to rows of the encoder that are removed via the projection need not be included: because the input is not processed.
Therefore, denoting the projection onto this subspace by $\Pi$, \btodo{Notation collision with participation patterns, maybe OK?} we may write:

$$\bfS = \bfP_j \hondecoder \Pi \treeencoder \bfE = \left(\bfP_j \hondecoder \Pi\right)\left(\Pi \treeencoder \bfE\right),$$

further reducing the variance of the decoder (without increasing the sensitivity of the encoder) in this incomplete binary tree case.

In our implementations of the online Honaker mechanism, we freely use these tricks, in addition to exact calculations of the sensitivity of the mechanism enabled by noting that the encoder is all-nonnegative and the observations of \cref{sec:sensitivity}, leading to some improvements in these mechanisms over those in the existing literature. No changes to accounting are required, as privacy is inherited from the matrix mechanism perspective.

\subsection{Stamping: Repeated mechanisms in the matrix-factorization setting}\label{app:ssec:repeated_mechs}

In stamping, we define a new encoder matrix as the Kronecker product of some given encoder $\bfC$ with an $\stamps \times \stamps$ identity matrix $\bfI$, creating a new $\stamps\dimension$-dimensional linear DP query mechanism. Assuming $\bfC$ is of shape $\dimension \times \dimension$, the resulting encoder is an $\stamps \dimension \times \stamps \dimension$ block-diagonal encoder matrix, formed by `repeating' the matrix $\bfC$ along the diagonal.

~\citet{kairouz2021practical} explored `restarting' their binary-tree based prefix sum estimation mechanism, treating the number of restarts used as a hyperparameter, and treating the result of a `completed' application of the binary tree as fixed. For linear operators $\bfA$ with constant columns below the diagonal, this approach may be used to construct a new factorization from an existing one. This constant-columns property, or a block-based variant thereof, is \emph{required} to simply treat the output from a `completed' application of the existing mechanism as fixed; for a general matrix $\bfA$, there is no clear prefix property which can be leveraged for this purpose.

Taking the matrix view, one may construct a similar encoder/decoder pair for the prefix-sum matrix by reusing the initial decoder $\bfB$ on the block-diagonal and fixing the columns to simply repeat the final row of this decoder below the block-diagonal; notice that the constant-column property of the prefix sum matrix guarantees that this construction appropriately factorizes $\bfA$. The noise that the matrix mechanism thus constructed adds can be implemented as a `restarted' tree mechanism; however, since we compute sensitivities exactly for decoders of this structure as in \cref{sec:sensitivity}, the privacy properties of these mechanisms we construct to replicate the `restarts' of~\citep{kairouz2021practical} may not be identical to those presented there, where accounting is performed by composition.

The matrix-mechanism perspective additionally allows one to apply the `stamping' construction to \emph{any} linear operator $\bfA$ (e.g. the momentum matrix), where a reuse of fixed previous outputs is not possible. A `stamped' factorization of \emph{any} $\bfA$ may be obtained, for example, by matrix pseudoinversion: letting $\bfB = \bfA\left(\bfC \otimes \bfI\right)\pinv$. This defines a legitimate factorization of any $\bfA$ requiring only suitable non-degeneracy assumptions on $\bfC$, and indeed represents the optimal decoder for the stamped encoder.

The pseudoinverse-based construction can, even in the prefix-sum case, be \emph{quite different} from a construction designed to replicate `restarted' mechanisms. For example, if $\bfC$ is a matrix representation of the binary-tree encoder and $\bfI = 1$, then the resulting decoder matrix represents the \emph{full}, rather than online, Honaker decoder~\citep{honaker_trick}; the validity of this mechanism in the adaptive streaming setting was only shown quite recently by~\citet{denisov22matfact}.

For comparability with existing literature (and to preserve potential for an efficient implementation), however, all of the tree-based mechanisms we explore in the main body have decoders which replicate the setting of composition\footnote{We show how the optimal binary-tree decoder differs from the online, composition-based decoder in \cref{fig:cifar10-stamping-restart-sensitiv}}. All other stamped mechanisms used the optimal decoder.

\btodo{I don't know that there is a notion of "sensitivity to the DP composition of these mechanisms. I think we can say something precise if we stamp out one copy of the encoder for each epoch, and we can refer to \cref{cor:matrix_to_vector_sens} if we stamp out the mechanism in a way where multiple participations may hit the base mechanism, but I'm not sure we can say much more than that.}
\ktodo{There is something a little subtle here, I think--taking the encoder/decoder view, the Kronecker-producted binary tree encoder with the matrix perspective releases the same values, with the same distribution, as a `restarted' mechanism. AFAICT, it's the results of our previous paper (combined with sensitivity calculations here) which allow us to bypass composition in doing the accounting for this mechanism--since we compute sensitivity exactly in this case, presumably the super-simple accounting we do must be as least as good as any composition-based accounting (is there some result on this?). I'm going to try to rework the above and suppress any explicit discussion of accounting, taking the matrix view, and just noting that we do accounting differently because we compute exact sensitivities.}

\mypar{Optimizing over $\stamps$}
\btodo{I think two paragraphs could be made more concise and more clear by focussing on what I think are the two main points.  1) Optimizing over $\stamps$ can produce better mechanisms in terms of ML accuracy. 2) Rather than comparing mechanisms by training ML models, we can compare them directly in terms of total squared error, which is cheap and we find is predictive of ML perofrmance.}

Considering instantiation enables us to directly analyze and minimize (over $\stamps$) the stamped mechanism's multi-epoch loss \cref{eq:l_expression} without running compute intensive ML experiments. Indeed, we observe in Table~\ref{tab:cifar_mechanism_table} of \cref{app:ssec:losses} that for mechanisms which were not explicitly optimized for the \multiepoch participation setting, there exist stamped mechanisms ($\stamps>1$) with much lower loss that correspondingly led to much more performant ML models (e.g., Figures~\ref{fig:cifar10_tree_restart_ablations} and~\ref{fig:cifar10_epoch_ablations} of \cref{app:mechanism_discussion}).

Interestingly, with our capacity to measure mechanisms at a single shot in the multi-epoch setting, we see a similar trend as was observed for restarts in~\citet{kairouz2021practical}: that `stamped' mechanisms have lower total loss than their non-stamped full-tree counterparts in the 20-epoch, 100 steps / epoch setting (see \cref{tab:cifar_mechanism_table}); training performance of these `stamped' mechanisms on CIFAR10 can be found in \cref{fig:cifar10_tree_restart_ablations}. However, there is a significant improvement in our approach in that we can now directly tune this hyperparameter without the need to actually run ML training. This lets us reduce computation by only analyzing the loss of the generated matrices and then running the mechanism with the lowest loss.

\subsection{Factorization losses and per-iterate variance}\label{app:ssec:losses} 
As a first measure of the privacy-utility tradeoff, we compare the losses of each mechanism from \cref{eq:l_expression} for factorizations of the matrices under consideration.

We compare measured losses of several factorizations of the prefix-sum matrix for the $\multiepoch=(20, 100)$ setting of the CIFAR experiments in \cref{tab:cifar_mechanism_table}. We plot the per-iterate variance of the mechanisms in \cref{fig:cifar10_results}, along with several variations, in \cref{fig:cifar10_variance_plots}. 
In \cref{fig:so_nwp_variance_plots}, we plot the per-iterate variance distribution of factorizations of the momentum and cooldown matrix (described in \cref{sec:empirical-eval}) at a fixed variance level, and with various privacies, computed for the $\multiepoch=(6,342)$ participation setting.

\cref{fig:cifar10_variance_plots,fig:so_nwp_variance_plots} both demonstrate the effect of \multiepoch-participations on the optimization problem. Particularly interesting to consider are the optimally-factorized matrices; in both cases, the epoch structure is clearly visible in the manner in which the mechanisms distribute variance. We see also the effect of `stamps' \stamps~in the variance distribution, effectively a proxy for the epoch structure directly accounted for by the optimal mechanisms.

\begin{figure}
    \begin{subfigure}[b]{0.45\linewidth}
    \includegraphics[width=\linewidth]{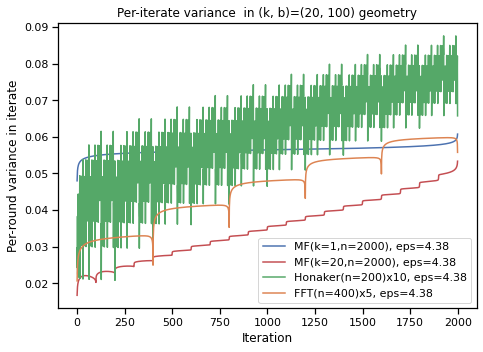}
    \end{subfigure}
    \hfill
    \begin{subfigure}[b]{0.45\linewidth}
    \includegraphics[width=\linewidth]{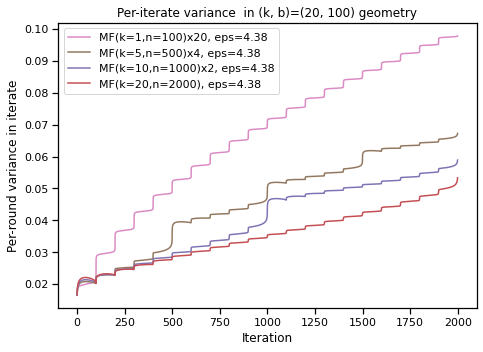}
    
    \end{subfigure}
    \vfill
    \begin{subfigure}[b]{0.45\linewidth}
    \includegraphics[width=\linewidth]{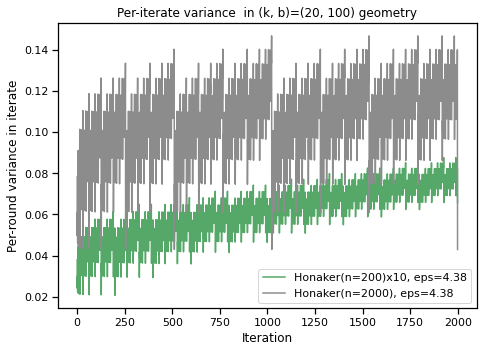}
    
    \end{subfigure}
    \hfill
    \begin{subfigure}[b]{0.45\linewidth}
    \includegraphics[width=\linewidth]{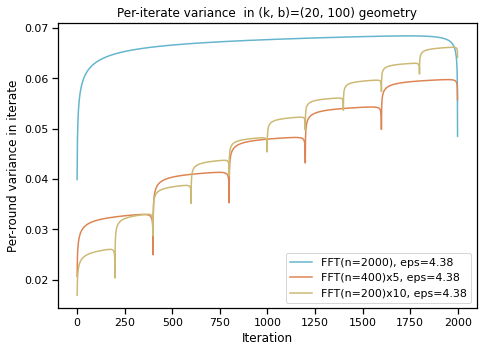}
    
    \end{subfigure}
    \caption{Per-iterate variance for prefix-sum factorizations. All mechanisms above yield the same privacy ($(\varepsilon, \delta) = (4.38, 10^{-5})$ in the $\multiepoch=(20, 100)$ setting), but have different total variances (the integral of the curves above). \btodo{Maybe highlight the difference in the $y$-axis scales here? If we regenerate, replace "geometry" with "participation" in the plot titles, and maybe plot Honaker last in the top-left plot so the other liens are on top. Maybe include $MF(k=20,n=2000)$ in all the plots as a common baseline?}}
    \label{fig:cifar10_variance_plots}
\end{figure}

\begin{table}[h!]
\begin{center}
\begin{tabular}{lcl}
\toprule
Mechanism & \makecell{$(k,b)=(20,100)$\\Prefix Sum Loss} & Computation for $n$ Steps, $\mdim=1$\\
\midrule
(Online) Honaker($n=2000$) & 5.8e6 & $\calO(n\log{n})$ Noise Generation\\
(Online) Honaker($n=1000$)$\times2$ & 3.3e6 & \\
(Online) Honaker($n=400$)$\times5$ & 2.1e6 & \\
\textbf{(Online) Honaker($\mathbf{n=200}$)$\mathbf{\times10}$} & \textbf{2.0e6} & \\
(Online) Honaker($n=100$)$\times20$ & 2.1e6 & \\
\midrule
Optimal Decoder Honaker($n=2000$) & 2.4e6 & $\calO(n^2)$ Noise Generation\\
Optimal Decoder Honaker($n=1000$)$\times2$ & 1.6e6 & \\
\textbf{Optimal Decoder Honaker($\mathbf{n=400}$)$\mathbf{\times5}$} & \textbf{1.2e6} & \\
Optimal Decoder Honaker($n=200$)$\times10$ & 1.4e6 & \\
Optimal Decoder Honaker($n=100$)$\times20$ & 1.8e6 & \\
\midrule
\textbf{MF($\mathbf{k=20,n=2000}$)} & \textbf{6.5e5} & \makecell{$\calO(n^3)$ Optimization +\\$\calO(n^2)$ Noise Generation}\\
MF($k=10,n=1000$)$\times2$ & 8.8e5 & \\
MF($k=5,n=500$)$\times4$ & 1.2e6 & \\
MF($k=1,n=100$)$\times20$ & 2.5e6 & \\
MF($k=1,n=2000$) & 1.6e6 & \\
\textbf{MF($\mathbf{k=1,n=1000}$)$\mathbf{\times2}$} & \textbf{1.37e6} & \\
MF($k=1,n=500$)$\times4$ & 1.4e6 & \\
MF($k=1,n=400$)$\times5$ & 1.5e6 & \\
MF($k=1,n=200$)$\times10$ & 1.8e6 & \\
MF($k=1,n=100$)$\times20$ & 2.5e6 & \\
\midrule
FFT($n=2000$) & 2.3e6 & $\calO(n\log{n})$ Noise Generation\\
FFT($n=1000$)$\times2$ & 1.8e6 & \\
\textbf{FFT($\mathbf{n=400}$)$\mathbf{\times5}$} & \textbf{1.6e6} & \\
FFT($n=200$)$\times10$ & 1.9e6 & \\
FFT($n=100$)$\times20$ & 2.5e6 & \\
\midrule
FFT Optimal Decoder($n=2000$) & 2.2e6 & $\calO(n\log^2{n})$ Noise Generation\\
FFT Optimal Decoder($n=1000$)$\times2$ & 1.5e6 & \\
\textbf{FFT Optimal Decoder($\mathbf{n=400}$)$\mathbf{\times5}$} & \textbf{1.1e6} & \\
FFT Optimal Decoder($n=200$)$\times10$ & 1.2e6 & \\
FFT Optimal Decoder($n=100$)$\times20$ & 1.7e6 & \\
\bottomrule
\end{tabular}
\end{center}
\caption{Loss for various prefix-sum factorizations, computed via \cref{eq:l_expression}, in multiple-participation setting for 20 epochs with 100 steps per epoch. Lowest-loss mechanism in each class bolded. Note that `(Online) Honaker' corresponds to the restarted decoder. By evaluating the dual problem (\cref{sec:optimizing_mechanisms}), 6.53e5 represents a lower bound on the optimal loss; the optimal matrix factorization is within 0.2\% of this optimal value. Though up to $\calO{n^2}$ noise generation can be tolerated practically for large ML training runs, we find that the stamped FFT optimal decoder obtains the best privacy-utility tradeoffs while requiring only $\calO(n\log^2{(n)})$ time. Sensitivity is calculated exhaustively with contributions constrained to $+1$ for all matrices except FFT ones, where sensitivity is calculated using Theorem~\ref{thm:eval_sens_ub}. \btodo{I made some initial changes to better match formatting best practices. I'm not sure where the group is supposed to be in the MF block (there are two bolded rows).}}
\label{tab:cifar_mechanism_table}
\end{table}

\clearpage

\begin{figure}
    \includegraphics[width=\linewidth]{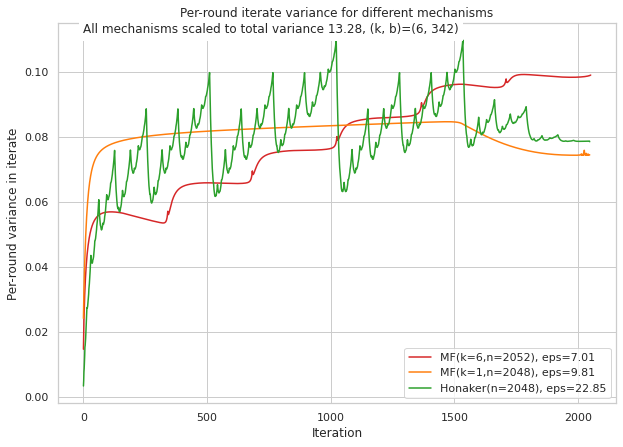}
    
\caption{Per-iterate variance for momentum + cooldown matrix factorizations. Privacy measured in the $\multiepoch=(6, 342)$ setting.}
    \label{fig:so_nwp_variance_plots}
\end{figure}

\begin{figure}
    \centering
    \includegraphics[width=\linewidth]{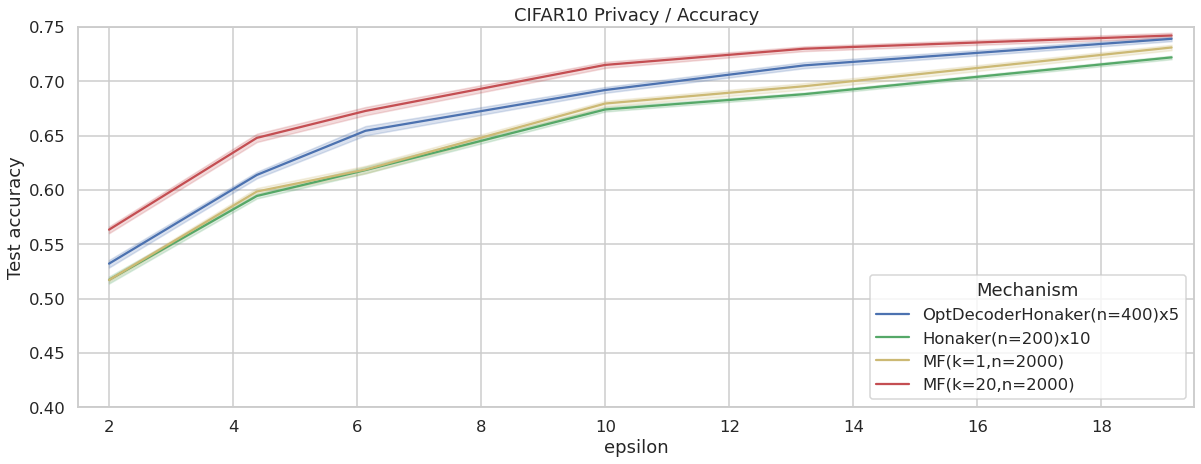}
    \caption{Comparing the optimal decoder, i.e., \textbf{OptDecoderHonaker}, with the standard stamping decoder (including fixing the output of each block), i.e., \textbf{Online Honaker}, with the optimal factorization. \btodo{We should replace restarts with stamps in the caption. Does "X epochs" represent the total number of training epochs? Does $T$ steps mean $T$ total steps or $T$ steps per epoch? We need to make it clear that all of these lines are using the same amount of computation (or if that's not true, why are we comparing different levels of computation?)}}
    \label{fig:cifar10-stamping-restart-sensitiv}
\end{figure}

\section{Details and additional experiments for CIFAR10.}\label{app:cifar10_details}

We train image-classification models using the CIFAR10 dataset as hosted in \texttt{tensorflow-datasets}, containing 50,000 training and 10,000 test examples. We evaluate and compute test accuracies on the entire test set, following the open-sourced code of \citet{kairouz2021practical}. We reuse the network architecture, dataset processing and initialization strategies presented in \citet{kairouz2021practical}; in particular, the architecture we use can be found in their Table 2 (b).

\mypar{Optimization setup and hyperparameters}
 We train all mechanisms for 20 epochs with batch size of 500, yielding 100 steps per epoch and 2000 total. After performing some small initial grid searches, we settled on using linear learning rate cooldown to $0.05\times$ the initial learning rate over the last 500 steps of training. We found this consistently improved utility for all mechanisms and privacy levels.
 
 As mentioned in \cref{sec:empirical-eval}, for this 20-epoch training setup, we only compare factorizations of the prefix-sum matrix, and do not include any factorizations of matrices which incorporate momentum of learning rate cooldown directly in the mechanism itself~\citep{denisov22matfact}.
 We sweep over learning rates of values $(1 \times 10^{i}, 2 \times 10^{i}, 5 \times 10^{i})$ for $i$ in $\{-2, -1\}$; for all mechanisms and noise levels, optimal values were in the interior of this sweep. We sweep over momentum values of $0, 0.85, 0.9, 0.95$ though find nonzero momentum works best for all matrix mechanisms, and no momentum works best for \DPSGD at our scale as found previously by~\citet{kairouz2021practical}.
 
 For Honaker and FFT-based factorizations, there is no known a-priori way to choose the optimal number of \stamps~for a given \multiepoch~setting. Therefore we treat the value \stamps~as a hyperparameter, and sweep across it, for $\stamps \in \{1, 2, 5, 10, 20\}$. As can be seen in \cref{tab:cifar_mechanism_table}, the optimal \stamps~for both of these factorizations was in the interior of this sweep. As shown, e.g., in \cref{fig:cifar10_tree_restart_ablations}, the training-time performance of these mechanisms matched the expected order for computed loss. This value \stamps~represents an extra hyperarameter which must be set for the Honaker and FFT mechanisms; to the best of our knowledge, computing the loss for various instantiations of these mechanisms via \cref{eq:l_expression} represents the only known method for setting this parameter other than simply training models.
 
 We also apply our sensitivity analysis of \cref{sec:sensitivity} to the matrices of~\citet{denisov22matfact} which are optimized for $\maxpart=1$. In doing so, we can also optimize the number of stamps which we do. We report the best results as identified by the losses in \cref{tab:cifar_mechanism_table}.

\begin{figure}[h]
    \centering
    \includegraphics[width=\linewidth]{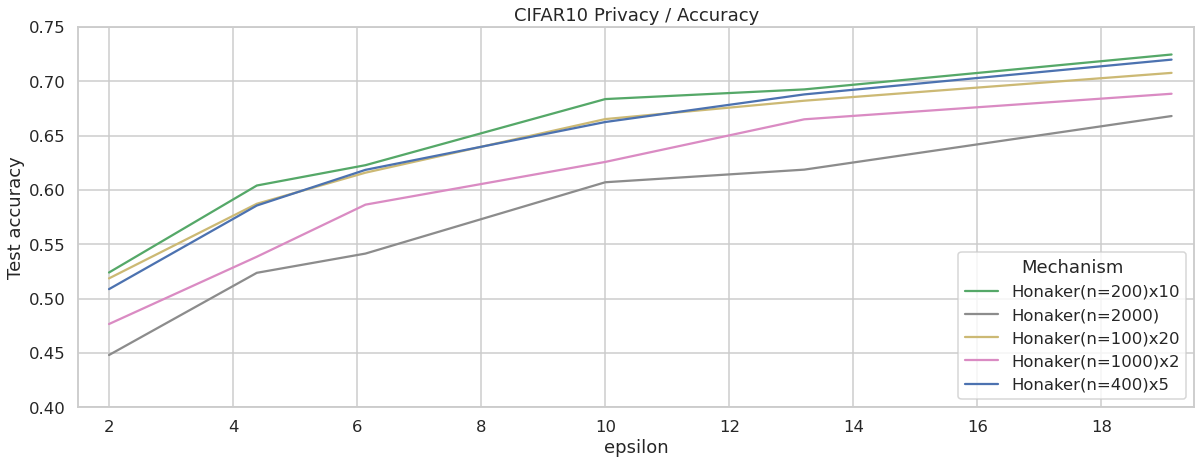}
    \caption{DP-FTRL-Honaker baseline ablation with respect to number of `stamps' \stamps.}
    \label{fig:cifar10_tree_restart_ablations}
\end{figure}

\begin{figure}[h]
    \centering
    \includegraphics[width=\linewidth]{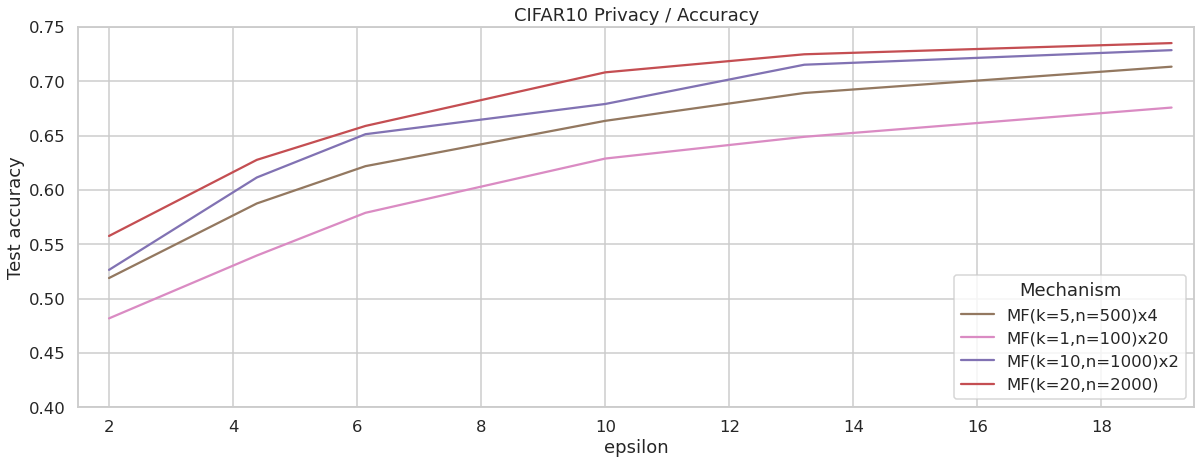}
    \caption{Ablation of prefix-sum factorizations, optimized for different number of epochs, and `stamped' as appropriate. Performance improves as the geometry used for computing the factorization approaches that used for training.}
    \label{fig:cifar10_epoch_ablations}
\end{figure}

\begin{figure}[h]
    \centering
    \includegraphics[width=\linewidth]{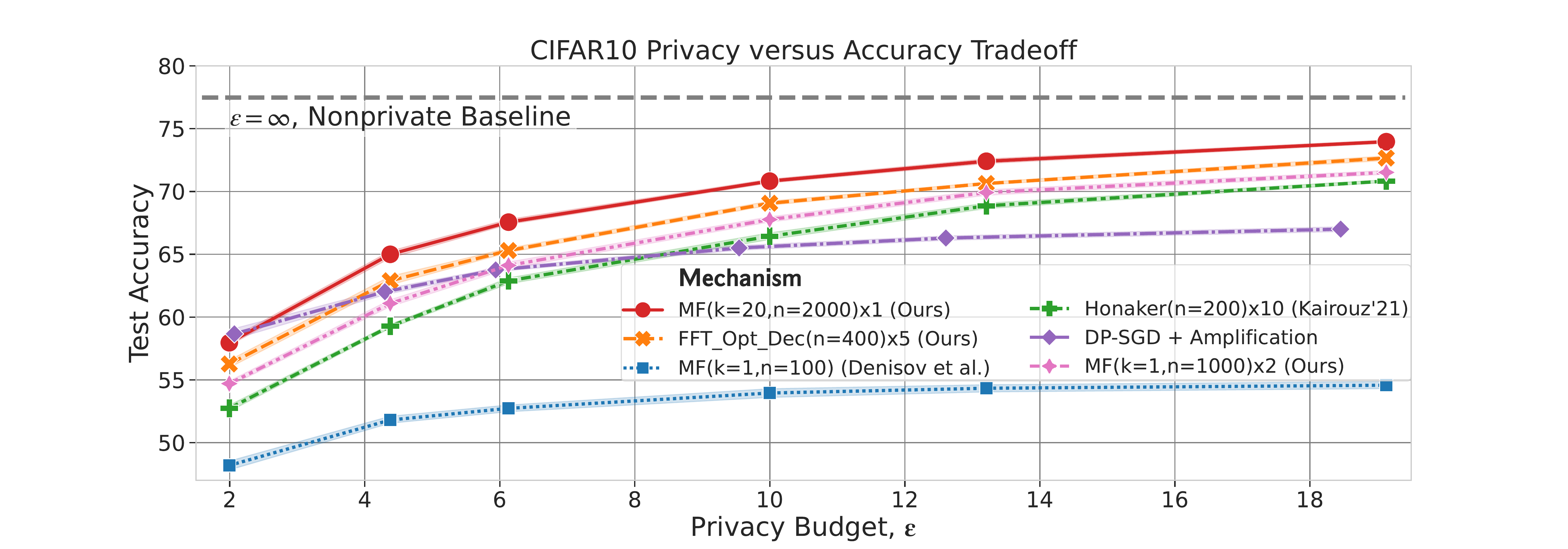}
    \caption{\textbf{Our optimal multi-epoch matrix and FFT-based mechanisms outperform all others, including \DPSGD with amplification}, as low as $\varepsilon\approx 4$. Using our sensitivity calculation of \cref{thm:eval_sens_ub} and stamping (\cref{sec:empirical-eval}), we optimize a single pass ($k=1$) matrix of~\citet{denisov22matfact} but apply it here with $>1$ pass. We use an online Honaker-based decoder equivalent to that of~\citet{kairouz2021practical} except for a significant improvement to tree-completion in \cref{app:ssec:tree_completion}. Models trained for 20 epochs on CIFAR10 with a batch size of 500. We repeat each setting 12 times and show 95\% bootstrapped confidence intervals. Empirical setup is in \cref{ssec:central-experiments}.}
    \label{fig:fig1-appendix}
\end{figure}

\clearpage
\section{Additional StackOverflow Details}\label{sec:addl_stackoverflow}
\subsection{Privacy and Language Modelling}\label{app:ssec:sonwp-history}
Language models trained on user data are an important real-world application of DP training, as these models can memorize their training data if appropriate mitigations are not applied \citep{carlini19secret,song2019auditing,carlini21extracting}.  Since one user might contribute 1000s of tokens (training examples) to a dataset, it is particularly important to consider user-level guarantees  \citep{mcmahan2017learning}. Building on the approach of \citet{kairouz2021practical}, Google recently announced the first-ever launch of a language model trained on user data with a formal user-level DP guarantee ($\rho=0.81$ zCDP), further demonstrating the importance of this application \citep{mcmahan2022dpftrlblog}.

The StackOverflow next-word prediction task, introduced in \citep{reddi20adaptive}, has become a benchmark problem for DP training, and our experimental setup here fixes the same model and adapts hyperparmaeters from previous work including \citet{kairouz2021practical,denisov22matfact}.

\subsection{Hyperparameter tuning and initial experiments}\label{app:ssec:sonwp-setup}
All runs use server momentum 0.95, a client learning rate of 1.0, and a server learning-rate cooldown schedule for the last 25\% of rounds. The clipping norm was fixed at $\clipnorm=1$. Zeroing outlier updates and using 1000 clients/round (6 epoch runs) allows the use of the higher server learning rates. \OneEpochOne replicates the result of single-epoch training from \citet{denisov22matfact}; note that with 167 clients/round and this mechanism, the higher learning rate does not appear to help.
\cref{fig:prelim} gives preliminary experimental results which informed the main experiments used in the paper. Note that the y-axis range (Test set accuracy) is highly compressed, and so the primary point of comparison is on epsilons. For example, \citet{denisov22matfact} shows that cross-run variation of 0.002 or more is typical.

The 6 horizontal lines give test-set accuracy for various non-private training mechanisms. The ``Unnoised MF'' runs correspond to the same code path used for privacy, but without any noise addition. In particular, these use momentum with learning rate cooldown; the other unnoised runs use a standard FL implementation with momentum but a fixed learning rate schedule; ``cpr=167`` corresponds to one epoch of training (167 clients/round), and ``cpr=50'' is 50 clients/rounds (only about 1/3 of an epoch). This last non-private baseline uses the best hyperparameters for FedAvgM from \citet{reddi20adaptive}. 

The two ``Unnoised MF'' runs with accuracies between 0.246 and 0.248 are functionally identical, and the line near 0.248 accuracy is the same except it does not use learning-rate cooldown. Thus, for the given learning rates, we see the higher-epsilon private runs are adding sufficiently small noise that the accuracy is essentially equivalent to unnoised baselines with the same hyperparameters. However, using larger learning rates can achieve accuracy over $25\%$, even with privacy as in the case of the MF-6-6 run, hence motivating the inclusion of larger learning rates in the main experiments.

The MF (Matrix Factorization) runs with ``prefix'' in the name correspond to computing an optimal factorization of the prefix-sum matrix (lower triangular matrix of ones) and then applying momentum (and possibly learning rate cooldown) as post-processing. The other MF runs directly factor the momentum or momentum+cooldown matrix.

\begin{figure}[h]
    \centering
    \includegraphics[width=0.8\linewidth]{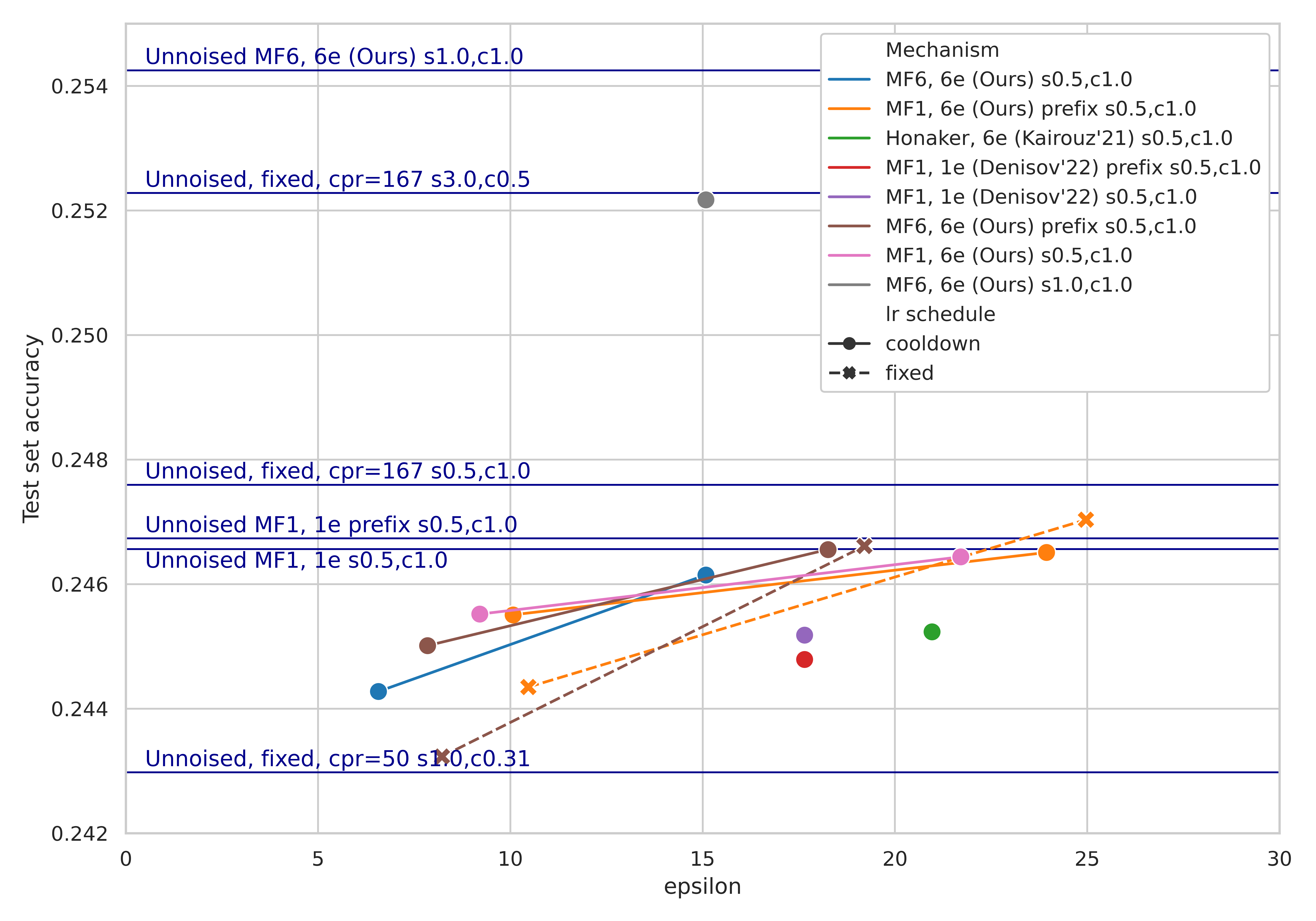}
    \caption{Preliminary experimental results and non-private (unnoised) baselines. The notation $sX,cY$ indicates a server learning rate $X$ and client learning rate $Y$.}
    \label{fig:prelim}
\end{figure}

\subsection{Impact of zeroing-out large-norm updates}\label{sec:zeroing_details}
We observed that zeroing out updates with an $\ell_\infty$ norm greater than $100 \clipnorm = 100$ greatly stabilized training, allowing larger learning rates, particularly for \SixEpoch. We conducted ablation experiments where we turned off this zeroing, which produced a large fraction of unconverged runs as detailed in \cref{tab:zeroing}. The number of updates zeroed increases significantly with larger amounts of noise and larger learning rates, as shown in \cref{fig:zeroing_by_round}.

\begin{figure}
    \begin{subfigure}[b]{0.4\linewidth}
    \includegraphics[height=2in]{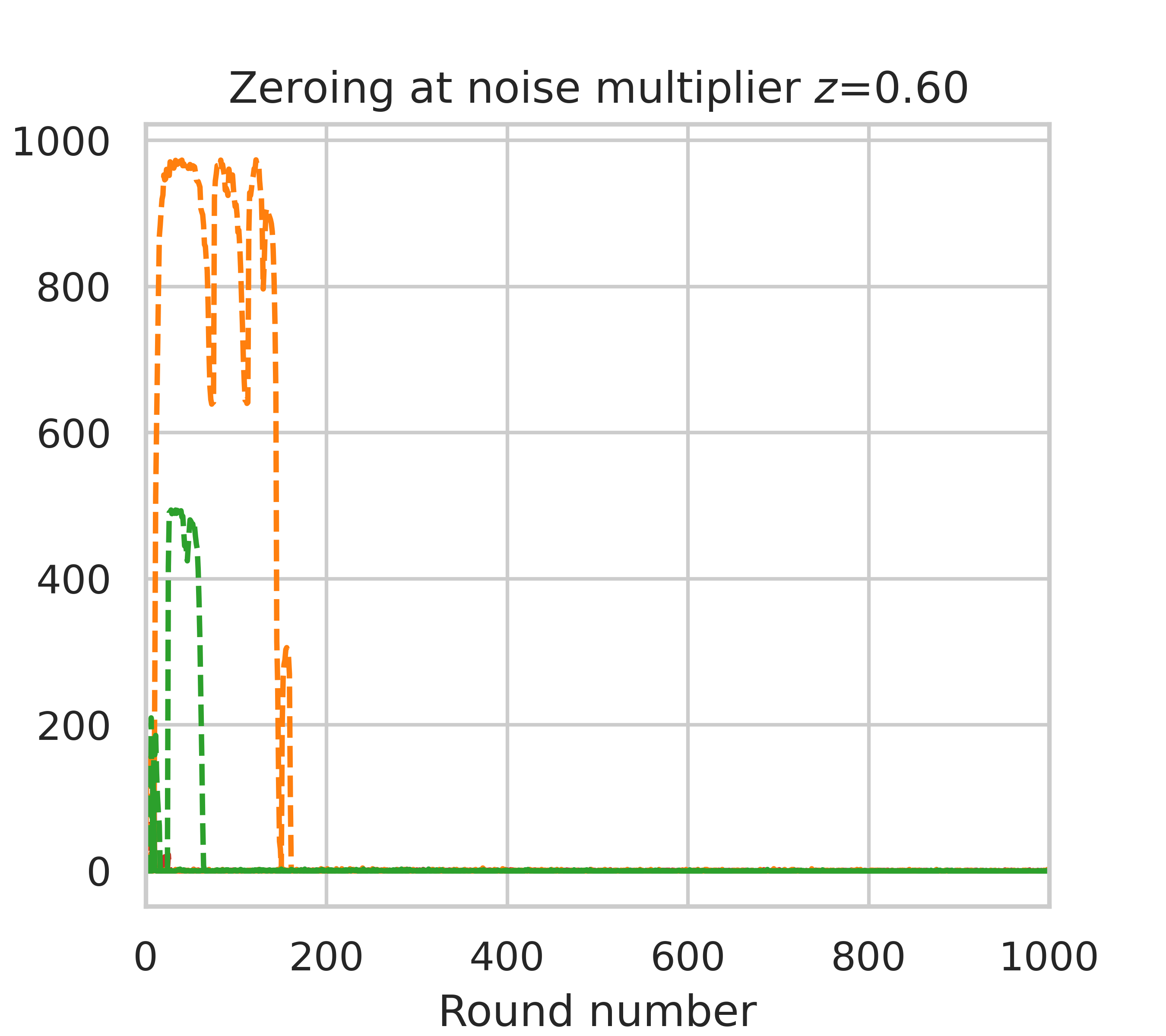}
    \end{subfigure}
    \hfill
    \begin{subfigure}[b]{0.6\linewidth}
   \includegraphics[height=2in]{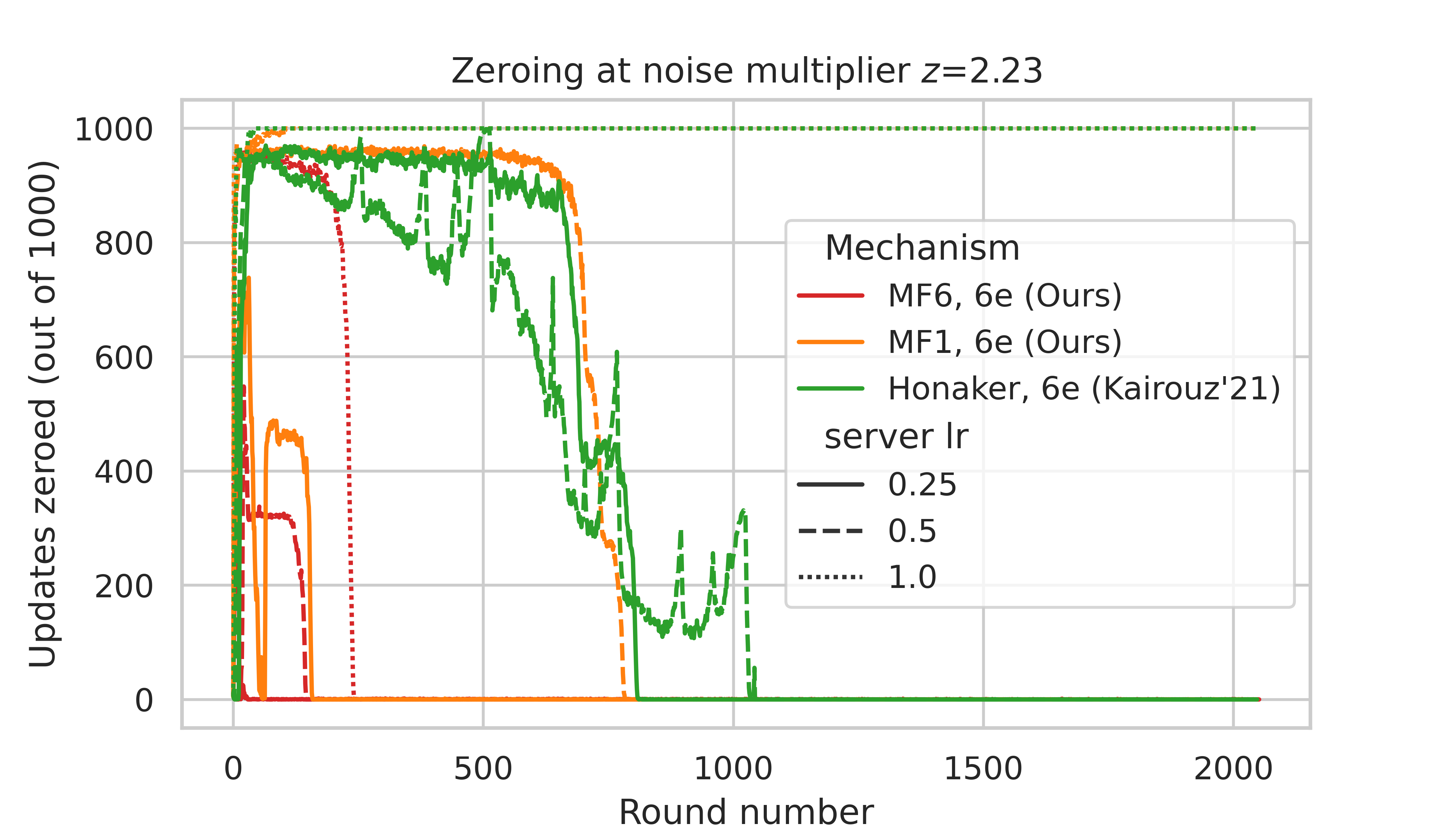}    \end{subfigure}

    \caption{Number of large-magnitude updates zeroed per training round.}
    \label{fig:zeroing_by_round}
\end{figure}

\begin{table}[H]
\begin{center}
\renewcommand{\arraystretch}{1.4}
\begin{tabular}{llrr}
\toprule
                 & & \multicolumn{2}{c}{Unconverged runs with} \\
Mechanism        & $\varepsilon$ & Zeroing  & No Zeroing \\
\midrule                 
\DPSGDSix, $\eta_s = 0.5$    & 2 & 0 of 3 & 1 of 2 \\
\DPSGDSix, $\eta_s = 0.5$    & 9 & 0 of 3 & 0 of 2 \\
\SixEpoch, $\eta_s = 0.5$ & 2 & 0 of 3 & 3 of 4 \\
\SixEpoch, $\eta_s = 1.0$ & 9 & 0 of 3 & 3 of 4 \\
\bottomrule \\
\end{tabular}
\caption{Number of divergent training runs with and without zeroing of user updates with $\ell_\infty$ norm greater than 100; $\eta_s$ gives the server learning rate. } \label{tab:zeroing}
\end{center}
\end{table}

\subsection{Complete results}\label{ssec:app:complete-results}
In this section we give additional details on our main grid of experiments. \cref{fig:stackoverflow_per_lr} uses the same data as \cref{fig:stackoverflow_main}, but shows results for each learning rate individually. \cref{tab:sgd,tab:mf} give the mean, minimum, maximum, and standard deviation of test-set accuracy corresponding to \cref{fig:stackoverflow_per_lr}, as well as the number of replicated experiments (`count').

\cref{tab:noise_mult_stackoverflow} gives the noise multipliers to achieve our various privacy targets $\varepsilon$. Due to a change in the accountant used, we have slightly different $\varepsilon$ targets around 8.8 for the different methods. Note the noise multipliers here are incomparable between the MF and (S)GD mechanisms in terms of the total noise introduced. For matrix factorization, we sample $\bfZ \sim \mathcal{N}(0, \clipnorm^2 z^2)$ for noise multiplier $z$; this noise is applied after mapping the raw gradients/updates $\obs$ through the linear map $\bfC$ which is normalized so the total sensitivity is $\clipnorm$. For the (S)GD mechanisms, we add noise $\mathcal{N}(0, \clipnorm^2 z^2)$ independently to each model update. In all cases, noise is added to the sum of per-user updates, so the effective noise in the average update scales down with the number of clients per round.

\begin{figure}[t]
    \centering
    \includegraphics[height=2.27in]{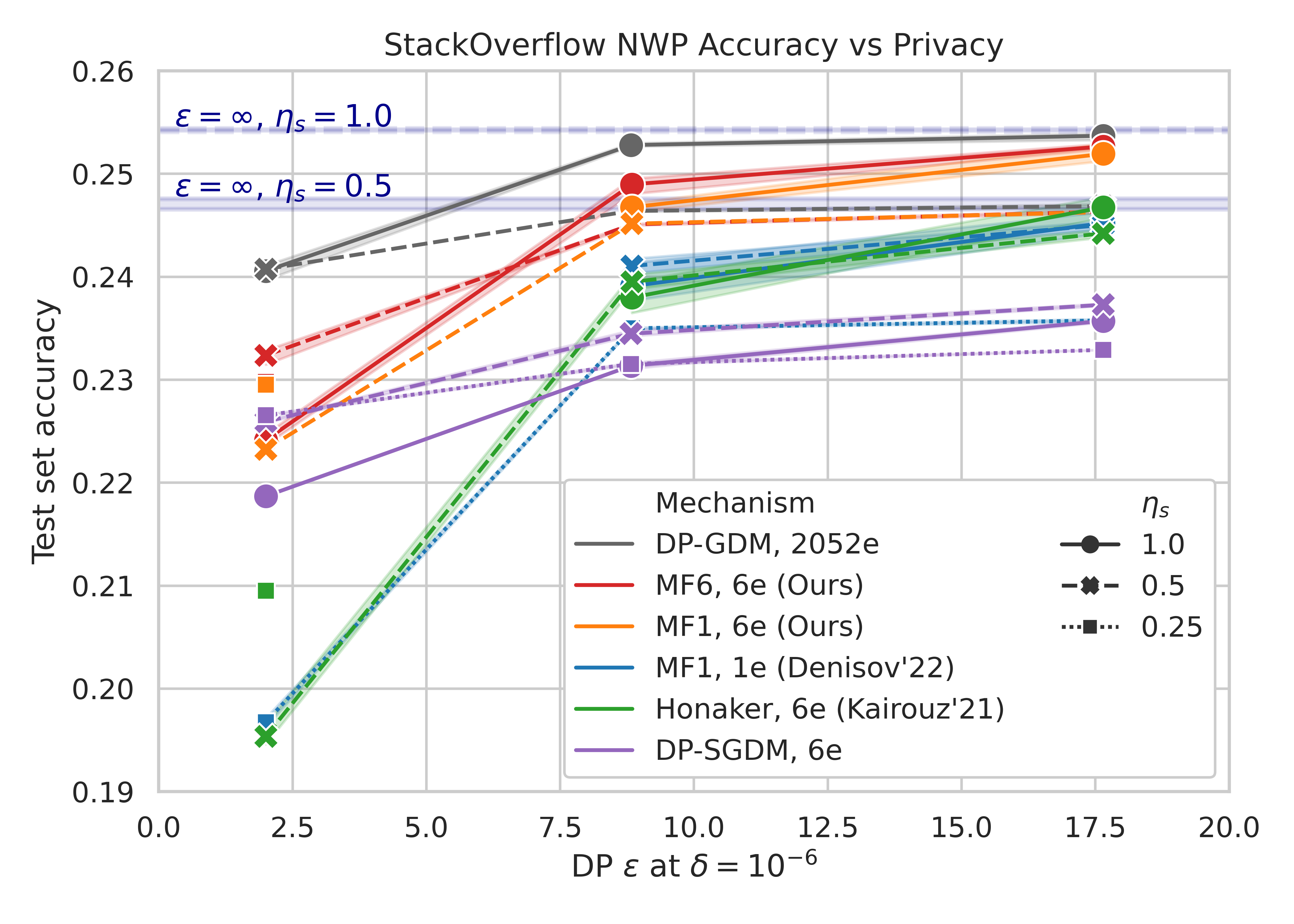}
    \includegraphics[height=2.27in]{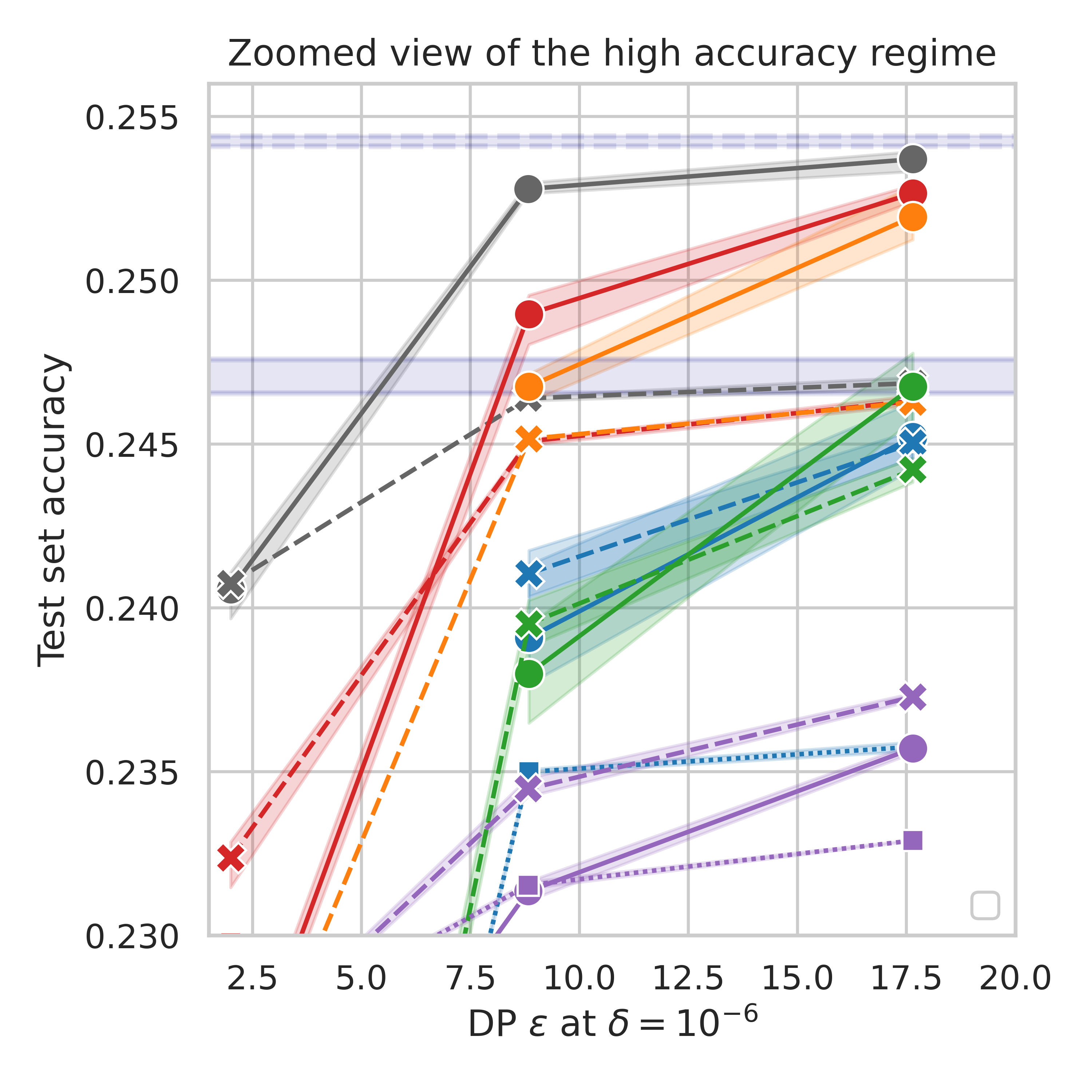}
    \caption{Complete data used in \cref{fig:stackoverflow_main} showing results for server learning rates $\eta_s = 0.5$ and $1.0$, as well as $0.25$ when $\varepsilon{=}2$. All algorithms use momentum 0.95 and a client learning rate of 1.0. For the 1 and 6 epoch runs, we observe MF generally tolerates larger learning rates, though lower learning rates perform better for all algorithms at $\varepsilon{=}2$.}
    \label{fig:stackoverflow_per_lr}
    \vspace{-1em}
\end{figure}

\begin{table}[H]
\begin{centering}
\begin{tabular}{rrrr}
\toprule
                   & \multicolumn{3}{c}{Noise multiplier $z$} \\
 target $\varepsilon$ &    MF &  DP-SGD (6e) &  GD (2052e)    \\
\midrule
         17.648 & 0.341 &           0.402 &         15.518 \\
          8.841 & 0.600 &               - &              - \\
          8.824 &     - &           0.491 &         27.493 \\
          2.000 & 2.231 &           0.757 &        106.023 \\
\bottomrule
\end{tabular}
\caption{Noise multiplier parameters for the StackOverflow experiments to achieve various $\varepsilon$s at $\delta=10^{-6}$. Privacy was computed using the PLD accountant, see \cref{app:ssec:privacy-accounting}.} \label{tab:noise_mult_stackoverflow}
\end{centering}

\end{table}

\begin{table}[H]
\begin{centering}
\begin{footnotesize}
\begin{tabular}{lS[table-format=2.2]lrrrrr}
\toprule
       &       &      & \multicolumn{5}{c}{Test set accuracy} \\
clients/round & $\varepsilon$ & $\eta_s$ &         mean &     std &     min &     max & count \\
\midrule
1000   & 2.00  & 0.25 &           0.22654 & 0.00009 & 0.22648 & 0.22660 &     2 \\
       &       & 0.50 &           0.22592 & 0.00013 & 0.22581 & 0.22606 &     3 \\
       &       & 1.00 &           0.21869 &         & 0.21869 & 0.21869 &     1 \\
       & 8.82  & 0.25 &           0.23154 & 0.00015 & 0.23143 & 0.23164 &     2 \\
       &       & 0.50 &           0.23447 & 0.00033 & 0.23419 & 0.23495 &     4 \\
       &       & 1.00 &           0.23135 & 0.00041 & 0.23106 & 0.23164 &     2 \\
       & 17.65 & 0.25 &           0.23290 & 0.00002 & 0.23289 & 0.23291 &     2 \\
       &       & 0.50 &           0.23728 & 0.00014 & 0.23710 & 0.23741 &     4 \\
       &       & 1.00 &           0.23571 & 0.00019 & 0.23558 & 0.23585 &     2 \\
342477 & 2.00  & 0.50 &           0.24074 &         & 0.24074 & 0.24074 &     1 \\
       &       & 1.00 &           0.24056 & 0.00097 & 0.23886 & 0.24125 &     5 \\
       & 8.82  & 0.50 &           0.24640 & 0.00011 & 0.24632 & 0.24647 &     2 \\
       &       & 1.00 &           0.25279 & 0.00019 & 0.25261 & 0.25306 &     4 \\
       & 17.65 & 0.50 &           0.24686 & 0.00025 & 0.24668 & 0.24704 &     2 \\
       &       & 1.00 &           0.25370 & 0.00039 & 0.25312 & 0.25394 &     4 \\
\bottomrule
\end{tabular}
\end{footnotesize}
\caption{Test set accuracy statistics for \DPSGDSix (1000 clients/round) and \DPGD (342,477 clients/round). Accuracy for \DPGD was estimated with 1000 clients/round with an appropriately scaled noise multiplier. The \texttt{count} columns gives the number of repeated trials of the given configuration, with $\eta_s$ indicating the server learning rate.}
\label{tab:sgd}
\end{centering}
\end{table}

\begin{table}[H]
\begin{centering}
\renewcommand{\arraystretch}{1}
\begin{footnotesize}
\begin{tabular}{lS[table-format=2.2]lrrrrr}
\toprule
               &          &      & \multicolumn{5}{c}{Test set accuracy} \\
 & $\varepsilon$ & $\eta_s$ &            mean &     std &     min &     max & count \\
\midrule
Honaker, 6e (Kairouz'21) & 2.00  & 0.25 &           0.20951 & 0.00134 & 0.20857 & 0.21046 &     2 \\
               &       & 0.50 &           0.19535 & 0.00114 & 0.19429 & 0.19655 &     3 \\
               &       & 1.00 &           0.00011 &         & 0.00011 & 0.00011 &     1 \\
               & 8.84  & 0.50 &           0.23952 & 0.00101 & 0.23881 & 0.24023 &     2 \\
               &       & 1.00 &           0.23799 & 0.00212 & 0.23649 & 0.23949 &     2 \\
               & 17.65 & 0.50 &           0.24422 & 0.00037 & 0.24383 & 0.24456 &     3 \\
               &       & 1.00 &           0.24675 & 0.00128 & 0.24540 & 0.24810 &     5 \\
MF1, 1e (Denisov'22) & 2.00  & 0.25 &           0.19674 & 0.00057 & 0.19633 & 0.19715 &     2 \\
               &       & 0.50 &           0.00093 & 0.00108 & 0.00017 & 0.00169 &     2 \\
               & 8.84  & 0.25 &           0.23500 & 0.00009 & 0.23493 & 0.23506 &     2 \\
               &       & 0.50 &           0.24105 & 0.00083 & 0.24019 & 0.24190 &     4 \\
               &       & 1.00 &           0.23909 & 0.00196 & 0.23767 & 0.24132 &     3 \\
               & 17.65 & 0.25 &           0.23576 & 0.00019 & 0.23562 & 0.23590 &     2 \\
               &       & 0.50 &           0.24503 & 0.00056 & 0.24408 & 0.24565 &     6 \\
               &       & 1.00 &           0.24522 & 0.00102 & 0.24424 & 0.24628 &     3 \\
MF1, 6e (Ours) & 2.00  & 0.25 &           0.22953 & 0.00046 & 0.22921 & 0.22985 &     2 \\
               &       & 0.50 &           0.22324 & 0.00017 & 0.22308 & 0.22341 &     3 \\
               &       & 1.00 &           0.00010 &         & 0.00010 & 0.00010 &     1 \\
               & 8.84  & 0.50 &           0.24516 &         & 0.24516 & 0.24516 &     1 \\
               &       & 1.00 &           0.24676 & 0.00044 & 0.24628 & 0.24714 &     3 \\
               & 17.65 & 0.50 &           0.24628 &         & 0.24628 & 0.24628 &     1 \\
               &       & 1.00 &           0.25194 & 0.00085 & 0.25124 & 0.25288 &     3 \\
MF6, 6e (Ours) & 2.00  & 0.25 &           0.22978 & 0.00038 & 0.22939 & 0.23014 &     3 \\
               &       & 0.50 &           0.23237 & 0.00078 & 0.23147 & 0.23286 &     3 \\
               &       & 1.00 &           0.22413 & 0.00055 & 0.22365 & 0.22472 &     3 \\
               & 8.84  & 0.50 &           0.24509 & 0.00010 & 0.24497 & 0.24515 &     3 \\
               &       & 1.00 &           0.24897 & 0.00081 & 0.24804 & 0.24955 &     3 \\
               & 17.65 & 0.50 &           0.24632 & 0.00015 & 0.24618 & 0.24648 &     3 \\
               &       & 1.00 &           0.25265 & 0.00025 & 0.25240 & 0.25291 &     3 \\

\bottomrule
\end{tabular}
\end{footnotesize}
\caption{Test set accuracy for matrix-factorization based mechanisms. Note: Some 6 epoch runs were conducted with shuffling between epochs, and some were conducted using a fixed order for all epochs (as required by our DP analysis). We saw no impact of reshuffling on the final test set accuracy, and so include all runs in these results regardless of the shuffling setting.}
\label{tab:mf}
\end{centering}
\end{table}

\subsection{Optimal matrix mechanisms}

\begin{figure}[t]
    \centering
    \includegraphics[width=\linewidth]{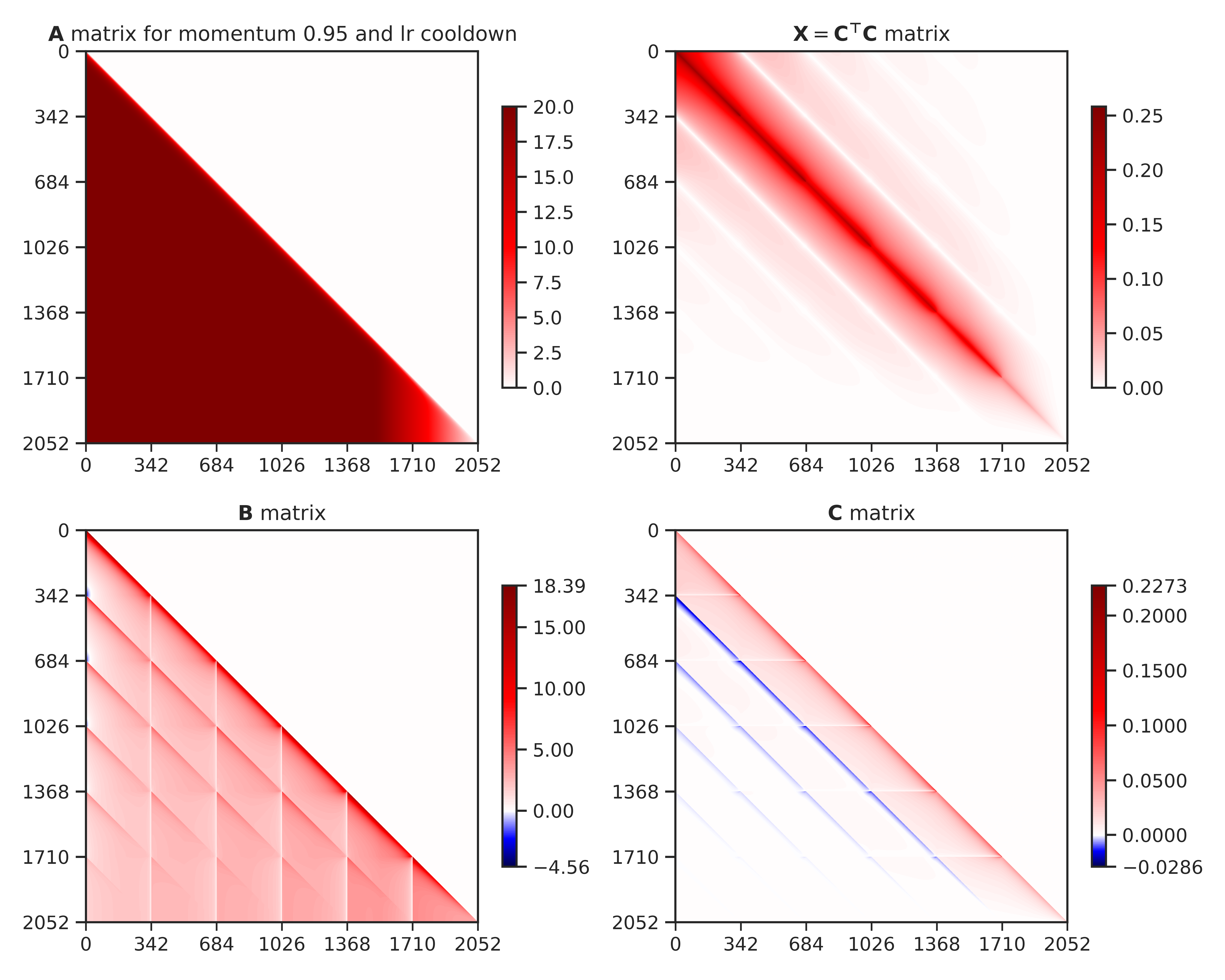}
    \vspace{-2em}
    \caption{A more detailed view of the matrices shown in \cref{fig:axbc_matrices_row}.}
    \label{fig:axbc_matrices}
    \vspace{-1em}
\end{figure}

\section{Limitations and Ethical Considerations}\label{app:sec:discussion-conclusion}
The major limitation of our approach is the computation required to generate the optimal matrices. Though our optimal FFT decoder bridges the gap between the mechanisms without optimizer costs and our optimal mechanism, it still leaves some room for improvement in the privacy-utility tradeoff. We believe this is an important area for future work.

Our work aims to enable privacy-preserving ML with rigorous DP guarantees in broader settings via improved (state-of-the-art) privacy-utility tradeoffs.
Though DP is the gold-standard in this area, it is currently impossible to train high-utility ML models with tight DP guarantees $<1$ in many, if not all, real-world settings without large amounts of public data. In this regime, it is not well-understood what privacy leakage may occur.

\newcommand{\hdeltaset}{\widehat{\deltaset}}
\newcommand{\hbfG}{\hat{\bfG}}
\newcommand{\kk}{{k \times k}}
\section{Analysis for \cref{sec:sensitivity}}
\label{sec:startofproofs}

\newcommand{\Gsq}{\bfG_{\sf sq}}
\renewcommand{\g}[2]{g\idx{#1}{#2}}
\newcommand{\cond}{\emph{(1)-(3)}\,}
\subsection{From scalar to vector contributions}
\begin{thm}\label{thm:multidim}
Let $\bfC \in \R^{\dimension \times k}$ which satisfies 
\[
\| \bfC \contrib \|_2 \le 1 \qquad \forall \contrib \in \{ -1, 1\}^k,
\]
and let $\bfG \in \R^{k \times \mdim}$ such that each row $\bfG\idx{i}{:}$ for $i \in [k]$ satisfies $\|\bfG\idx{i}{:}\|_2 \le 1$. Suppose at least one of the following conditions hold:
\begin{enumerate}
    \item We have $k = 1$ or $k = 2$ participations.
    \item All the entries of $\bfC^\top\bfC$ are non-negative.
    \item The rows of $\bfG$ are all co-linear, $\bfG_{[i,:]} = u_i\cdot  \bfG_{[1,:]}$ for $u_i \in \{-1, 1\}$, $i > 1$.
    \item The rows of $\bfG$ are all orthogonal, $\ip{\bfG_{[i,:]}}{\bfG_{[j,:]}} = 0$, $\forall i \neq j$, $i, j \in [k]$.
\end{enumerate}
Then,
\begin{equation}
\lfrob{ \bfC \bfG } \le 1.
\label{eq:th143}
\end{equation}
Furthermore, the following statements are also true without assuming conditions~\cond above.
\begin{itemize}
    \item $\lfrob{ \bfC \bfG } \leq \frac{\pi}{2}$.
    \item If we replace the condition on $\bfG$ to $\forall i \in [k], \|\bfG\idx{i}{:}\|_1 \le 1$, then $\lfrob{\bfC \bfG } \le 1$.
\end{itemize}
\label{thm:901r}
\end{thm}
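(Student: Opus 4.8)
The plan is to handle the two tail claims through a common reformulation in terms of $\bfX := \bfC\tp\bfC$, a PSD matrix whose hypothesis reads $\contrib\tp\bfX\contrib \le 1$ for all $\contrib \in \{-1,1\}^k$ (equivalently, since a convex quadratic is maximized over a box at a vertex, for all $\contrib\in[-1,1]^k$), together with the Gram matrix $\bfR$ of the rows of $\bfG$, $\bfR\idx{i}{j} := \ip{\bfG\idx{i}{:}}{\bfG\idx{j}{:}}$, which is PSD with $\bfR\idx{i}{i} = \ltwo{\bfG\idx{i}{:}}^2 \le 1$. Writing $\bfC\bfG = \sum_{i} \bfC\idx{:}{i}\,\bfG\idx{i}{:}$ as a sum of outer products of the $i$-th column of $\bfC$ with the $i$-th row of $\bfG$ yields the identity $\lfrob{\bfC\bfG}^2 = \sum_{i,j}\bfX\idx{i}{j}\bfR\idx{i}{j} = \ip{\bfX}{\bfR}_F$. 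So in both cases the task is to upper bound the Frobenius pairing of two PSD matrices -- the classical Grothendieck-type setting.

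For the $\ell_1$ variant I would argue by convexity and vertex enumeration. The map $\bfG \mapsto \lfrob{\bfC\bfG}$ is convex and the feasible set $\{\bfG : \lone{\bfG\idx{i}{:}} \le 1\ \forall i\}$ is a product of $\ell_1$-balls, so the maximum is attained at a vertex, i.e. where every row of $\bfG$ equals $\pm e_{l_i}$ for some $l_i \in [\mdim]$. For such a $\bfG$ the columns $\bfg_1,\dots,\bfg_\mdim$ of $\bfG$ have entries in $\{-1,0,1\}$ and pairwise-disjoint supports whose union is $[k]$. Drawing uniform signs $\epsilon\in\{-1,1\}^\mdim$ and setting $\bfg_\epsilon := \sum_j \epsilon_j \bfg_j$, each coordinate of $\bfg_\epsilon$ sits in the support of exactly one column, so $\bfg_\epsilon \in \{-1,1\}^k$ and thus $\ltwo{\bfC\bfg_\epsilon}^2 \le 1$. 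Expanding $\ltwo{\bfC\bfg_\epsilon}^2 = \ltwo{\sum_j \epsilon_j \bfC\bfg_j}^2$ and taking $\E_\epsilon$ kills the cross terms, so $\lfrob{\bfC\bfG}^2 = \sum_j \ltwo{\bfC\bfg_j}^2 = \E_\epsilon \ltwo{\bfC\bfg_\epsilon}^2 \le 1$. The only care needed is the standard fact that extreme points of a product polytope are products of extreme points.

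For the $\frac{\pi}{2}$ bound I would run the random-hyperplane rounding (little-Grothendieck) argument. First normalize: with $D := \diag(\ltwo{\bfG\idx{i}{:}})$ and unit vectors $\hat{\bfr}_i := \bfG\idx{i}{:}/\ltwo{\bfG\idx{i}{:}}$ (dropping any zero rows), we have $\ip{\bfX}{\bfR}_F = \ip{D\bfX D}{\hat{\bfR}}_F$ where $\hat{\bfR}\idx{i}{j} = \ip{\hat{\bfr}_i}{\hat{\bfr}_j}$ is PSD with unit diagonal, and $D\bfX D$ still obeys the box constraint since $\linfty{D\sigma} \le \linfty{\sigma}$. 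Now draw $\bfh \sim \calN(0,\bfI_\mdim)$ and set $\sigma_i := \sign(\ip{\bfh}{\hat{\bfr}_i}) \in \{-1,1\}$; Grothendieck's identity gives $\E[\sigma_i\sigma_j] = \frac{2}{\pi}\arcsin(\hat{\bfR}\idx{i}{j})$. The Taylor series $\arcsin(t)-t = \sum_{n\ge1} c_n t^{2n+1}$ has $c_n\ge 0$, so by the Schur product theorem the entrywise-applied matrix $\arcsin^{\circ}(\hat{\bfR}) - \hat{\bfR}$ is a convergent sum of PSD matrices, hence PSD; as $D\bfX D \succeq 0$ this gives $\E[\sigma\tp(D\bfX D)\sigma] = \frac{2}{\pi}\ip{D\bfX D}{\arcsin^{\circ}(\hat{\bfR})}_F \ge \frac{2}{\pi}\ip{D\bfX D}{\hat{\bfR}}_F$. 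But every $\sigma\in\{-1,1\}^k$ gives $\sigma\tp(D\bfX D)\sigma \le 1$, so $\ip{D\bfX D}{\hat{\bfR}}_F \le \frac{\pi}{2}$, i.e. $\lfrob{\bfC\bfG}^2 \le \frac{\pi}{2}$ and in particular $\lfrob{\bfC\bfG} \le \sqrt{\pi/2} \le \frac{\pi}{2}$.

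The main obstacle is the $\frac{\pi}{2}$ bound: one must set up the Gaussian rounding carefully -- in particular the normalization reduction, so that $\arcsin$ acts on a bona fide correlation matrix with unit diagonal and the $\ip{\cdot}{\cdot}_F$ inequality points the right way -- and supply the Schur-product-theorem fact $\arcsin^{\circ}(\hat{\bfR}) \succeq \hat{\bfR}$ that converts the arcsine distortion into the constant $\frac{\pi}{2}$. By contrast the $\ell_1$ claim is routine once one passes to the vertices of the product of $\ell_1$-balls and applies the Rademacher-averaging trick to recover $\{-1,1\}$-valued inputs.
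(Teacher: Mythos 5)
Your proposal only addresses the two ``furthermore'' bullets of the theorem; the main claim --- that $\lfrob{\bfC\bfG}\le 1$ under \emph{any one} of conditions (1)--(4) --- is never argued, and this is the bulk of the statement and the part the paper actually relies on downstream (via \cref{cor:matrix_to_vector_sens}, condition (2) in particular). That omission is a genuine gap: the identity you set up, $\lfrob{\bfC\bfG}^2=\ip{\bfC\tp\bfC}{\bfG\bfG\tp}_F$ with $\bfX=\bfC\tp\bfC$ PSD and $\bfR=\bfG\bfG\tp$ having entries in $[-1,1]$ and diagonal at most $1$, is exactly the right starting point for those cases (the paper's proof proceeds from the same trace identity: for (2) the pairing is maximized at $\bfR=\mathbf{1}\mathbf{1}\tp$; for (3) one has $\bfR=\bfu\bfu\tp$ with $\bfu\in\{\pm1\}^\maxpart$; for (4) $\bfR$ is diagonal and one constructs signs greedily so that $\tr(\bfX)\le\max_{\bfu\in\{\pm1\}^\maxpart}\bfu\tp\bfX\bfu$; and $k\le 2$ is an AM--GM computation), but none of this is carried out in your writeup, so as it stands the theorem's main assertion is unproven.

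The two parts you do prove are correct, and in fact follow routes that differ from (and in places improve on) the paper's. For the $\ell_1$ variant, your reduction to vertices of the product of $\ell_1$-balls followed by Rademacher averaging over $\bfg_\epsilon=\sum_j\epsilon_j\bfg_j\in\{-1,1\}^k$ cleanly kills the cross terms and gives $\lfrob{\bfC\bfG}^2=\sum_j\ltwo{\bfC\bfg_j}^2\le 1$; the paper instead invokes the extreme-point characterization of row-stochastic matrices and asserts a single sign vector $\bfu_\bfH$ with $\lfrob{\bfC\bfH}=\ltwo{\bfC\bfu_\bfH}$, a step your averaging argument makes rigorous without that assertion. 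For the $\pi/2$ claim, the paper cites Grothendieck's inequality with the sharp symmetric constant; you instead give a self-contained little-Grothendieck argument (Gaussian sign rounding, Grothendieck's identity, and PSD-ness of $\arcsin^{\circ}(\hat{\bfR})-\hat{\bfR}$ via the Schur product theorem), correctly exploiting that $\bfX\succeq 0$, and you in fact obtain the stronger conclusion $\lfrob{\bfC\bfG}^2\le\pi/2$, which implies the stated bound since $\sqrt{\pi/2}\le\pi/2$. So the remedy is not to change your approach but to supply the missing proofs of conditions (1)--(4), which your own reformulation already positions you to do.
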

\mypar{Note} \cref{thm:multidim} is generally applied to $\bfC\idx{:}{\pi}$, the sub-matrix of some $\bfC \in \R^\dimdim$ formed by keeping only columns selected by a particular participation pattern $\pi$.

\begin{proof}[Proof \cref{thm:901r}]
Let $\bfC=\begin{bmatrix} \bfc_1 & \bfc_2 & & \cdots & \bfc_k \end{bmatrix} $ with each $\bfc_i\in \mathbb{R}^n$ being a column vector. Also let we write $\g{i}{j}$ for the $(i,j)$-th entry of $\bfG$. It will also be useful to note
\begin{equation}\label{eq:traceCG}
\lfrob{\bfC\bfG}^2 
  = \tr\big(\bfC \bfG (\bfC \bfG)\tp \big)
  = \tr\big(\bfC\tp \bfC \bfG\bfG\tp \big).
\end{equation}
In the following we prove each of the individual cases of Theorem~\ref{thm:901r}.

\mypar{When $k=1$ or $k=2$:} For $k=1$, we have
\begin{equation*}
    \lfrob{\bfC\bfG}^2=\ltwo{\bfc_1}^2\left(\sum\limits_{j=1}^d\g{1}{j}\right)^2\leq\ltwo{\bfc_1}^2=\max\limits_{u\in\{\pm 1\}}\ltwo{u\cdot\bfc_1}^2.%
\end{equation*}
An equivalent argument is used in \citet[Thm. 3.1]{denisov22matfact}.

For $k=2$, we have the following:
\begin{align}
    \lfrob{\bfC\bfG}^2&=\sum_{i=1}^2\ltwo{\bfc_i}^2\cdot\left(\sum_{j=1}^d\g{i}{j}^2\right) 
         + \left(2\g{1}{1}\g{2}{1}\ip{\bfc_1}{\bfc_2}\right)
         + \cdots +
         {\left(2\g{1}{d}\g{2}{d}\ip{\bfc_1}{\bfc_2}\right)}\nonumber\\
         &\leq \sum_{i=1}^2\ltwo{\bfc_i}^2+\left(2|\g{1}{1}|\cdot|\g{2}{1}||\ip{\bfc_1}{\bfc_2}|\right)+\cdots+\left(2|\g{1}{d}|\cdot|\g{2}{d}||\ip{\bfc_1}{\bfc_2}|\right)\nonumber\\
         &\leq \sum_{i=1}^2\ltwo{\bfc_i}^2+\left(\g{1}{1}^2+\g{2}{1}^2\right)|\ip{\bfc_1}{\bfc_2}|+\cdots+\left(\g{1}{d}^2+\g{2}{d}^2\right)|\ip{\bfc_1}{\bfc_2}| \label{eq:p9013} \\
         &=\ltwo{\bfc_1}^2+\ltwo{\bfc_2}^2+2|\ip{\bfc_1}{\bfc_2}|
          =\max\left\{\left(\bfc_1+\bfc_2\right)^2,\left(\bfc_1-\bfc_2\right)^2\right\} \nonumber\\
          &\leq \max\limits_{\bfu\in\{\pm 1\}^2}\ltwo{\bfC\bfu}^2\leq 1,\nonumber %
\end{align}
where \cref{eq:p9013} follows from the standard $\texttt{A.M.}\geq\texttt{G.M.}$ inequality. %

\mypar{All the entries of $\bfC^\top\bfC$ are non-negative:} Let $\bfX = \bfC\tp\bfC$ and $\hbfG = \bfG \bfG\tp$. Observe  $\hbfG\idx{i}{j}\in [-1, 1]$, and using \cref{eq:traceCG}, when $\bfX$ is elementwise non-negative, $\tr\big(\bfX \hbfG\big)$ is maximized when $\hbfG = \mathbf{1}^{k \times k} = \widehat{\bfu} \widehat{\bfu}\tp$ by choosing $\widehat{\bfu} = \mathbf{1}^k$. Hence,
\begin{equation}
    \lfrob{\bfC\bfG}\leq \tr\big(\bfC^\top\bfC\widehat{\bfu} \widehat{\bfu}\tp\big)=\ltwo{\bfC\widehat{\bfu}}\leq \max\limits_{\bfu\in\{\pm 1\}^k}\ltwo{\bfC\bfu}^2.\label{eq:p9001}
\end{equation}
\cref{eq:p9001} completes the proof.

\mypar{The rows of $\bfG$ are co-linear:}
By the convexity of $\lfrob{\bfC\bfG}^2$ with respect to the matrix $\bfG$, we  may assume the rows of $\bfG$ are of $\ell_2$ norm 1. Under the colinearity assumption, this translates to $\bfG_{[i,:]} = u_i \bfG_{[1,:]}$, with each $u_i\in\{\pm 1\}$. Let $\bfu = [u_1, \dots, u_k] \in \{-1, 1\}^k$. Then for the matrix $\bfG\bfG^\top$ we have the following:
\begin{equation*}    
  [\bfG \bfG\tp]\idx{i}{j} 
    = \ip{\bfG_{[i,:]}}{\bfG_{[j,:]}} 
    = u_i u_j \ip{\bfG_{[1,:]}}{\bfG_{[1,:]}} = u_i u_j.%
\end{equation*}
which implies
\begin{equation}\label{eq:GGuu}
    \bfG \bfG\tp = \bfu \bfu\tp.
\end{equation}
Using \cref{eq:GGuu} with \cref{eq:traceCG}, we have
\[
\lfrob{\bfC\bfG}^2 
  = \tr\big(\bfC \bfG (\bfC \bfG)\tp\big)
  = \tr\big(\bfC\tp \bfC \bfu\bfu\tp)\leq\max\limits_{\bfu\in\{\pm 1\}^k}\ltwo{\bfC\bfu}^2 \le 1. 
\]

\mypar{The rows of $\bfG$ are all orthogonal}
This condition implies $\hbfG = \bfG \bfG\tp$ is a diagonal matrix with diagonal entries in $[0, 1]$, and so \cref{eq:traceCG} implies $\lfrob{\bfC \bfG}^2 \le \tr(\bfX)$. It is thus sufficient to show
\[ \tr(\bfX) \le \max_{\bfu \in \{-1, 1\}^k} \tr(\bfX \bfu \bfu\tp).
\]
We give a construction for a $\bfu$ that shows this. Observe
\begin{align*}
\tr(\bfX \bfu \bfu\tp) 
  &= \tr(\bfX) + 2\sum_{i=1}^k u_i 
     \underbrace{\sum_{j=1}^{i-1} u_j \bfX\idx{i}{j}}_{b_i}. 
\end{align*}
Observe we can choose $u_1 = 1$ and then $u_i = \text{sign}(b_i)$ since $b_i$ depends only on $\bfC$ and the previously fixed $u_j$ for $j < i$, ensuring the double sum on the right is non-negative, completing the proof.

\mypar{$\lfrob{\bfC\bfG} \leq \pi / 2$ without assuming conditions (1)-(4):}

This argument is essentially due to~\citep{denisov_grothendieck}; we inline here for completeness.

We begin by noting that the conditions on $\bfC$ imply that $\bfX := \bfC^\top \bfC$ has $\ell_\infty$-to-$\ell_1$ operator norm 1. This follows from duality. Denote by $\| \cdot \|_{p, q}$ the norm of a matrix as an operator on vectors from $\ell_p$ to $\ell_q$. By assumption,

$$\sup_{\contrib: \|\contrib\|_\infty \leq 1} \|\bfC\contrib\|_2^2 \leq 1,$$
which is to say that $\|\bfC\|_{\infty, 2}  \leq 1$. Duality, therefore, implies that $\|\bfC^\top\|_{2, 1} \leq 1$, and $\|\bfX\|_{\infty, 1} \leq 1$.

Now, for $\bfX = \left(x_{ij}\right),$

\begin{equation}\label{eq:grothendieck_trace_expansion}
\|\bfC\bfG\|_F = \tr\left(\bfX\bfG\bfG^\top\right) = \sum_{i, j=1}^k x_{ij}\langle\bfg_j, \bfg_i\rangle.
\end{equation}

Bounding equation~\cref{eq:grothendieck_trace_expansion} in terms of the $\ell_\infty$-to-$\ell_1$ operator norm of $\bfX$ is the subject of a famous result of Grothendieck~\citep{grothendieck1956resume}; e.g., as stated in~\citep{Lindenstrauss1968}, Grothendieck's inequality may be states precisely as:

\begin{equation}\label{eq:grothendieck_bound}
     \sum_{i, j=1}^k x_{ij}\langle\bfg_j, \bfg_i\rangle \leq K \|\bfX\|_{\infty, 1},
\end{equation}

where $K$ (known as the Grothendieck constant) is independent of $\mdim$, but not known exactly in general.

However, in our case, our matrix $\bfX$ is symmetric by construction; in the case of symmetric $\bfX$ the constant $K$ may be replaced by $\pi / 2$ (see Eq. 43 of \citep{khot2011grothendiecktype}), and this constant is known to be sharp (IE, the best constant which holds for all symmetric matrices $\bfX$ and in all dimensions $\mdim$).

\mypar{Replacing the condition on $\bfG$ to $\forall i \in [k], \|\bfG\idx{i}{:}\|_1 \le 1$, then $\lfrob{\bfC \bfG } \le 1$:} First notice that since $\lfrob{\bfC\bfG}^2$ is a convex function, the maximum happens at the extreme points of the constraint set $\calG = \{\bfG \mid \forall i \in [k], \|\bfG\idx{i}{:}\|_1 = 1\}$. We use \cref{lem:rowstoc} to identify the extreme points of $\calG$.

\begin{claim}[Theorem 1 in~\citet{cao2022centrosymmetric}]
The set of extreme points of the set of $k\times d$ row-stochastic matrices are precisely the set of row permutation matrices, i.e., set of the matrices with entries in $\{0,1\}^{k \times d}$ and each row has exactly one non-zero entry.
\label{lem:rowstoc}
\end{claim}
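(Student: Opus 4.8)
The plan is to recognize the set $\calS$ of $k \times d$ row-stochastic matrices as a Cartesian product of $k$ copies of the probability simplex $\Delta = \{x \in \R^d : x_i \ge 0,\ \textstyle\sum_{i=1}^d x_i = 1\}$, one factor per row of the matrix, and then to combine the classical description of $\mathrm{ext}(\Delta)$ with the elementary fact that extreme points of a product of convex sets are precisely the products of extreme points.

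First I would establish $\mathrm{ext}(\Delta) = \{e_1, \dots, e_d\}$. For the inclusion $\subseteq$: any $x \in \Delta$ is literally the convex combination $\sum_{j} x_j e_j$ of the $e_j$, so if $x$ is extreme it must coincide with one of the $e_j$. For $\supseteq$: if $e_j = \lambda y + (1-\lambda) z$ with $y, z \in \Delta$ and $\lambda \in (0,1)$, comparing $j$-th coordinates gives $1 = \lambda y_j + (1-\lambda) z_j$ with $y_j, z_j \le 1$, which forces $y_j = z_j = 1$ and hence, since the remaining coordinates of $y$ and $z$ are nonnegative and sum to $0$, $y = z = e_j$.

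Next I would invoke the fact that for convex sets $K_1, \dots, K_k$ one has $\mathrm{ext}(K_1 \times \cdots \times K_k) = \mathrm{ext}(K_1) \times \cdots \times \mathrm{ext}(K_k)$. The reverse containment: if each $p_i$ is extreme and $(p_1, \dots, p_k) = \tfrac12(a+b)$ with $a, b$ in the product, then coordinatewise $p_i = \tfrac12(a_i + b_i)$ forces $a_i = b_i = p_i$, so $a = b$. The forward containment, in contrapositive form: if some $p_i$ is not extreme, write $p_i = \tfrac12(q + r)$ with $q \ne r$ in $K_i$, perturb only that coordinate, and exhibit $(p_1, \dots, p_k)$ as a nontrivial midpoint. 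Identifying a $k\times d$ matrix $\bfG$ with the tuple of its rows, $\bfG \in \calS$ iff each row lies in $\Delta$; therefore $\mathrm{ext}(\calS)$ consists exactly of the matrices each of whose rows equals some standard basis vector $e_j$, i.e.\ the $\{0,1\}$-valued matrices with exactly one nonzero entry per row, which is the asserted characterization. (The paper's term ``row permutation matrix'' should be read in this sense, with no requirement that $k = d$.)

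Every step above is routine, so there is no real obstacle; the only thing to be careful about is the precise statement being cited. If a fully self-contained argument avoiding the product-of-simplices abstraction is preferred, the direction ``every extreme point is a row-permutation matrix'' can be shown directly: if $\bfG$ is row-stochastic but some row $i$ has two positive entries in columns $p, q$, set $\epsilon = \min(\bfG_{[i,p]}, \bfG_{[i,q]}) > 0$ and let $\bfG^{\pm}$ agree with $\bfG$ except that row $i$ becomes $\bfG_{[i,:]} \pm \epsilon(e_p - e_q)$; both matrices remain row-stochastic, $\bfG = \tfrac12(\bfG^+ + \bfG^-)$, and $\bfG^+ \ne \bfG^-$, so $\bfG$ is not extreme. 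Conversely, a matrix whose every row is a basis vector is extreme by the row-by-row argument of the previous paragraph. I would present whichever version best fits the exposition; since the result is attributed to \citet{cao2022centrosymmetric}, a short self-contained paragraph along these lines suffices.
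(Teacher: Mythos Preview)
Your argument is correct, but there is nothing in the paper to compare it against: the paper does not prove this claim. It is simply cited as Theorem~1 of Cao et al.\ and invoked as a black box inside the proof of the $\ell_1$-row-norm case of the scalar-to-vector reduction theorem. Your approach---identifying the row-stochastic polytope with a $k$-fold Cartesian product of probability simplices, characterizing $\mathrm{ext}(\Delta)=\{e_1,\dots,e_d\}$, and using that extreme points of a product are products of extreme points---is the standard route and is fully sound; the alternative direct perturbation argument you sketch is equally valid. Either version would serve as a clean self-contained supplement to the citation.
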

Notice that the constraint on any matrix $\bfG\in\calG$ is oblivious to the sign, meaning, we can flip the sign of any set of entries in $\bfG$ and the new matrix will still be in $\calG$. This along with \cref{lem:rowstoc} immediately implies that the set $\calH=\{H\in\{-1,0,1\}^{k\times d}: \forall i\in[k], \|\bfH_{[i,:]}\|_0=\linfty{\bfH_{[i,:]}}=1\}$ is the set of extreme points of the set $\calG$. (If the set of extreme points of $\calG$ is larger than $\calH$, then the signs of any such extreme point can be flipped to create a new extreme point of row-stochastic matrices, which would violate \cref{lem:rowstoc}.)

It is not hard to observe that for any $H\in\calH$, there exists an $\bfu_{\bfH}\{\pm 1\}^k$ s.t. $\lfrob{\bfC\bfH}=\ltwo{\bfC\bfu_{\bfH}}$. Since, $\ltwo{\bfC\bfu_{\bfH}}\leq \max\limits_{\bfu\in\{\pm 1\}^k}\ltwo{\bfC\bfu}$ for any choice of $\bfu_{\bfH}$, and the fact that $\max\limits_\bfG\lfrob{\bfC\bfG}$ is reached at one of the matrices in $\calH$, the claim in Theorem~\ref{thm:901r} follows.
\end{proof}

\subsection{A counterexample for general $\bfC$}
\label{sec:counterexample}

Theorem~\ref{thm:901r} indicates the possibility of the following conjecture being true, because of it being true in so many special cases: If $\max\limits_{\bfu\in\{\pm 1\}^k}\ltwo{\bfC\bfu}\leq 1$, then $\lfrob{\bfC\bfG}\leq 1$ for all $\bfG$ with row $\ell_2$-norm at most one. Unfortunately, we show that the conjecture is not true when $k>2$, as shown by the following counterexample with $\dimension = k = 3$ and $\mdim = 2$:%

\[
\bfC = \frac{1}{\sqrt{24}}\begin{bmatrix}
2 & 1 &  1 \\
1 & 2 & -1 \\
1 &-1 &  2 
\end{bmatrix}
\quad \text{and} \quad
\bfG = \frac{1}{\sqrt{5}}\begin{bmatrix}
2 &  1 \\
2 & -1 \\
1 &  2
\end{bmatrix}
\]
Direct calculation shows  $\max_{\bfu\in\{\pm 1\}^k} \ltwo{\bfC\bfu} = 1$, but $\lfrob{\bfC\bfG} = \sqrt{1.1} \approx 1.049$.

\subsection{Proof of \cref{cor:matrix_to_vector_sens}}
\begin{namedcorollary}[\cref{cor:matrix_to_vector_sens}]
When per-step contributions are bounded by $\clipnorm=1$, for any participation pattern $\Pi$ and dimensionality $\mdim \ge 1$, when $\bfC\tp\bfC \ge 0$ elementwise, we have
\[
\sens_{\deltaset^\mdim_\Pi}(\bfC) = \sens_{\deltaset^1_\Pi}(\bfC).
\]
\end{namedcorollary}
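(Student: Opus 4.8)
The plan is to prove the two inequalities $\sens_{\deltaset^1_\Pi}(\bfC)\le\sens_{\deltaset^\mdim_\Pi}(\bfC)$ and $\sens_{\deltaset^\mdim_\Pi}(\bfC)\le\sens_{\deltaset^1_\Pi}(\bfC)$ separately. The first is an elementary embedding argument that needs no hypothesis on $\bfC$; the second is exactly where $\bfC\tp\bfC\ge 0$ enters, by invoking \cref{thm:multidim} applied to a column submatrix of $\bfC$, as anticipated in the Note following that theorem.

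\textbf{Scalar $\le$ vector.} Here I would take any corner $\contrib\in\Dcorners^1_\Pi$, supported on some $\pi\in\Pi$ with entries in $\{-1,1\}$ on $\pi$, and build $\bfG\in\R^\datadim$ by placing $\contrib_i$ in the first coordinate of row $i$ for $i\in\pi$ and zero elsewhere. Then every nonzero row of $\bfG$ has unit $\ell_2$ norm and the nonzero rows are exactly those indexed by $\pi$, so $\bfG$ is one of the generators of $\deltaset^\mdim_\Pi$; moreover $\bfC\bfG$ has first column $\bfC\contrib$ and all other columns zero, so $\lfrob{\bfC\bfG}=\ltwo{\bfC\contrib}\le\sens_{\deltaset^\mdim_\Pi}(\bfC)$. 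Taking the supremum over $\contrib\in\Dcorners^1_\Pi$ — which equals $\sens_{\deltaset^1_\Pi}(\bfC)$ by convexity of $\contrib\mapsto\ltwo{\bfC\contrib}$ and $\deltaset^1_\Pi=\conv(\Dcorners^1_\Pi)$ — gives the first inequality.

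\textbf{Vector $\le$ scalar.} By convexity of $\bfG\mapsto\lfrob{\bfC\bfG}$, the supremum defining $\sens_{\deltaset^\mdim_\Pi}(\bfC)$ is attained at a generator of $\deltaset^\mdim_\Pi$, i.e.\ a matrix $\bfG$ whose nonzero rows are indexed by a single $\pi\in\Pi$ and each have $\ell_2$ norm at most $1$. Fixing such a $\pi$, I would write $\bfC\idx{:}{\pi}\in\R^{\dimension\times|\pi|}$ for the corresponding column submatrix and $\bfG'\in\R^{|\pi|\times\mdim}$ for the nonzero rows of $\bfG$, so $\bfC\bfG=\bfC\idx{:}{\pi}\bfG'$, and set $\lambda_\pi:=\max_{\bfu\in\{-1,1\}^{|\pi|}}\ltwo{\bfC\idx{:}{\pi}\bfu}$; extending each signed vector by zeros gives an element of $\Dcorners^1_\Pi$, so $\lambda_\pi\le\sens_{\deltaset^1_\Pi}(\bfC)$. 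If $\lambda_\pi=0$, then since the sign vectors span $\R^{|\pi|}$ we get $\bfC\idx{:}{\pi}=0$ and $\lfrob{\bfC\bfG}=0$ trivially; otherwise I apply \cref{thm:multidim} under condition (2) to the matrix $\bfC\idx{:}{\pi}/\lambda_\pi$: its Gram matrix is $(\bfC\idx{:}{\pi})\tp\bfC\idx{:}{\pi}/\lambda_\pi^2$, which is the principal $\pi\times\pi$ submatrix of $\bfC\tp\bfC$ rescaled and hence elementwise nonnegative, and by construction $\ltwo{(\bfC\idx{:}{\pi}/\lambda_\pi)\bfu}\le1$ for all $\bfu\in\{-1,1\}^{|\pi|}$, while the rows of $\bfG'$ have $\ell_2$ norm at most $1$. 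The theorem then yields $\lfrob{(\bfC\idx{:}{\pi}/\lambda_\pi)\bfG'}\le1$, i.e.\ $\lfrob{\bfC\bfG}\le\lambda_\pi\le\sens_{\deltaset^1_\Pi}(\bfC)$. Taking the supremum over generators proves the second inequality, and combining the two inequalities finishes the corollary.

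\textbf{Expected main obstacle.} Once \cref{thm:multidim} is in hand there is essentially no hard step remaining — the real content (the identity $\lfrob{\bfC\bfG}^2=\tr(\bfC\tp\bfC\,\bfG\bfG\tp)$ and the optimization over $\pm1$ patterns) is already carried by that theorem. The only points requiring care are (i) reducing both suprema to the generators / corners via convexity of the Frobenius objective, (ii) choosing the normalization $\lambda_\pi$ so that the hypothesis $\ltwo{\bfC'\bfu}\le1$ of \cref{thm:multidim} holds verbatim on the column submatrix $\bfC\idx{:}{\pi}$, and (iii) observing that elementwise nonnegativity of $\bfC\tp\bfC$ is inherited by every principal submatrix $(\bfC\idx{:}{\pi})\tp\bfC\idx{:}{\pi}$, together with dispatching the degenerate case $\lambda_\pi=0$.
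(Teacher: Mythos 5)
Your proposal is correct and follows essentially the same route as the paper's proof: the embedding of a scalar corner into the first coordinate of each row for $\sens_{\deltaset^1_\Pi}(\bfC)\le\sens_{\deltaset^\mdim_\Pi}(\bfC)$, and an application of \cref{thm:multidim} under condition (2) to the column submatrices $\bfC\idx{:}{\pi}$, using that elementwise nonnegativity of $\bfC\tp\bfC$ is inherited by these submatrices, for the reverse inequality. Your extra care with the normalization $\lambda_\pi$, the degenerate case $\lambda_\pi=0$, and the convexity-based reduction to generators only spells out details the paper leaves implicit.
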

\begin{proof}
To see $\sens_{\deltaset^\mdim_\Pi}(\bfC) \ge \sens_{\deltaset^1_\Pi}(\bfC)$, suppose $\bfu^\star = \argmax_{u \in \deltaset^1_\Pi} \norm{\bfC \bfu}_2$, and observe we can construct a $\bfG$ such that $\lfrob{\bfC \bfG} = \norm{\bfC \bfu^\star}_2$ by taking rows $\bfG\idx{i}{:} = (\bfu^\star_i, 0, \dots, 0) \in \R^\mdim$ for $i \in [\dimension]$.

For the other direction, for each $\pi \in \Pi$, we apply \cref{thm:multidim} to the matrix $\bfC_\pi = \bfC\idx{:}{\pi}$, and observe $\bfC\tp\bfC \ge 0$ is sufficient to imply $\bfC\tp_\pi \bfC_\pi \ge 0$.
\end{proof}

\mypar{Note} The condition $\bfC\tp\bfC \ge 0$ is sufficient but not in fact necessary for \cref{cor:matrix_to_vector_sens} to hold. In particular, for \multiepoch-participation $\Pi$, the sub-matrices $\bfC\tp_\pi \bfC_\pi$ for $\pi \in \Pi$ ``touch'' only $\maxpart^2 \assumedsep$ entries of the $\dimension^2 = \maxpart^2 \assumedsep^2$ entries of $\bfC\tp \bfC$; the other entries of $\bfC\tp \bfC$ could in fact be negative. However, we did not need to use this observation for any of the matrices in our experiments.

\subsection{Proof of \cref{thm:eval_sens_ub}}\label{app:ssec:eval_sens_ub_proof}

\begin{namedtheorem}[\cref{thm:eval_sens_ub}]
Let $\bfC \in \R^\dimdim$, and take some participation pattern $\Pi$, with $\maxpart = \max_{\pi \in \Pi} |\pi|$ the maximum number of participations. 
With $\bfC\idx{:}{\pi}$ representing to selecting the columns of the matrix $\bfC$ indexed by $\pi$ and $\ltwo{\cdot}$ the spectral matrix norm, let
$
\lambda=\max\limits_{\pi \in \Pi}\ltwo{\bfC\idx{:}{\pi}}
$. Then 
\[
 \sens_{\deltaset^1_\Pi}(\bfC) \le \lambda\sqrt{k}.
\]
\end{namedtheorem}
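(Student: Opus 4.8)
The plan is to note that this is an immediate consequence of submultiplicativity of the spectral norm, once the supremum defining sensitivity is reduced to a maximum over the finitely many sign-vector corners in $\Dcorners^1_\Pi$. First I would invoke the convexity observation made just after \cref{def:sensopmech} (and again in \cref{eq:l_expression}): since $\deltaset^1_\Pi = \conv(\Dcorners^1_\Pi)$ and $\contrib \mapsto \|\bfC\contrib\|_2$ is convex, the supremum is attained at a vertex, so
$
\sens_{\deltaset^1_\Pi}(\bfC) = \max_{\contrib \in \Dcorners^1_\Pi} \|\bfC\contrib\|_2 .
$

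Next, fix any corner $\contrib \in \Dcorners^1_\Pi$. By the definition of $\Dcorners^1_\Pi$ there is a participation pattern $\pi \in \Pi$ such that $\contrib$ is supported on $\pi$ and has every coordinate in $\{-1,1\}$ on that support; let $\contrib' \in \{-1,1\}^{|\pi|}$ denote its restriction to $\pi$. The zero coordinates of $\contrib$ only delete the corresponding columns of $\bfC$, so $\bfC\contrib = \bfC\idx{:}{\pi}\,\contrib'$, and submultiplicativity of the $\ell_2 \to \ell_2$ operator norm gives
\[
\|\bfC\contrib\|_2 = \|\bfC\idx{:}{\pi}\,\contrib'\|_2 \le \ltwo{\bfC\idx{:}{\pi}}\,\|\contrib'\|_2 = \ltwo{\bfC\idx{:}{\pi}}\,\sqrt{|\pi|} \le \lambda\sqrt{\maxpart},
\]
using $\|\contrib'\|_2^2 = |\pi| \le \maxpart$ (each entry is $\pm 1$) and $\ltwo{\bfC\idx{:}{\pi}} \le \lambda$ by the definition of $\lambda$. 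Taking the max over $\contrib \in \Dcorners^1_\Pi$ yields $\sens_{\deltaset^1_\Pi}(\bfC) \le \lambda\sqrt{\maxpart}$.

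There is no genuinely hard step; the only point worth flagging is where the inequality is loose, which is also why the result is only an upper bound. We discard the combinatorial constraint that $\contrib'$ must be a $\pm 1$ vector — in general it will not be aligned with the leading right singular vector of $\bfC\idx{:}{\pi}$ — and we replace each $\ltwo{\bfC\idx{:}{\pi}}$ by its worst value over $\pi \in \Pi$. For \multiepoch-participation there are only $|\Pi| = \assumedsep$ distinct column blocks $\bfC\idx{:}{\pi}$, each of size $\dimension \times \maxpart$, so computing $\lambda$ costs $\calO(\assumedsep\maxpart^3)$ via a top-eigenvalue computation on the $\maxpart \times \maxpart$ Gram matrices $\bfC\idx{:}{\pi}\tp\bfC\idx{:}{\pi}$, matching the claim in the text.
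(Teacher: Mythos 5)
Your argument is correct and is essentially the paper's own proof: the paper likewise bounds $\sens_{\deltaset^1_\Pi}(\bfC)$ by restricting to the support $\pi$, applying the spectral norm of $\bfC\idx{:}{\pi}$, and using $\ltwo{\contrib}\le\sqrt{\maxpart}$, just stated more tersely (without the explicit reduction to the corners of $\Dcorners^1_\Pi$, which is harmless but not needed for an upper bound). No gaps; your complexity remark also matches the discussion in the text.
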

\btodo{I think we should be able to extend this to $\sens_{\deltaset^\mdim_\Pi}(\bfC) \le \lambda\sqrt{k}$ using straightforward properties of matrix norms.}

\begin{proof}
By assumption we have $\norm{\bfu} \le \sqrt{k}$, and so
\begin{equation*}
    \max_{\contrib \in \deltaset} \ltwo{\bfC\contrib}
    \leq \max_{\pi \in \Pi}\max_{\contrib \in \deltaset} \ltwo{\bfC[:,\pi]}\ltwo{\contrib}
    \leq \lambda\sqrt{k}.
\end{equation*}
\end{proof}

\section{Analysis for \cref{sec:optimizing_mechanisms}}

\subsection{Proof of~\cref{thm:dual_fn}}\label{app:ssec:dual-proof-multipliers-proof}
\begin{namedtheorem}[\cref{thm:dual_fn}]
Let a finite $\deltaset = \{\contrib_i\}_{i=1}^k$ be given, and assume that the vectors $\{\contrib_i\}_{i=1}^k$ span $\R^n$. Assume that $\bfA$ is full-rank, and for $\dualv \in \R^k$ define 
\[
\Hv = [\contrib_1, \dots, \contrib_k] \diag(\dualv)^{1/2}, \quad \bfU = \Hv\Hv\tp.
\]
Then, for Lagrange multipliers $\bfv$ such that the $\bfU$ is full-rank, the Lagrange dual function $g$ can be expressed in closed form in terms of the Lagrange multipliers: 
\begin{equation}
g(\dualv) := \inf_{\bfX\text{ is PD}} L(\bfX, \dualv) = 2\,\tr \left( \big(\bfU\hlf \AAT \bfU\hlf \big)\hlf \right) - \sum_{\contrib\in\deltaset} \dualv_\contrib
\end{equation}
\end{namedtheorem}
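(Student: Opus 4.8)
\textbf{Proof plan for \cref{thm:dual_fn}.}

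The plan is to compute the infimum over positive-definite $\bfX$ of the Lagrangian $L(\bfX,\dualv) = \tr(\AAT\bfX^{-1}) + \sum_\contrib \dualv_\contrib(\contrib\tp\bfX\contrib - 1)$ in closed form. The key observation is that the penalty term telescopes: $\sum_\contrib \dualv_\contrib \contrib\tp\bfX\contrib = \tr\!\big(\bfX \sum_\contrib \dualv_\contrib \contrib\contrib\tp\big) = \tr(\bfX\bfU)$, where $\bfU = \Hv\Hv\tp = \sum_\contrib \dualv_\contrib \contrib\contrib\tp$. So, dropping the constant $-\sum_\contrib \dualv_\contrib$, the problem reduces to minimizing $\phi(\bfX) := \tr(\AAT\bfX^{-1}) + \tr(\bfU\bfX)$ over $\bfX \succ 0$. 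First I would record that this is a smooth strictly convex function on the PD cone (strict convexity of $\bfX\mapsto\tr(M\bfX^{-1})$ for $M\succ 0$, linearity of the second term), so a stationary point, if it exists in the interior, is the unique global minimizer.

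Next I would take the matrix derivative. Using $\partial_\bfX \tr(\AAT\bfX^{-1}) = -\bfX^{-1}\AAT\bfX^{-1}$ and $\partial_\bfX \tr(\bfU\bfX) = \bfU$ (both symmetrized, which is fine since $\bfX$ ranges over symmetric matrices), the first-order condition is $\bfX^{-1}\AAT\bfX^{-1} = \bfU$, i.e. $\AAT = \bfX\bfU\bfX$. This is a Riccati-type equation; since $\bfU$ is assumed full-rank (hence PD) and $\AAT$ is PD (as $\bfA$ is full-rank), I would solve it by the standard symmetrization trick: conjugate by $\bfU\hlf$ to get $\bfU\hlf\AAT\bfU\hlf = (\bfU\hlf\bfX\bfU\hlf)^2$, so $\bfU\hlf\bfX\bfU\hlf = \big(\bfU\hlf\AAT\bfU\hlf\big)\hlf$, giving the claimed minimizer
\[
\bfX(\dualv) = \bfU\nhlf\big(\bfU\hlf\AAT\bfU\hlf\big)\hlf\bfU\nhlf,
\]
which is manifestly PD, so it lies in the interior and is therefore the genuine minimizer. (I should note uniqueness of the PD square root to justify that this is the only interior stationary point.)

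Then I would substitute back to evaluate $\phi$ at this $\bfX$. For the second term, $\tr(\bfU\bfX) = \tr(\bfU\bfU\nhlf(\bfU\hlf\AAT\bfU\hlf)\hlf\bfU\nhlf) = \tr(\bfU\hlf(\bfU\hlf\AAT\bfU\hlf)\hlf\bfU\nhlf) = \tr\big((\bfU\hlf\AAT\bfU\hlf)\hlf\big)$ by cyclicity. For the first term, $\bfX^{-1} = \bfU\hlf(\bfU\hlf\AAT\bfU\hlf)\nhlf\bfU\hlf$, so $\tr(\AAT\bfX^{-1}) = \tr(\AAT\bfU\hlf(\bfU\hlf\AAT\bfU\hlf)\nhlf\bfU\hlf) = \tr(\bfU\hlf\AAT\bfU\hlf(\bfU\hlf\AAT\bfU\hlf)\nhlf) = \tr\big((\bfU\hlf\AAT\bfU\hlf)\hlf\big)$, again by cyclicity and writing $\bfU\hlf\AAT\bfU\hlf = (\bfU\hlf\AAT\bfU\hlf)\hlf(\bfU\hlf\AAT\bfU\hlf)\hlf$. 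Adding these gives $2\tr\big((\bfU\hlf\AAT\bfU\hlf)\hlf\big)$, and restoring the dropped constant yields exactly \cref{eq:dual_fn_rep}.

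The only real subtlety — and the step I would be most careful about — is confirming that the infimum is actually attained (not merely approached) when $\bfU$ is full-rank: one must check $\phi(\bfX)\to\infty$ as $\bfX$ approaches the boundary of the PD cone or as $\|\bfX\|\to\infty$. The term $\tr(\bfU\bfX)$ with $\bfU\succ 0$ blows up when $\lambda_{\max}(\bfX)\to\infty$, and $\tr(\AAT\bfX^{-1})$ with $\AAT\succ 0$ blows up when $\lambda_{\min}(\bfX)\to 0$; together these give coercivity on the open PD cone, so a minimizer exists in the interior and coincides with the unique stationary point found above. I would also remark that when $\bfU$ is only PSD (rank-deficient), the infimum can be $-\infty$ or unattained, which is exactly why the theorem restricts to $\dualv$ making $\bfU$ full-rank; the Remark following the theorem already flags that the relevant optimal dual variables live in a neighborhood where this holds, so differentiating $g$ there (as needed for \cref{cor:gen_fixed_point}) is legitimate.
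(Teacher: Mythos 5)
Your proposal is correct and follows essentially the same route as the paper: rewrite the penalty as $\tr(\bfU\bfX)$, set the gradient $-\bfX^{-1}\AAT\bfX^{-1}+\bfU$ to zero, solve the resulting Riccati-type equation for the unique PD minimizer $\bfU\nhlf\big(\bfU\hlf\AAT\bfU\hlf\big)\hlf\bfU\nhlf$, and substitute back to obtain $2\tr\big((\bfU\hlf\AAT\bfU\hlf)\hlf\big)-\sum_\contrib\dualv_\contrib$. The only differences are cosmetic: you solve $\bfX\bfU\bfX=\AAT$ by direct conjugation with $\bfU\hlf$ rather than invoking the paper's more general \cref{lem:Xsoln}, and your explicit coercivity check that the infimum is attained in the interior is a welcome tightening of a point the paper treats more briefly.
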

\begin{proof}

Since there is some finite set of vectors $\contrib \in \R^n$ specifying $\deltaset$, the supremum in~\cref{eq:constrained_problem} may be reduced to a maximum over these elements.%

Our problem then takes the form:
\begin{align}\label{eq:simplified_constrained_problem}
    \inf_{\bfX \text{ is PD}} \quad &\tr(\AAT \bfX\inv) \notag \\
    \text{s.t.} \quad & \contrib\tp \bfX \contrib \le 1, \ \forall \contrib \in \deltaset.
\end{align}

Recall that we have defined
$\Hv = [\contrib_1, \dots, \contrib_k] \diag(\dualv)\hlf$, and $\bfU = \Hv\Hv\tp$. Now, note:
 \begin{equation}\label{eq:Udef}
  \bfU = \sum_\contrib \dualv_\contrib \contrib \contrib\tp 
  = \bfH \diag(\dualv) \bfH\tp = \Hv \Hv\tp.
\end{equation}

Introducing Lagrange multipliers $\dualv_\contrib \ge 0$, for the problem \cref{eq:simplified_constrained_problem} we form the Lagrangian for positive-definite $\bfX$:
\begin{align}
  L(\bfX, \dualv) 
  &= \tr(\AAT\bfX^{-1})
     + \sum_\contrib \dualv_\contrib \left(\contrib\tp \bfX \contrib - 1\right) \\
  &= \tr(\AAT\bfX^{-1})
      + \tr \left( \Big(\sum_\contrib \dualv_\contrib \contrib \contrib\tp\Big) \bfX \right)
      - \sum_\contrib \dualv_\contrib \\
   &= \tr(\AAT\bfX^{-1})
      + \tr \left( \bfU \bfX \right) - \sum_\contrib \dualv_\contrib. \label{eq:genL}
\end{align}

For fixed $\dualv$, any finite minimizer of $L$ for positive-definite $\bfX$ must correspond to a zero of this Lagrangian's gradient. We then compute the gradient
\begin{equation}\label{eq:lagrangian_gradu}
  \frac{\partial L}{\partial \bfX} = -\bfX^{-1} \AAT \bfX^{-1} + \bfU.
\end{equation}
$\bfU$ and $\bfA$ are full-rank by assumption; therefore \cref{lem:Xsoln} is applicable, and \cref{eq:lagrangian_gradu} has a unique positive-definite zero (and indeed, the infimum in \cref{eq:simplified_constrained_problem} becomes a minimum):
\begin{equation}\label{eq:xoptu}
\bfX = \bfU\nhlf \big(\bfU\hlf \AAT \bfU\hlf \big)\hlf \bfU\nhlf.
\end{equation}
Note that~\cref{eq:lagrangian_gradu} also immediately implies that if $\bfU$ is not full-rank, then there is no finite positive-definite minimizer of $L$ in $\bfX$. Letting $g(\dualv) = \min_{\bfX} L(\bfX, \dualv)$ be the Lagrange dual function and plugging back into \cref{eq:genL}, we have 
\begin{align*}
g(\dualv) 
 &= \min_{\bfX \text{is PD}} \tr(\AAT\bfX^{-1})
      + \tr \left( \bfU \bfX \right) - \sum_\contrib \dualv_\contrib \\
 &= \min_{\bfX \text{is PD}} \tr(\bfX \bfU)
      + \tr \left( \bfU \bfX \right) - \sum_\contrib \dualv_\contrib 
      && \text{using \cref{eq:lagrangian_gradu}}\\
 &=  \min_{\bfX \text{is PD}} 2\,\tr \left( \bfU \bfX \right) - \sum_\contrib \dualv_\contrib \\      
 &=  2\,\tr \left( \bfU \bfU\nhlf \big(\bfU\hlf \AAT \bfU\hlf \big)\hlf \bfU\nhlf \right) - \sum_\contrib \dualv_\contrib
      && \text{using \cref{eq:xoptu}} \\
 &=  2\,\tr \left( \big(\bfU\hlf \AAT \bfU\hlf \big)\hlf \right) - \sum_\contrib \dualv_\contrib.
\end{align*}
\end{proof}

\subsubsection{Proof of~\cref{cor:gen_fixed_point}}
\begin{namedcorollary}[\cref{cor:gen_fixed_point}]
In the same setup as \cref{thm:dual_fn}, a maximizer of the dual $\vopt$ must satisfy:
\begin{align}
\vopt 
  &= \diagpart\left(\big(\Hvopt\tp \AAT \Hvopt\big)\hlf\right).
\end{align}
Moreover, the optimal value of the problem defined in \ref{eq:symmetrized_problem} is $\tr\left(\vopt\right)$.
\end{namedcorollary}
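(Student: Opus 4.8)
The plan is to read off the dual first-order condition from \cref{thm:dual_fn}, rewrite it as the claimed fixed point using a rank-$n$ factorization of $\Hvopt$, and then invoke strong duality to evaluate the optimum. Throughout I write $\bfM := \bfU\hlf\AAT\bfU\hlf$ (positive definite since $\bfA$ and $\bfU$ are full rank) and, at a dual maximizer $\vopt$ --- which lies in the open set where $\bfU = \Hvopt\Hvopt\tp$ is full rank, by the Remark after \cref{thm:dual_fn} together with \cref{lem:finite_minimizer} --- set $\bfX^\star := \bfX(\vopt) = \bfU\nhlf \bfM\hlf \bfU\nhlf$. \emph{Step 1 (stationarity).} On this full-rank set $\bfX(\dualv)$ is a smooth, interior (positive-definite) minimizer of $L(\cdot,\dualv)$, so the chain-rule term through $\partial\bfX/\partial\dualv_i$ vanishes and $\tfrac{\partial g}{\partial\dualv_i} = \partial_{\dualv_i}L(\bfX,\dualv)\big|_{\bfX=\bfX(\dualv)} = \contrib_i\tp\bfX(\dualv)\contrib_i - 1$; this is exactly the gradient expression of \cref{cor:gen_fixed_point}, and one could equally obtain it by differentiating the closed form of $g$ using $\partial\bfU/\partial\dualv_i = \contrib_i\contrib_i\tp$. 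Vanishing of the gradient at $\vopt$ gives $\contrib_i\tp\bfX^\star\contrib_i = 1$ for all $i$.

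\emph{Step 2 (fixed point).} Put $\bfQ := \bfU\nhlf\Hvopt \in \R^{n\times k}$, so $\bfQ\bfQ\tp = \bfU\nhlf\bfU\bfU\nhlf = \bfI_n$, and the $i$-th column $\bfq_i$ of $\bfQ$ equals $\sqrt{\vopt_i}\,\bfU\nhlf\contrib_i$ because the $i$-th column of $\Hvopt = [\contrib_1,\dots,\contrib_k]\diag(\vopt)\hlf$ is $\sqrt{\vopt_i}\,\contrib_i$. Then $\Hvopt\tp\AAT\Hvopt = \bfQ\tp\bfM\bfQ$, and since $\bfQ\bfQ\tp = \bfI_n$ we have $(\bfQ\tp\bfM\hlf\bfQ)^2 = \bfQ\tp\bfM\bfQ$ with $\bfQ\tp\bfM\hlf\bfQ$ positive semidefinite, so by uniqueness of the PSD square root $(\Hvopt\tp\AAT\Hvopt)\hlf = \bfQ\tp\bfM\hlf\bfQ$. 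Reading off the $i$-th diagonal entry and using $\bfq_i = \sqrt{\vopt_i}\,\bfU\nhlf\contrib_i$ and Step 1, $\diagpart\!\big((\Hvopt\tp\AAT\Hvopt)\hlf\big)_i = \bfq_i\tp\bfM\hlf\bfq_i = \vopt_i\,\contrib_i\tp\bfU\nhlf\bfM\hlf\bfU\nhlf\contrib_i = \vopt_i\,\contrib_i\tp\bfX^\star\contrib_i = \vopt_i$, i.e. $\vopt = \diagpart\big((\Hvopt\tp\AAT\Hvopt)\hlf\big)$.

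\emph{Step 3 (optimal value, and the main obstacle).} The program \cref{eq:symmetrized_problem} is convex (the matrix-fractional objective on the PD cone, with constraints linear in $\bfX$) and strictly feasible ($\bfX = \epsilon\bfI$ for small $\epsilon > 0$), so Slater's condition gives strong duality and its optimal value equals $g(\vopt)$. Cyclicity of trace and $\bfQ\bfQ\tp = \bfI_n$ give $\tr\big((\bfU\hlf\AAT\bfU\hlf)\hlf\big) = \tr(\bfM\hlf) = \tr(\bfQ\tp\bfM\hlf\bfQ) = \tr\big((\Hvopt\tp\AAT\Hvopt)\hlf\big)$, which by Step 2 equals $\sum_i\vopt_i = \tr(\vopt)$; plugging into $g(\vopt) = 2\tr\big((\bfU\hlf\AAT\bfU\hlf)\hlf\big) - \sum_\contrib\vopt_\contrib$ from \cref{thm:dual_fn} yields $g(\vopt) = 2\tr(\vopt) - \tr(\vopt) = \tr(\vopt)$. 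The one genuinely nontrivial step is the identity $(\Hvopt\tp\AAT\Hvopt)\hlf = \bfQ\tp(\bfU\hlf\AAT\bfU\hlf)\hlf\bfQ$: pulling the square root through $\bfQ$ is legitimate only because $\bfQ\bfQ\tp = \bfI_n$ (even though $\bfQ\tp\bfQ$ is merely a rank-$n$ projector), together with uniqueness of PSD square roots; the other point requiring care is that the maximizer $\vopt$ sits in the region where $\bfU$ is full rank so that Step 1 applies, which is precisely what the Remark after \cref{thm:dual_fn} guarantees.
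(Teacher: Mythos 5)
Your proof is correct, and its overall architecture matches the paper's: obtain the dual gradient $\partial g/\partial \dualv_i = \contrib_i\tp\bfX(\dualv)\contrib_i-1$, impose stationarity at $\vopt$, convert this into the diagonal fixed-point identity, and then evaluate $g(\vopt)$. The differences are in the supporting steps. For the gradient you use an envelope/Danskin argument (the interior minimizer $\bfX(\dualv)$ kills the chain-rule term), whereas the paper differentiates the closed-form trace expression by matrix calculus; both are fine on the full-rank neighborhood guaranteed by the Remark after \cref{thm:dual_fn}. For the key identity you introduce the co-isometry $\bfQ=\bfU\nhlf\Hvopt$ with $\bfQ\bfQ\tp=\bfI$ and deduce $\big(\Hvopt\tp\AAT\Hvopt\big)\hlf=\bfQ\tp\big(\bfU\hlf\AAT\bfU\hlf\big)\hlf\bfQ$ from uniqueness of PSD square roots; the paper instead invokes the second representation of \cref{cor:reps_for_x} (via \cref{lem:Xsoln}), $\bfX=\Hv\tppinv\big(\Hv\tp\AAT\Hv\big)\hlf\Hv\pinv$, and conjugates by $\Hvopt$. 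Your route is self-contained and avoids the pseudoinverse machinery, which is a small simplification. For the optimal value you explicitly invoke Slater's condition to identify the primal optimum with $g(\vopt)$ and then compute $\tr\big((\bfU\hlf\AAT\bfU\hlf)\hlf\big)=\tr(\vopt)$ via $\bfQ\bfQ\tp=\bfI$; the paper reaches $g(\vopt)=\sum_\contrib\vopt_\contrib$ through $\tr(\bfU\bfX)=\sum_\contrib\vopt_\contrib$ and leaves strong duality implicit, so your version is slightly more complete on this point. One shared imprecision (inherited from the paper, and harmless): if the maximizer has $\vopt_i=0$ for some $i$, the gradient need only satisfy $\partial g/\partial\dualv_i\le 0$ rather than vanish; the fixed-point formula and the value computation survive because every use of $\contrib_i\tp\bfX^\star\contrib_i=1$ is multiplied by $\vopt_i$, i.e.\ only complementary slackness is actually needed.
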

\begin{proof}
\
As noted in the remark after \cref{thm:dual_fn}, any optimal setting of the dual variables must be in the interior of a neighborhood in which the representation \cref{eq:dual_fn_rep} is valid. It is therefore permissible to differentiate this representation.

Differentiating, we find:

\begin{align*}
 \frac{\partial}{\partial \dualv_i} \tr \left( \big(\bfU\hlf \AAT \bfU\hlf \big)\hlf \right)
 &=  \frac{1}{2} \tr\left( \AAT \bfU\hlf (\bfU\hlf \AAT \bfU\hlf)\nhlf \bfU\nhlf \contrib_i \contrib_i\tp \right) \\
&= \frac{1}{2} \tr\left( \bfU\nhlf (\bfU\hlf \AAT \bfU\hlf) (\bfU\hlf \AAT \bfU\hlf)\nhlf \bfU\nhlf \contrib_i \contrib_i\tp \right) \\
&= \frac{1}{2} \tr\left( \bfU\nhlf (\bfU\hlf \AAT \bfU\hlf)\hlf \bfU\nhlf \contrib_i \contrib_i\tp \right) \\
&= \frac{1}{2} \tr\left(\contrib_i\tp \bfU\nhlf (\bfU\hlf \AAT \bfU\hlf)\hlf \bfU\nhlf \contrib_i \right)\\
&= \frac{1}{2}\contrib_i\tp \bfX \contrib_i,  \\
\end{align*}

by defining $\bfX = \bfU\nhlf \big(\bfU\hlf \AAT \bfU\hlf \big)\hlf \bfU\nhlf$ (recalling the usage of the symbol $\bfX$ in \cref{eq:xoptu}).

Therefore
\[
 \frac{\partial g(\dualv)}{\partial \dualv_i} =  \contrib_i\tp \bfX \contrib_i - 1,
\]

and we have the stated expression for the gradient of the dual function.

Now, at a maximizer of the dual function, this derivative must vanish.  An equivalent condition is $\diagpart(\bfH\tp \bfX \bfH) = \vec{1}$, and hence 
\begin{equation}
\tr(\bfU \bfX)
= \tr(\bfH \diag(\dualv) \bfH\tp \bfX ) 
= \tr(\diag(\dualv) \bfH\tp \bfX \bfH) = \sum_\contrib \dualv_\contrib,
\end{equation}
so at the optimum $\vopt$ in fact $g(\vopt) = \sum_\contrib \vopt_\contrib$, establishing the second claim of our result.

Again using the observation that $\diagpart(\Hv\tp \bfX \Hv) = \vec{1}$ and so 
$$\diagpart(\Hv\tp \bfX \Hv) = \diagpart(\diagv\bfH\tp \bfX \bfH) = \dualv.$$

Further, using the second claim of \cref{cor:reps_for_x}, we can take
\[
\bfX = \Hv\tpinv \big(\Hv\tp \AAT \Hv\big)\hlf \Hv\inv,
\]
and multiplying this by $\Hv\tp$ and $\Hv$ on the left and right respectively yields
\begin{align*}
\Hv\tp \bfX \Hv 
 &= \Hv\tp \Hv\tpinv \big(\Hv\tp \AAT \Hv\big)\hlf \Hv\inv \Hv 
 = \big(\Hv\tp \bfV \Hv\big)\hlf
\end{align*}
and so we conclude for the optimal Lagrange multiplier $\vopt$,
\begin{align}
\vopt 
  &= \diagpart\left(\big(\Hvopt\tp \AAT \Hvopt\big)\hlf\right).
\end{align}
\end{proof}

\subsection{Lemmas and Corollaries}\label{app:ssec:finite-minimizer}

\begin{lem}\label{lem:finite_minimizer}
The set of positive-definite $\bfX$ such that $\sup_{\contrib \in \deltaset} \contrib\tp\bfX\contrib \leq 1$ is bounded as a subset of $\R^\dimdim$ if and only if $\deltaset=\{\contrib\}$ spans $\R^n$.
\end{lem}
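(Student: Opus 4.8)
The plan is to prove the two implications separately, establishing ``$\deltaset$ spans $\R^n \Rightarrow$ boundedness'' directly and the converse by contrapositive. Throughout write $\mathcal{X} := \{\bfX \succ 0 : \contrib\tp \bfX \contrib \le 1 \text{ for all } \contrib \in \deltaset\}$, and note $R := \sup_{\contrib \in \deltaset}\ltwo{\contrib} < \infty$ since $\deltaset$ is bounded.

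For the forward direction I would first note that, for $\bfX \succ 0$, the map $\langle \contrib, \contrib'\rangle_\bfX := \contrib\tp \bfX \contrib'$ is an inner product on $\R^n$, and membership in $\mathcal{X}$ says exactly that $\langle \contrib,\contrib\rangle_\bfX \le 1$ for every $\contrib \in \deltaset$; Cauchy--Schwarz in this inner product then yields $|\langle \contrib, \contrib'\rangle_\bfX| \le 1$ for all $\contrib, \contrib' \in \deltaset$. Since $\deltaset$ spans, I would extract a basis $\contrib_{(1)}, \dots, \contrib_{(n)} \in \deltaset$ and let $M$ be the invertible matrix with these columns. For any unit vector $\bfy$, setting $\bfc := M\inv\bfy$ gives $\bfy = \sum_a c_a \contrib_{(a)}$, hence $\bfy\tp \bfX \bfy = \sum_{a,b} c_a c_b \langle \contrib_{(a)}, \contrib_{(b)}\rangle_\bfX \le \lone{\bfc}^2 \le n\ltwo{\bfc}^2 \le n\ltwo{M\inv}^2$. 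As the right-hand side depends on neither $\bfX$ nor $\bfy$, and $\bfX$ is symmetric PSD, this bounds $\ltwo{\bfX} = \lambda_{\max}(\bfX)$ by $n\ltwo{M\inv}^2$ uniformly over $\mathcal{X}$, so $\mathcal{X}$ is bounded.

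For the converse I would use the contrapositive: if $\deltaset$ does not span, it lies in a proper subspace $V \subsetneq \R^n$, so I can choose a unit vector $\bfw \in V^\perp$ and $\epsilon > 0$ with $\epsilon R^2 \le 1$, and put $\bfX_t := \epsilon \bfI + t\,\bfw\bfw\tp$ for $t \ge 0$. Each $\bfX_t$ is positive definite, and since $\bfw\tp\contrib = 0$ for all $\contrib \in \deltaset$ we get $\contrib\tp \bfX_t \contrib = \epsilon\ltwo{\contrib}^2 \le 1$, so $\bfX_t \in \mathcal{X}$ for every $t$; but $\ltwo{\bfX_t} = \epsilon + t \to \infty$, so $\mathcal{X}$ is unbounded.

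The one step needing care is obtaining a bound on $\ltwo{\bfX}$ that is genuinely \emph{uniform} over feasible $\bfX$; this is precisely what the inner-product Cauchy--Schwarz estimate together with the fixed change-of-basis matrix $M$ provide. Everything else is routine linear algebra, so I do not expect a real obstacle.
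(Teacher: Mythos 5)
Your proof is correct and takes essentially the same route as the paper: the forward direction bounds $\bfy\tp\bfX\bfy$ uniformly by expanding an arbitrary unit vector in a spanning subset of $\deltaset$ (your explicit Cauchy--Schwarz step in the $\bfX$-inner product supplies the bound on the cross terms $|\contrib\tp\bfX\contrib'|$ that the paper leaves implicit), and the converse exhibits the unbounded feasible ray $\epsilon\bfI + t\,\bfw\bfw\tp$ with $\bfw$ orthogonal to $\mathrm{span}(\deltaset)$, just as the paper perturbs a feasible $\bfX$ in that direction. Your version is, if anything, slightly more careful, since you explicitly construct a feasible point before growing it.
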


\begin{proof}
Suppose that $\deltaset$ spans $\R^n$. For a PSD matrix, a bound on the trace implies a bound on the elements; therefore it is sufficient to show that $\sup_{\contrib \in \deltaset} \contrib\tp\bfX\contrib \leq 1$ implies that the maximum eigenvalue of $\bfX$ is uniformly bounded for $\bfX$ PSD.

Take some set of vectors $\{\contrib_i\}_{i=1}^\dimension \in \deltaset$ which span $\R^\dimension$. Fix some representation

$$
\bfe_i = \sum_{j=1}^\dimension \alpha_{ij} \contrib_j,
$$

where $\bfe_i$ is the $i^{th}$ standard basis vector.

Take $\bfy$ of $\ell_2$ norm 1, and express:

$$\bfy = \sum_{i=1}^n \gamma_i \bfe_i.$$
Then for $\bfX$ satisfying our assumptions,

\begin{equation}
    \left|\bfy\tp\bfX\bfy\right| = \left|\sum_{i,j=1}^n \gamma_i\gamma_j \bfe_i\tp\bfX\bfe_j \right|= \leq n^2 \max_{1 \leq i, j \leq \dimension} |\gamma_i\gamma_j| \left|\bfe_i\tp\bfX\bfe_j \right|.
\end{equation}

Similarly, 

\begin{equation}
    \left|\bfe_i\tp\bfX\bfe_j \right| = \left|\sum_{k, l=1}^\dimension \alpha_{ik}\alpha_{jl} \contrib_k\tp\bfX\contrib_l\right| \leq  n^2 \max_{1 \leq k, l \leq \dimension} |\alpha_{ik}\alpha_{jl}| \left|\contrib_k\tp\bfX\contrib_l
    \right| \leq  n^2 \max_{1 \leq k, l \leq \dimension} |\alpha_{ik}\alpha_{jl}|,
\end{equation}

where the final inequality follows by the assumptions on $\bfX$.

Now, since the $\ell_2$ norm of $\bfy$ is 1, the orthogonality of the $\bfe_i$ imply that each $|\gamma_i\gamma_j|$ is at most 1. Therefore:

\begin{equation}
\left|\bfy\tp\bfX\bfy\right| \leq n^4 \max_{1 \leq i, j, k, l \leq \dimension} |\alpha_{ik}\alpha_{jl}|,
\end{equation}

and we have sufficiency of $\deltaset$ spanning $\R^n$.

For necessity, suppose $\deltaset$ does not span $\R^n$. Then there is some vector $\bfy \in \text{span}\left(\deltaset\right)^\perp$ of norm 1. Take any $\bfX$ such that $\sup_{\contrib \in \deltaset} \contrib\tp\bfX\contrib \leq 1$. Then, since $\bfy\tp\contrib = 0$ for all $\contrib \in \deltaset$, $\bfY := \bfX + \alpha \bfy$ satisfies the same set of inequalities for any $\alpha$.
\end{proof}

\begin{lem} \label{lem:Xsoln}
Let $\bfU, \bfV\in \Snpp$. Let $\bfU = \UL \UR$ be a factorization of $\bfU$ such that $\UR \bfV \UL$ is PSD, and the following equation defines a positive-definite matrix $\bfX$:
\begin{equation}\label{eq:Xsoln_genrep}
\bfX = \UR\pinv \big(\UR \bfV \UL \big)\hlf \UL\pinv.
\end{equation}
Then, this $\bfX$ solves the equation
\[
\bfX \bfU \bfX = \bfV
\quad \text{or equivalently} \quad
\bfU = \bfX\inv \bfV \bfX\inv.
\]
Moreover, this positive-definite solution $\bfX$ is unique.
\end{lem}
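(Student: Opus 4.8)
The plan is to separate the statement into an existence claim---that the matrix $\bfX$ of \cref{eq:Xsoln_genrep} satisfies $\bfX\bfU\bfX = \bfV$---and a uniqueness claim among positive-definite solutions. The equivalence $\bfX\bfU\bfX=\bfV \iff \bfU = \bfX\inv\bfV\bfX\inv$ is immediate, since the hypothesis that $\bfX$ is positive-definite makes it invertible. For existence I would verify the identity by direct substitution; the work lies in tracking how the pseudoinverses $\UL\pinv,\UR\pinv$ interact with $\bfM\hlf$, where I write $\bfM := \UR\bfV\UL$ (symmetric PSD by hypothesis).

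First, some rank bookkeeping: since $\bfU\in\Snpp$ has full rank $\dimension$ and $\bfU = \UL\UR$ with $\UL\in\R^{\dimension\times p}$, $\UR\in\R^{p\times\dimension}$, both factors must have rank $\dimension$ (so $p\ge\dimension$). Consequently $\UR\pinv\UR = \bfI$ and $\UL\UL\pinv = \bfI$, while $\UR\UR\pinv$ and $\UL\pinv\UL$ are the orthogonal projectors onto the column space of $\UR$ and the row space of $\UL$, respectively. The key observation is that the symmetry of $\bfM$ forces these two subspaces to coincide: computing $\mathrm{range}(\bfM)$ from $\bfM = \UR(\bfV\UL)$, and noting that $\bfV\UL$ is surjective onto $\R^\dimension$, gives $\mathrm{range}(\bfM) = \mathrm{col}(\UR)$; computing it instead from $\bfM = \bfM\tp = \UL\tp(\bfV\UR\tp)$, with $\bfV\UR\tp$ surjective, gives $\mathrm{range}(\bfM) = \mathrm{row}(\UL)$. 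Hence $\mathrm{col}(\UR) = \mathrm{row}(\UL)$, and since $\bfM\hlf$ is PSD with the same range as $\bfM$, its columns lie in this common subspace; therefore $\UR\UR\pinv\bfM\hlf = \bfM\hlf = \UL\pinv\UL\bfM\hlf$. Substituting $\bfX = \UR\pinv\bfM\hlf\UL\pinv$ and $\bfU = \UL\UR$ and regrouping,
\[
\bfX\bfU\bfX \;=\; \UR\pinv\,\bfM\hlf\,(\UL\pinv\UL)(\UR\UR\pinv)\,\bfM\hlf\,\UL\pinv \;=\; \UR\pinv\,\bfM\hlf\bfM\hlf\,\UL\pinv \;=\; \UR\pinv\bfM\UL\pinv \;=\; (\UR\pinv\UR)\,\bfV\,(\UL\UL\pinv) \;=\; \bfV,
\]
using $\bfM\hlf\bfM\hlf = \bfM$ and $\bfM = \UR\bfV\UL$ in the last two steps.

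For uniqueness, suppose $\bfX_1,\bfX_2$ are positive-definite with $\bfX_i\bfU\bfX_i = \bfV$. Set $\bfY_i := \bfU\hlf\bfX_i\bfU\hlf$; each $\bfY_i$ is symmetric positive-definite (congruence by the invertible $\bfU\hlf$), and $\bfY_i^2 = \bfU\hlf\bfX_i\bfU\bfX_i\bfU\hlf = \bfU\hlf\bfV\bfU\hlf$, which is positive-definite. Since a positive-definite matrix has a unique positive-definite square root, $\bfY_1 = \bfY_2$, and hence $\bfX_1 = \bfU\nhlf\bfY_1\bfU\nhlf = \bfU\nhlf\bfY_2\bfU\nhlf = \bfX_2$. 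Combined with the hypothesis that the formula in \cref{eq:Xsoln_genrep} does define a positive-definite matrix, this pins down that matrix as the unique such solution.

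I expect the main obstacle to be the existence half, specifically the fact that $\UL$ and $\UR$ are not individually invertible, so the naive cancellations do not go through directly; one must identify exactly which $\dimension$-dimensional subspace $\bfM\hlf$ occupies and check that both pseudoinverse-induced projectors restrict to the identity there. That is precisely what the PSD (hence symmetric) hypothesis on $\UR\bfV\UL$ provides. The remaining ingredients---the rank count for $\UL,\UR$, the standard identity that $\UL\pinv\UL$ projects onto the row space, that $\bfM$ and $\bfM\hlf$ share a range, and uniqueness of the PD square root---are routine.
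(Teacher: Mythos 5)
Your proof is correct and follows essentially the same route as the paper's: verify $\bfX\bfU\bfX=\bfV$ by direct substitution after showing that the projectors $\UL\pinv\UL$ and $\UR\UR\pinv$ act as the identity on $(\UR\bfV\UL)\hlf$, and obtain uniqueness from the uniqueness of the positive-definite square root of $\bfU\hlf\bfV\bfU\hlf$ (identical to the paper's uniqueness argument). The only difference is in a sub-step: you justify the projector-absorption via the range identity $\mathrm{col}(\UR)=\mathrm{row}(\UL)=\mathrm{range}\big((\UR\bfV\UL)\hlf\big)$, whereas the paper derives the same identities by multiplying through by $(\UR\bfV\UL)\hlf$ and appealing to uniqueness of the symmetric square root; your range-based justification is, if anything, slightly more self-contained.
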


\begin{proof}

We will begin by showing that $\bfX$ as defined by \cref{eq:Xsoln_genrep} solves the equation $\bfX\bfU\bfX = \bfV$; then we will show that any two positive-definite representations of the form \cref{eq:Xsoln_genrep} are in fact identical.

Notice that the representation $\bfU = \UL \UR$ implies that $\rank(\UL) \geq \dimension$ and $\rank(\UR) \geq \dimension$. Therefore $\UL\UL\pinv = \bfI = \UR\pinv\UR$, as implied by the Moore definition of the Moore-Penrose pseudoinverse. So:

\begin{align*}
\bfX \bfU \bfX 
  & = \bfX \UL \UR \bfX \\
  &= \left(\UR\pinv \big(\UR \bfV \UL \big)\hlf \UL\pinv\right)
     \UL\UR
     \left(\UR\pinv \big(\UR \bfV \UL \big)\hlf \UL\pinv\right) \\
  &= \UR\pinv \big(\UR \bfV \UL \big)\hlf  \bfP_{R(\UL\pinv)}\bfP_{R(\UR)} \big(\UR \bfV \UL \big)\hlf \UL\pinv
  \end{align*}

We claim that $\big(\UR \bfV \UL \big)\hlf  \bfP_{R(\UL\pinv)} =  \bfP_{R(\UR)} \big(\UR \bfV \UL \big)\hlf = \big(\UR \bfV \UL\big)\hlf$. In both cases, this can be seen by multiplying on the left or right as appropriate by $\big(\UR \bfV \UL\big)\hlf$, and noting $\bfP_{R(\UR)} \UR = \UR$, $\UL \bfP_{R(\UL\pinv)} = \UL$. Since all the terms here are symmetric, the appropriate equality follows by uniqueness of the symmetric matrix square root. Therefore:

\begin{align*}
  \bfX \bfU \bfX  &= \UR\pinv \UR \bfV \UL \UL\pinv \\     
  &= \bfV.
\end{align*}

The uniqueness of a positive-definite $\bfX$ solving $\bfX\bfU\bfX = \bfV$ follows from the uniqueness of the usual matrix square root. Indeed, assume $\bfY$ positive-definite satisfies $\bfY\bfU\bfY = \bfV$. Then:

\[
\left(\bfU^{1/2} \bfY \bfU^{1/2}\right)^2 = \bfU^{1/2} \bfV \bfU^{1/2}
\]

Since the positive-definite square root is uniquely determined, $\bfU^{1/2} \bfY \bfU^{1/2}$ is uniquely determined. Since $\bfU$ is invertible, $\bfY$ is uniquely determined as well, and we have $\bfY = \bfX$.
\end{proof}

\begin{cor}\label{cor:reps_for_x}
Two particular instantiations of \cref{lem:Xsoln} are of interest. $\bfX$ as the matrix geometric mean of $\bfU\inv$ and $\bfV$ (taking $\UL = \UR = \sqrt{\bfU})$:
\begin{equation}\label{eq:Xsoln}
\bfX = \bfU\nhlf \big(\bfU\hlf \bfV \bfU\hlf \big)\hlf \bfU\nhlf,
\end{equation}
and assuming the representation $\bfU = \Hv \Hv\tp$:
\begin{equation}\label{eq:XsolnInH}
\bfX = \Hv\tppinv \big(\Hv\tp \bfV \Hv\big)\hlf \Hv\pinv.
\end{equation}
\end{cor}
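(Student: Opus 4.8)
The plan is to obtain \cref{cor:reps_for_x} directly from \cref{lem:Xsoln} by making two concrete choices of the factorization $\bfU = \UL\UR$ and checking, in each case, the two hypotheses of that lemma: that $\UR\bfV\UL$ is PSD, and that the displayed formula defines a positive-definite $\bfX$. Throughout I use that $\bfU,\bfV\in\Snpp$, as inherited from \cref{lem:Xsoln}.

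For \cref{eq:Xsoln}, I would take $\UL = \UR = \bfU\hlf$, the unique symmetric positive-definite square root of $\bfU$. Since $\bfU$ is invertible, so is $\bfU\hlf$, hence $\UL\pinv = \UR\pinv = \bfU\nhlf$, and substituting into the formula of \cref{lem:Xsoln} gives exactly $\bfX = \bfU\nhlf\big(\bfU\hlf\bfV\bfU\hlf\big)\hlf\bfU\nhlf$. The hypothesis holds because $\UR\bfV\UL = \bfU\hlf\bfV\bfU\hlf$ is a congruence of the positive-definite $\bfV$ by the invertible $\bfU\hlf$, hence positive-definite (in particular PSD); and the same congruence structure, now applied through $\bfU\nhlf$ to the positive-definite matrix $\big(\bfU\hlf\bfV\bfU\hlf\big)\hlf$, shows the resulting $\bfX$ is positive-definite. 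All hypotheses of \cref{lem:Xsoln} are thus met, yielding \cref{eq:Xsoln}.

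For \cref{eq:XsolnInH}, I would take $\UL = \Hv$ and $\UR = \Hv\tp$, so that $\UL\UR = \Hv\Hv\tp = \bfU$ by the assumed representation. Using the standard identity $(\Hv\tp)\pinv = \Hv\tppinv$, the formula of \cref{lem:Xsoln} becomes precisely $\bfX = \Hv\tppinv\big(\Hv\tp\bfV\Hv\big)\hlf\Hv\pinv$. The matrix $\UR\bfV\UL = \Hv\tp\bfV\Hv = (\bfV\hlf\Hv)\tp(\bfV\hlf\Hv)$ is manifestly PSD. The only step requiring a short argument is positive-definiteness of this $\bfX$: since $\bfU$ is full rank, $\Hv$ has full row rank, so $\Hv\Hv\pinv = \bfI$; and for any $\bfy$, writing $\bfz = \Hv\pinv\bfy$, the quadratic form $\bfy\tp\bfX\bfy = \bfz\tp\big(\Hv\tp\bfV\Hv\big)\hlf\bfz$ vanishes only if $\bfz\in\ker\big(\Hv\tp\bfV\Hv\big)$, and since $\bfV$ is nonsingular one has $\ker\big(\Hv\tp\bfV\Hv\big) = \ker\Hv$; thus $\Hv\bfz = \Hv\Hv\pinv\bfy = \bfy = 0$. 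Hence $\bfX\succ 0$, \cref{lem:Xsoln} applies, and \cref{eq:XsolnInH} follows; the uniqueness clause of \cref{lem:Xsoln} moreover identifies this $\bfX$ with the one in \cref{eq:Xsoln}.

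Neither case presents a genuine obstacle, since all the content resides in \cref{lem:Xsoln}; the only care needed is in verifying the positive-definiteness hypothesis, which for the first representation is immediate from congruence by an invertible matrix, and for the second reduces to the elementary observation that $\ker(\Hv\tp\bfV\Hv) = \ker\Hv$ when $\bfV$ is nonsingular together with $\Hv\Hv\pinv = \bfI$.
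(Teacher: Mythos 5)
Your proposal is correct and follows essentially the same route as the paper: both proofs simply verify the hypotheses of \cref{lem:Xsoln} for the two factorizations $\UL=\UR=\bfU\hlf$ and $\UL=\Hv$, $\UR=\Hv\tp$, and then invoke the lemma. The only difference is cosmetic: where the paper dismisses positive-definiteness of \cref{eq:XsolnInH} with a terse appeal to the SVD of the pseudoinverses, you give an explicit quadratic-form argument using $\Hv\Hv\pinv=\bfI$ and $\ker(\Hv\tp\bfV\Hv)=\ker\Hv$, which is a perfectly valid (and somewhat more self-contained) justification of the same step.
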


\begin{proof}

 By positive-definiteness of $\bfU$ and $\bfV$, \cref{eq:Xsoln} is clearly positive definite; \cref{eq:XsolnInH} may be seen to be positive definite via the SVD of the pseudoinverses involved. Symmetry is again clear. Therefore both representations satisfy the assumptions of \cref{lem:Xsoln}.
\end{proof}

\newcommand{\hPi}{\widehat{\Pi}}
\newcommand{\posnegu}{\textbf{$\pm$-contrib}}
\newcommand{\Xnn}{\textbf{non-neg $\bfX \ge 0$}}
\newcommand{\XPinn}{\textbf{non-neg $\bfX_\hPi \ge 0$}}
\subsection{Impact of non-negativity constraints} \label{sec:nonneg}

We consider three variations of the constraints in  \cref{eq:symmetrized_problem}. In the first, we have the default constraints
\begin{equation}\label{eq:default-constraint}
\max_{\contrib \in \Dcorners^1_\Pi} \contrib\tp \bfX \contrib \le 1.
\end{equation}
This requires enumerating the $\assumedsep 2^\maxpart$ corners of $\Dcorners^1_\Pi$, which is tractable for the small problems we consider here.
Next, we consider the approach used for our experiments:
\begin{equation} \label{eq:nonneg-constraint}
\max_{\contrib \in \Dcorners^1_\Pi, \contrib \ge 0} \contrib\tp \bfX \contrib \le 1
\quad \text{and} \quad
\bfX \ge 0.
\end{equation}
Note here we only have $\assumedsep$ non-negative vectors $\contrib$, and so only $\assumedsep + \dimension^2$ constraints.

Finally, we consider and ``in between'' set of constraints. This is the minimal set of constraints that allows us to apply claim 2 of \cref{thm:multidim}. In order to define these constraints, let $\hPi = \big\{ (i ,j) | (i, j) \in \pi, i\ne j, \pi \in \Pi\big\}$, the set of pairs of distinct iterations in which the same user could possibly participate.  For Claim 2 of \cref{thm:multidim}, because we apply the theorem to sub-matrices $\bfC\idx{:}{\pi}$, a sufficient condition is $\bfX\idx{i}{j} \ge 0$ for all $(i, j) \in \hat{\Pi}$.  Then, the constraints become:
\begin{equation} \label{eq:pih-constraint}
\max_{\contrib \in \Dcorners^1_\Pi, \contrib \ge 0} \contrib\tp \bfX \contrib \le 1
\quad \text{and} \quad
\forall (i, j) \in \hPi, \ \bfX\idx{i}{j} \ge 0.
\end{equation}
Since $|\hPi| = \assumedsep \maxpart (\maxpart - 1)$, we have $\assumedsep +  \assumedsep \maxpart (\maxpart - 1) < \assumedsep + \dimension^2$ constraints.

In \cref{tab:negentries} we show results for a tiny problem:  $n=6$, $\maxpart=3$, and $\assumedsep=2$.  However, we have run additional experiments that align with the conclusions reported here for moderately larger problems (noting the \textbf{full} approach quickly becomes prohibitively expensive).
For the prefix-sum workload, the choice of additional constraints does not impact the total error, as in all cases $\bfX \ge 0$.  However, for the momentum workload (with momentum 0.95), we see small gaps, indicating that imposing the additional constraints does come at at a small (from 16.115 to 16.134) impact on the total error for this workload.

\begin{table}[H]
\begin{center}
\renewcommand{\arraystretch}{1.4}
\begin{tabular}{llS[table-format=2.3]S[table-format=2.3]S[table-format=2.3]}
\toprule
workload $\bfA$ & constraints & $\min_{i,j \in [\dimension]} \bfX\idx{i}{j}$ & $\min_{i,j \in \hPi } \bfX\idx{i}{j}$ &     {Error}  \\ [1.75ex]
\midrule
 prefix-sum & \cref{eq:default-constraint} & -0.000 & -0.000 &  6.461 \\
 prefix-sum &     \cref{eq:pih-constraint} & -0.000 & -0.000 &  6.461 \\
 prefix-sum &  \cref{eq:nonneg-constraint} & -0.000 & -0.000 &  6.461 \\
   momentum & \cref{eq:default-constraint} & -0.031 & -0.031 & 16.114 \\
   momentum &     \cref{eq:pih-constraint} & -0.015 & -0.000 & 16.131 \\
   momentum &  \cref{eq:nonneg-constraint} & -0.000 & -0.000 & 16.134 \\
\bottomrule
\end{tabular}
\caption{Comparison of matrix mechanisms for $n=6$, $\maxpart=3$, and $\assumedsep=2$; the momentum workload uses momentum $0.95$.  Error is the root-total-squared-error, the square root of $\calL(\bfB, \bfC)$ defined in \cref{eq:l_expression}.} \label{tab:negentries}
\end{center}
\end{table}

\section{Analysis for \cref{sec:fft}}
\label{app:sfft-details}
\subsection{Additional Details}\label{app:ssec:fft-additional-definitions}

\mypar{Defining the circulant matrix}
We consider the special case where $\bfA$ is the prefix sum linear query matrix (lower-triangle matrix of ones). Then, we define the corresponding circulant matrix
\begin{equation}
    \acirc\triangleq\begin{bmatrix}
        v_0 & v_{2n-1} & \cdots & v_1\\
        v_1 & v_0 & \cdots & v_2\\
        \vdots & \vdots & \cdots & \vdots\\
        v_{2n-1} & v_{2n-2} & \cdots & v_0
\end{bmatrix}
\qquad \text{where} \qquad
\bfv \triangleq [\underbrace{1,\ldots,1}_{n},\underbrace{0,\ldots,0}_{n}].
\label{eq:circRep}
\end{equation}
It is straightforward to verify ${\acirc}\idx{:n}{:n} = \bfA$.

\mypar{Defining the DFT}
\begin{equation}
    \forall k\in[2n-1]: \fexd[k]=\sum\limits_{a=0}^{2n-1}\bfv(a)\exp\left(\frac{-j2\pi k a}{2n}\right)
    \label{eq:DFT}
\end{equation}

\mypar{Circulant matrices expressed using Fourier Transforms}
\begin{thm}[Adapted from~\cite{gray2006toeplitz}]
    Consider any circulant matrix $\acirc\in\mathbb{R}^{2n\times 2n}$. Let $\fft\in\mathbb{C}^{2n\times 2n}$, where the $k$-th row of $\fft$ is given by  $\fft[k,:]=\frac{1}{\sqrt{2n}}\left[\exp\left(-\frac{j2\pi ka}{2n}\right):a\in\{0,\ldots,2n-1\}\right]$. Then, $\acirc=\fft^{\herm}\Sigma\fft$, where $\Sigma\in\mathbb{C}^{2n\times 2n}$ is a diagonal matrix with the diagonal being the DFT (defined in~\Eqref{eq:DFT}) of the first column of $\acirc$. Here, $\herm$ is the Hermimitian operation.
    \label{thm:circDiag}
\end{thm}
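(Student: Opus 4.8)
The plan is to use the classical fact that every circulant matrix is diagonalized by the Fourier basis, carried out carefully enough to pin down the sign and normalization conventions used here. First I would write $\acirc$ as a polynomial in the cyclic shift. Let $\boldP \in \R^{2n\times 2n}$ be the permutation matrix with $(\boldP\bfx)_i = x_{(i-1)\bmod 2n}$, and let $\bfv = \acirc\idx{:}{0}$ denote the first column of $\acirc$. The defining property of a circulant matrix, $\acirc\idx{i}{\ell} = v_{(i-\ell)\bmod 2n}$, is exactly the identity
\[
\acirc = \sum_{\ell=0}^{2n-1} v_\ell\,\boldP^\ell,
\]
which I would verify by comparing entries with \cref{eq:circRep}.

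Next I would exhibit a single orthonormal eigenbasis shared by all powers of $\boldP$. For $k\in\{0,\dots,2n-1\}$ let $\bfu^{(k)}\in\C^{2n}$ be the $k$-th column of $\fft\herm$, with entries $u^{(k)}_a = \tfrac{1}{\sqrt{2n}}\exp(j2\pi ka/(2n))$. From $(\boldP\bfu^{(k)})_a = u^{(k)}_{(a-1)\bmod 2n}$ one reads off $\boldP\bfu^{(k)} = \mu_k\bfu^{(k)}$ with $\mu_k := \exp(-j2\pi k/(2n))$, hence $\boldP^\ell\bfu^{(k)} = \mu_k^\ell\bfu^{(k)}$ for every $\ell$. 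Orthonormality of $\{\bfu^{(k)}\}_k$ --- equivalently, $\fft$ is unitary, $\fft\fft\herm = \fft\herm\fft = \bfI$ --- follows from the standard roots-of-unity identity $\sum_{a=0}^{2n-1}\exp(j2\pi(k-k')a/(2n)) = 2n\cdot\mathbf{1}[k=k']$, which I would simply record.

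Finally I would assemble these pieces. Applying the polynomial representation to $\bfu^{(k)}$,
\[
\acirc\,\bfu^{(k)} = \Big(\sum_{\ell=0}^{2n-1} v_\ell\,\mu_k^\ell\Big)\bfu^{(k)} = \Big(\sum_{\ell=0}^{2n-1} v_\ell\,\exp(-j2\pi k\ell/(2n))\Big)\bfu^{(k)},
\]
and the scalar multiplier on the right is precisely the $k$-th entry of the DFT of $\bfv$ in the sense of \cref{eq:DFT}. Collecting these identities over $k$ gives $\acirc\fft\herm = \fft\herm\Sigma$, where $\Sigma$ is diagonal with the DFT of the first column of $\acirc$ on its diagonal; right-multiplying by $\fft$ and using $\fft\herm\fft = \bfI$ yields $\acirc = \fft\herm\Sigma\fft$, as claimed. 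The only delicate point --- and the step I would be most careful about --- is the bookkeeping: one must verify that the shift direction chosen for $\boldP$, the sign of the exponent in the rows of $\fft$, and the sign of the exponent in \cref{eq:DFT} are mutually consistent, so that the $k$-th eigenvalue comes out to be the DFT coefficient itself rather than its conjugate or the DFT of the reversed column. No individual step is genuinely hard; alternatively one could simply cite \citet{gray2006toeplitz}.
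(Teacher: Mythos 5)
Your proof is correct, and the bookkeeping does come out right: with $\boldP$ the down-shift, $\acirc=\sum_{\ell}v_\ell\boldP^\ell$ reproduces $\acirc\idx{i}{\ell}=v_{(i-\ell)\bmod 2n}$ from \cref{eq:circRep}, the columns of $\fft\herm$ are eigenvectors with eigenvalue $\sum_\ell v_\ell e^{-j2\pi k\ell/(2n)}$, which is exactly $\fexd[k]$ as defined in \cref{eq:DFT} (unnormalized, negative exponent), and unitarity of $\fft$ then gives $\acirc=\fft\herm\Sigma\fft$. The paper itself gives no proof of this statement --- it is imported from \citet{gray2006toeplitz} --- and your argument is essentially the standard diagonalization proof found there (Gray verifies the same Fourier eigenvectors directly from the eigenvalue equation rather than via the shift-polynomial representation), so there is no substantive difference in approach; your write-up has the added value of confirming that the specific sign and normalization conventions of \cref{eq:DFT,eq:circRep} are mutually consistent.
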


\mypar{Privacy and utility guarantees} In the following we provide the privacy guarantee and the main utility guarantee  for the FFT mechanism defined in Algorithm~\ref{alg:fft}.
\begin{thm}[DP-Prefix Sum via FFT Privacy Guarantee]
Algorithm~\ref{alg:fft} is $\rho$-zCDP in the adaptive continuous release model.
\label{thm:privFFT}
\end{thm}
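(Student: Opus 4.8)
The plan is to recognize Algorithm~\ref{alg:fft} as the matrix mechanism \cref{eq:mat_mech} for a circulant factorization of the \emph{zero-extended} prefix-sum problem, followed by a deterministic post-processing, and then to bound the sensitivity of the encoder and invoke \cref{thm:GaussianAdaptive} to pass from the nonadaptive to the adaptive continual-release model. Concretely, write $\exd = {\sf concat}(\obs, {\bf0}_n)\in\R^{2n}$ and let $\acirc$, $\bfv$ be as in \cref{eq:circRep}. By \cref{thm:circDiag}, $\acirc=\fft\herm\Sigma\fft$ with $\Sigma=\diag(\fexd)$, so $\acirc=\bcirc\ccirc$ for $\bcirc=\fft\herm\Sigma^{1/2}$ and $\ccirc=\Sigma^{1/2}\fft$; since $\acirc\idx{:n}{:n}=\bfA$ is the prefix-sum matrix and the trailing coordinates of $\exd$ vanish, the first $n$ coordinates of $\acirc\exd$ equal $\bfA\obs = \trout$. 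Hence the vector Algorithm~\ref{alg:fft} releases is the real part of the first $n$ coordinates of $\bcirc\big(\ccirc\exd + c\,\bfw\big)$, where $c=\sqrt{\kappa^2\lone{\fexd}/(4n\rho)}$ and $\bfw$ is the standard complex Gaussian. Because $\bcirc$ is a fixed matrix and $c\bfw$ is independent of $\obs$, this is a deterministic $\R$-linear post-processing of $Y:=\ccirc\exd + c\,\bfw$; by the post-processing property of zCDP it suffices to show $\obs\mapsto Y$ — equivalently its real-coordinate form $\obs\mapsto({\sf Re}(\ccirc\exd),{\sf Im}(\ccirc\exd)) + c\cdot(\text{i.i.d.\ }\mathcal N(0,1))$ — is $\rho$-zCDP in the nonadaptive model.

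For the sensitivity bound I would argue as in \cref{sec:sensitivity}: under single participation a neighboring pair of extended streams differs in a single coordinate $i\in\{0,\dots,n-1\}$ by some $\contrib_i$ with $|\contrib_i|\le\kappa$, so $\ltwo{\ccirc(\contrib_i\bfe_i)}^2=|\contrib_i|^2\,\ltwo{\Sigma^{1/2}\fft\bfe_i}^2$. Every entry of column $i$ of the $2n$-point DFT matrix $\fft$ has modulus $1/\sqrt{2n}$, whence $\ltwo{\Sigma^{1/2}\fft\bfe_i}^2=\sum_{k=0}^{2n-1}|\fexd_k|/(2n)=\lone{\fexd}/(2n)$, a quantity independent of $i$. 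Thus $\sensd(\ccirc)=\kappa\sqrt{\lone{\fexd}/(2n)}$, and since $\ltwo{{\sf Re}(\ccirc\contrib)}^2+\ltwo{{\sf Im}(\ccirc\contrib)}^2=\ltwo{\ccirc\contrib}^2$ this same value bounds the $\ell_2$-sensitivity of the real form of the query. (This is the $\maxpart=1$ instance of the bound $\sens_{\deltaset^1_\Pi}(\ccirc)\le\lambda\sqrt{\maxpart}$ of \cref{thm:eval_sens_ub} with $\lambda=\max_i\ltwo{\ccirc\bfe_i}$, and matches the way \cref{thm:multip} is derived for general $\maxpart$.)

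The remaining steps are calibration and the adaptivity lift. The real form of $\obs\mapsto Y$ is the Gaussian mechanism adding i.i.d.\ $\mathcal N(0,c^2)$ noise to a query of $\ell_2$-sensitivity $\Delta:=\kappa\sqrt{\lone{\fexd}/(2n)}$, hence it is $\tfrac{\Delta^2}{2c^2}$-zCDP; plugging in $c^2=\kappa^2\lone{\fexd}/(4n\rho)$ yields exactly $\rho$-zCDP, and post-processing then gives the nonadaptive $\rho$-zCDP guarantee for Algorithm~\ref{alg:fft}. Finally, expressing the $n$-dimensional prefix-sum query as $\bfA=\big(\text{truncate}\circ{\sf Re}\circ\bcirc\big)\big(\ccirc\circ\text{zero-pad}\big)$ — or, cleanly, passing to the distributionally equivalent real-valued encoder of \cref{app:two_dft_mechanisms} — exhibits the mechanism as an instance of \cref{eq:mat_mech} with the above sensitivity bound, so \cref{thm:GaussianAdaptive} upgrades the guarantee to the adaptive continual-release model, proving the claim.

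The main obstacle I expect is the careful handling of the complex-valued noise: (i) tracking the precise relationship between the ``standard complex Normal'' convention and the effective per-coordinate real-Gaussian variance, so that the constant $1/(4n\rho)$ in the noise multiplier lands exactly on $\rho$ rather than a constant multiple of it; (ii) confirming that restricting to the real part and to the first $n$ coordinates is genuinely post-processing of $\ccirc\exd+c\bfw$, which relies on $\acirc\exd$ being real on its first $n$ coordinates and on $\bcirc$ being fixed; and (iii) checking that \cref{thm:GaussianAdaptive} applies despite $\acirc$ being circulant (not lower-triangular) and complex — handled by the real $n$-dimensional factorization induced by $\bcirc,\ccirc$ together with the zero-pad/truncate maps.
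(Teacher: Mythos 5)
Your proposal is correct and follows essentially the same route as the paper: both reduce \cref{alg:fft} to the Fourier-domain Gaussian release $\Sigma^{1/2}\fft\exd + \sigma\bfw$ (with the output a deterministic post-processing of it) and then lift to the adaptive continual-release model by citing the lower-triangularity of the original prefix-sum matrix together with the sphericity of the noise, exactly as in \cref{thm:GaussianAdaptive} / Theorem 2.1 of Denisov et al. The only cosmetic difference is in the middle step: you bound the single-participation $\ell_2$ sensitivity of $\ccirc=\Sigma^{1/2}\fft$ via its column norm $\kappa\sqrt{\lone{\fexd}/(2n)}$ and apply the Gaussian mechanism once, whereas the paper composes per-coordinate zCDP bounds $\rho_i=\kappa^2|\fexd[i]|/(4n\sigma^2)$ using the $\ell_\infty$ bound $\kappa/\sqrt{2n}$ — an equivalent calculation yielding the same $\rho$ (under the paper's convention for the complex normal, which, as you note, is where the factor of two must be checked).
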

Next, we analyze the utility of Algorithm~\ref{alg:fft} and show that it is nearly optimal in terms of the mean squared error (MSE) in the single-pass setting. First, we express the MSE in Theorem~\ref{thm:MSE} below.
\begin{thm}[DP-Prefix Sum via DFT Utility]
The MSE achieved by Algorithm~\ref{alg:fft} using the real and imaginary components of $\widetilde\bfz$ is
$$\mathbb{E}\left[\mse\right]=\frac{\kappa^2\lone{\fexdt}^2}{2\rho n^2}.$$
\label{thm:MSE}
\end{thm}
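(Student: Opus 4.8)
The plan is to reduce the statement to a direct second‑moment computation for a complex Gaussian vector, using only two structural facts: that the $2n$‑point DFT matrix $\fft$ is unitary with constant‑modulus entries, and that the circulant embedding reproduces $\bfA\obs$ exactly on its first $n$ coordinates. First I would isolate the error. Since $\acirc$ and $\exd$ are real, $\acirc\exd$ is real, and by construction its first $n$ entries equal $\bfA\obs$, which is precisely the prefix‑sum vector $\trout$ computed in Algorithm~\ref{alg:fft}; thus $\trout$ carries no approximation error from the Fourier round trip. Using the stated factorization $\acirc=\bcirc\ccirc$, the mechanism's pre‑truncation output is $\bcirc(\ccirc\exd + c\bfw) = \acirc\exd + c\,\bcirc\bfw =: \acirc\exd + \actnoise$, where $c=\sqrt{\kappa^2\lone{\fexdt}/(4n\rho)}$, $\bfw$ is standard complex normal, and $\bcirc=\fft\herm\Sigma^{1/2}$. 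Restricting to the first $n$ coordinates and taking real parts, the per‑coordinate error of the released estimate is exactly $\mathrm{Re}(\actnoise_j)$ for $j\in\{0,\dots,n-1\}$, so $\mathbb{E}[\mse]=\tfrac1n\sum_{j=0}^{n-1}\mathbb{E}\!\left[\mathrm{Re}(\actnoise_j)^2\right]$.

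Next I would compute the covariance of $\actnoise = c\,\fft\herm\Sigma^{1/2}\bfw$ with $\Sigma=\diag(\fexd)$. Because the coordinates of $\bfw$ are i.i.d.\ and circularly symmetric, $\actnoise$ is a circularly‑symmetric complex Gaussian whose covariance is proportional to $\bcirc\bcirc\herm = \fft\herm\Sigma^{1/2}(\Sigma^{1/2})\herm\fft = \fft\herm\diag(|\fexd_0|,\dots,|\fexd_{2n-1}|)\fft$, using $\Sigma^{1/2}(\Sigma^{1/2})\herm=\diag(|\fexd_k|)$; circular symmetry also gives $\mathbb{E}[\mathrm{Re}(\actnoise_j)^2]=\tfrac12\mathbb{E}[|\actnoise_j|^2]$. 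The $j$‑th diagonal entry of this covariance is $\sum_k|\fft_{kj}|^2\,|\fexd_k|$, and here the one genuine observation enters: every entry of $\fft$ has modulus $1/\sqrt{2n}$, so this sum equals $\tfrac{1}{2n}\sum_k|\fexd_k|=\tfrac{1}{2n}\lone{\fexdt}$, \emph{independently of $j$}. Hence the decoded noise is isotropic across the output coordinates, with per‑coordinate variance proportional to $c^2\lone{\fexdt}/(2n)$.

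Finally I would substitute $c^2=\kappa^2\lone{\fexdt}/(4n\rho)$ into this uniform per‑coordinate variance — keeping track of the normalization convention for ``standard complex normal'' and the factor $\tfrac12$ from the real part — and average the identical terms over $j=0,\dots,n-1$, which collapses to $\mathbb{E}[\mse]=\kappa^2\lone{\fexdt}^2/(2\rho n^2)$. The argument is essentially mechanical, so there is no deep obstacle; the part requiring the most care — and where one should double‑check against the definitions in \cref{alg:fft} and \cref{thm:circDiag} — is the constant bookkeeping in the last two steps: the precise scaling of $\bfw$, the role of $\bcirc=\fft\herm\Sigma^{1/2}$ as the decoder, and the real‑part extraction must all line up (together with the constant‑modulus identity for $\fft$) to give exactly the stated constant. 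A secondary point worth stating cleanly is why the imaginary part of $\actnoise$ can be discarded with no loss for this estimator: $\mathrm{Re}(\actnoise_j)$ and $\mathrm{Im}(\actnoise_j)$ are independent (again by circular symmetry), so the released value contains no residual information that a smarter combination of the real and imaginary components could exploit.
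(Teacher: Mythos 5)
Your structural computation (the decoded noise $\actnoise = c\,\fft\herm\Sigma^{1/2}\bfw$ is Gaussian with covariance proportional to $\fft\herm\lvert\Sigma\rvert\fft$, whose diagonal is constant and equal to $\lone{\fexdt}/(2n)$ by the constant modulus $1/\sqrt{2n}$ of the DFT entries) is sound, and is in fact a cleaner way to handle the restriction to the first $n$ coordinates than the paper's own step, which passes through $\tr\left(\lvert\Sigma[:n,:n]\rvert\right)$. However, you are computing a different quantity from the one the theorem asserts, and the difference is exactly the factor of $\tfrac12$ that your write-up defers to ``constant bookkeeping.'' The theorem's $\mse$ is, per its own phrasing (``using the real \emph{and} imaginary components of $\actnoise$'') and per the paper's proof, $\tfrac1n\mathbb{E}\left[\ltwo{\actnoise[0,\ldots,n-1]}^2\right]$ with the complex modulus, i.e., both noise components are counted; the paper separately notes (in the near-optimality discussion) that the released estimate, which keeps only the real part, empirically does \emph{better} than this analytic expression precisely because it discards the imaginary component. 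You instead take the error to be $\mathrm{Re}(\actnoise[j])$ and insert $\mathbb{E}[\mathrm{Re}(\actnoise[j])^2]=\tfrac12\mathbb{E}[\lvert\actnoise[j]\rvert^2]$ via circular symmetry, so under any single fixed convention for the ``standard complex normal'' $\bfw$ your chain produces half of what the paper's computation produces, and both cannot equal the stated $\kappa^2\lone{\fexdt}^2/(2\rho n^2)$. Concretely, writing $\gamma=\mathbb{E}[\lvert w_i\rvert^2]$ and $c^2=\kappa^2\lone{\fexdt}/(4n\rho)$, your own intermediate facts give per-coordinate complex variance $c^2\gamma\lone{\fexdt}/(2n)$ and hence a real-part-only MSE of $\gamma\kappa^2\lone{\fexdt}^2/(16\rho n^2)$, which does not match the claimed constant for either standard normalization $\gamma\in\{1,2\}$. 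The step where you assert the constants ``collapse to'' the stated formula is exactly where this mismatch would surface, and it is not carried out.

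To align with the theorem as stated (and with the paper's one-line proof), drop the real-part reduction: define the error via the full complex noise on the first $n$ coordinates, compute $\tfrac1n\sum_{j<n}\mathbb{E}[\lvert\actnoise[j]\rvert^2]$ using your constant-diagonal observation, and then do the normalization bookkeeping for $\bfw$ explicitly against \cref{alg:fft} rather than by fiat; your remark that the imaginary part can be discarded without loss is correct but belongs to the separate discussion of why the empirical error beats this analytic value, not to the proof of \cref{thm:MSE} itself. (Be aware that the paper's own derivation is also terse here---it identifies $\tr\left(\lvert\Sigma[:n,:n]\rvert\right)$ with $\lone{\fexdt}$---so a careful write-up should state the convention for $\bfw$ and track the factor coming from the first-$n$-coordinates restriction explicitly; your isotropy argument is the right tool for the latter.)
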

In the following, we will have an explict expression for $\lone{\fexdt}$ in terms of the problem parameters. Finally, we will argue that Theorem~\ref{thm:MSE} is nearly optimal.

\begin{cor}
The expected mean squared error (MSE) is given by the following:
$$\mathbb{E}\left[\mse\right]=\frac{\kappa^2}{2\rho n^2}\left(n+\sum\limits_{a=0}^{\lfloor\frac{2n-1}{2}\rfloor}\frac{1}{\sin\left(\frac{\pi(2a+1)}{2n}\right)}\right)^2.$$
\label{cor:mse}
\end{cor}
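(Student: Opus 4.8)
The plan is to substitute the exact value of $\lone{\fexdt}$ into \cref{thm:MSE}; the whole corollary reduces to an explicit evaluation of the $\ell_1$ norm of the length-$2n$ DFT of the vector $\bfv=[\,\underbrace{1,\ldots,1}_{n},\underbrace{0,\ldots,0}_{n}\,]$ defined in \cref{eq:circRep}. So the first step is to compute each coordinate $\fexd[k]=\sum_{a=0}^{2n-1}\bfv(a)\,e^{-j2\pi ka/(2n)}=\sum_{a=0}^{n-1} r^a$ with $r=e^{-j\pi k/n}$ as a finite geometric series. For $k=0$ this is simply $\fexd[0]=n$; for $k\in\{1,\dots,2n-1\}$ we have $r\neq 1$, so $\fexd[k]=(r^n-1)/(r-1)=\big((-1)^k-1\big)/\big(e^{-j\pi k/n}-1\big)$, using $r^n=e^{-j\pi k}=(-1)^k$.

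The second step is to read off magnitudes. The numerator vanishes for even $k$, so $\fexd[k]=0$ for every even $k$ with $1\le k\le 2n-1$. For odd $k$ the numerator is $-2$, and using the standard identity $|e^{j\theta}-1|=2|\sin(\theta/2)|$ together with $\sin>0$ on $(0,\pi)$ (note $\pi k/(2n)\in(0,\pi)$ in this range), we get $|\fexd[k]|=2/|e^{-j\pi k/n}-1|=1/\sin(\pi k/(2n))$. Summing, $\lone{\fexdt}=|\fexd[0]|+\sum_{1\le k\le 2n-1,\ k\text{ odd}}\frac{1}{\sin(\pi k/(2n))}=n+\sum_{k\text{ odd}}\frac{1}{\sin(\pi k/(2n))}$. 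Reindexing the odd indices as $k=2a+1$, which run over $a=0,1,\dots,n-1=\lfloor(2n-1)/2\rfloor$, yields exactly $\lone{\fexdt}=n+\sum_{a=0}^{\lfloor(2n-1)/2\rfloor}\frac{1}{\sin(\pi(2a+1)/(2n))}$.

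Finally, I would plug this expression into the formula $\mathbb{E}[\mse]=\frac{\kappa^2\lone{\fexdt}^2}{2\rho n^2}$ from \cref{thm:MSE}, which immediately gives the stated closed form. There is no real obstacle here — the only points requiring a little care are bookkeeping: correctly identifying that exactly the odd-frequency coefficients survive, pinning down the modulus $|e^{-j\pi k/n}-1|=2\sin(\pi k/(2n))$ with the right sign, and matching the upper limit $\lfloor(2n-1)/2\rfloor$ to the reindexed odd frequencies.
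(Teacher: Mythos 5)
Your proposal is correct and follows essentially the same route as the paper's own proof: evaluate each DFT coefficient of $\bfv$ as a finite geometric series, observe that the nonzero even-frequency coefficients vanish while $|\fexd[k]|=1/\sin(\pi k/(2n))$ for odd $k$ (with $\fexd[0]=n$), and substitute $\lone{\fexdt}$ into \cref{thm:MSE}. Your careful reindexing of the odd frequencies to $a=0,\dots,n-1=\lfloor(2n-1)/2\rfloor$ in fact matches the corollary's stated upper limit more cleanly than the paper's own final display, which writes $\lfloor(n-1)/2\rfloor$.
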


\vspace{-1em}
\mypar{Near-optimal utility} Here, we show that Theorem~\ref{thm:MSE} is near-optimal in utility for the single-participation setting.
To do this, we compare with a lower bound on the expected $\mse$ of any factorization-based mechanism
from~\citet[Theorem 2]{HUU22}: $\frac{1}{2\rho\pi^2}\left(2+\ln\left(\frac{2n+1}{3}\right)+\frac{\ln(2n+1)}{2n}\right)^2$. We find that the though our analytical upper bound in Corollary~\ref{cor:mse} is $\approx6$x worse than the lower bound, the empirical noise added in Algorithm~\ref{alg:fft} closely tracks the lower bound to within a factor of $1.2$x---because it only adds the real part of the noise. Results are in Figure~\ref{fig:lbc} of Appendix~\ref{app:ssec:fft-additional-definitions}.

\textbf{Showing near-optimal utility via MSE experiments}
\FloatBarrier  %

\begin{figure}[H]
    \vspace{-4mm}
    \centering
    \includegraphics[trim={4.25cm 9cm 4cm 9cm},clip,scale=0.5]{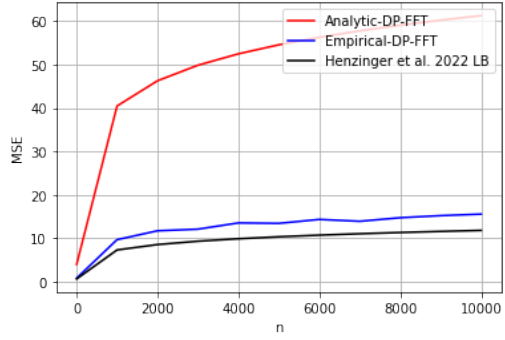}\vspace{-6mm}
    \caption{\textbf{Algorithm~\ref{alg:fft} achieves near-optimal utility} as measured by the analytic lower bound from~\citet[Theorem 2]{HUU22}.}
    \vspace{-1em}
    \label{fig:lbc}
\end{figure}

\subsection{Proof of~\cref{thm:privFFT}}\label{app:ssec:privFFT}
\begin{namedtheorem}[\cref{thm:privFFT}]
Algorithm~\ref{alg:fft} is $\rho$-zCDP in the adaptive continuous release model.
\end{namedtheorem}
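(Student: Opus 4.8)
The plan is to recognize \cref{alg:fft} as an instance of the matrix mechanism~\cref{eq:mat_mech} applied to the zero-padded stream, using the circulant factorization $\acirc=\bcirc\ccirc$ supplied by \cref{thm:circDiag}, and then to invoke the adaptive-to-nonadaptive reduction \cref{thm:GaussianAdaptive}. First I would pin down what the algorithm actually releases. With $\bfxd={\sf concat}(\obs,{\bf 0}_n)$ and $\trout=[\obs_0,\obs_0+\obs_1,\ldots]$, the identity $\acirc\idx{:n}{:n}=\bfA$ (prefix-sum matrix) together with the zero padding gives $\trout=(\acirc\bfxd)_{[:n]}=\bfA\obs$, which is real; the added noise is $\actnoise_{[:n]}=(c\,\bcirc\bfw)_{[:n]}$ with $c=\sqrt{\clipnorm^2\lone{\fexd}/(4n\rho)}$ and $\bfw$ a standard complex normal, which is data-independent. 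Hence the output is $\obs\mapsto\bfA\obs+\Re\!\big((c\,\bcirc\bfw)_{[:n]}\big)$, i.e. the prefix sums perturbed by a fixed Gaussian. The strategy is to show this is $\rho$-zCDP in the \emph{nonadaptive} continual-release model and then lift it via \cref{thm:GaussianAdaptive}; the latter requires exhibiting a factorization $\bfA=\bfB\bfC$ with the noise in the form $\bfB\bfZ$ for isotropic $\bfZ$.

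Next I would reduce everything to a purely real matrix mechanism by unfolding $\C^{2n}\cong\R^{4n}$ (stacking real and imaginary parts), under which complex matrix–vector multiplication is $\R$-linear. Let $\bfC$ be the real unfolding of $\tfrac1c\,\ccirc$ composed with the zero-embedding $\obs\mapsto\bfxd$, and let $\bfB$ be ``restriction to the first $n$ real-part coordinates'' composed with the real unfolding of $c\,\bcirc$. Since the prefix sums are real, restricting $\acirc\bfxd$ to its first $n$ real-part coordinates recovers $\bfA\obs$ exactly, and $\acirc=\bcirc\ccirc$ unfolds/restricts correctly, so $\bfA=\bfB\bfC$; moreover the unfolding of the standard complex normal $\bfw$ is an isotropic real Gaussian $\bfZ$, and the mechanism is literally $\bfA\obs+\bfB\bfZ=\bfB(\bfC\obs+\bfZ)$ as in \cref{thm:GaussianAdaptive}. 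One point to handle with care: $\acirc$ (hence $\bcirc,\ccirc$) is rank-deficient — indeed $\fexd_k=0$ for even $k\neq 0$ — so $\bfB,\bfC$ are not full-rank. This is harmless: the matrix $\bfA$ that we factor is full-rank lower-triangular, and the noise $\bfZ$ is nondegenerate, which is all that the sensitivity argument and the hypotheses of \cref{thm:GaussianAdaptive} need (this is the ``distributionally equivalent real-valued encoder'' of \cref{app:two_dft_mechanisms}).

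Then I would compute the sensitivity of $\bfC$. In the single-participation continual-release setting $\deltaset^1_\Pi$ has corners $\pm\clipnorm e_j$ for $j\in\{0,\ldots,n-1\}$ (the last $n$ coordinates of $\bfxd$ are identically zero and never change), so $\sensd(\bfC)=\clipnorm\max_{j<n}\ltwo{(\ccirc)_{[:,j]}}$. Because every column of the normalized DFT matrix $\fft$ has unit $\ell_2$ norm and $\Sigma^{1/2}=\diag(\sqrt{\fexd_k})$, we get $\ltwo{(\ccirc)_{[:,j]}}^2=\sum_k|\fexd_k|\,|\fft\idx{k}{j}|^2=\tfrac1{2n}\lone{\fexd}$ for every $j$, so $\sensd(\bfC)=\clipnorm\sqrt{\lone{\fexd}/(2n)}$. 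Plugging the noise scale $c=\sqrt{\clipnorm^2\lone{\fexd}/(4n\rho)}$ chosen in \cref{alg:fft} into the standard zCDP bound $\tfrac{\sensd(\bfC)^2}{2\sigma^2}$ for the Gaussian mechanism, and tracking the constant from the normalization of the standard complex normal, the parameter is exactly $\rho$. This establishes $\rho$-zCDP in the nonadaptive model, and \cref{thm:GaussianAdaptive} then upgrades it to the adaptive continuous-release model, which completes the proof.

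The main obstacle is the bookkeeping across the $\C\to\R$ unfolding: the target constant comes out $\rho$ rather than $2\rho$ or $\rho/2$ only if one is precise about the normalization of the ``standard complex normal'' and identifies the correct real-valued encoder that is distributionally equivalent to the complex one (the content of \cref{app:two_dft_mechanisms}), and one must also confirm that the rank-deficiency of $\acirc$ interferes with neither the sensitivity computation nor the applicability of \cref{thm:GaussianAdaptive}. Once that accounting is settled, the remainder is a direct application of the Gaussian-mechanism zCDP guarantee and the adaptive-streaming reduction already established in the paper.
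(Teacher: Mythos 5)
Your proposal is correct, and its overall skeleton matches the paper's: establish the non-adaptive zCDP guarantee with the scale $\sigma^2=\kappa^2\lone{\fexdt}/(4n\rho)$, then lift to the adaptive continual-release model via \citet[Theorem 2.1]{denisov22matfact} (our \cref{thm:GaussianAdaptive}), using that the noise in the encoded space is spherical and $\bfA$ is lower triangular. Where you differ is in how the non-adaptive bound is obtained. The paper works coordinate-wise in the Fourier domain: each coordinate of $\fft\exd$ has modulus sensitivity $\kappa/\sqrt{2n}$, receives complex Gaussian noise of scale $\sigma/\sqrt{|\fexd[i]|}$, yielding $\rho_i=\kappa^2|\fexd[i]|/(4n\sigma^2)$, and zCDP composition over the $2n$ coordinates gives $\rho=\kappa^2\lone{\fexdt}/(4n\sigma^2)$. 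You instead cast \cref{alg:fft} as a single (real-unfolded) matrix mechanism with encoder $\ccirc=\Sigma^{1/2}\fft$ (plus zero-embedding) and compute its single-participation sensitivity directly as the maximal column norm, $\bigl\|(\ccirc)\idx{:}{j}\bigr\|_2^2=\sum_k|\fexd[k]|\,|\fft\idx{k}{j}|^2=\lone{\fexdt}/(2n)$, then apply the Gaussian mechanism once. The two computations are equivalent—the weighted sum $\sum_k|\fexd[k]|/(2n)$ is the same object—but your packaging avoids per-coordinate composition and makes explicit the real-valued encoder/decoder translation that the paper defers to \cref{app:two_dft_mechanisms}, which is arguably cleaner for checking the hypotheses of \cref{thm:GaussianAdaptive} (rectangular, rank-deficient factors are indeed harmless since only $\bfA$ must be full rank). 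The one point you rightly flag as delicate is the normalization of the ``standard complex normal'': the stated constant $\rho$ (rather than $2\rho$) corresponds to the convention, implicit in the paper's formula for $\rho_i$, that real and imaginary parts each have unit variance, and your unfolding then yields an isotropic real Gaussian consistent with your final accounting.
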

\begin{proof}
First, consider the non-adaptive setting and the following mechanism, with parameters as defined in Algorithm~\ref{alg:fft},
\begin{align}
 \left[\Sigma\left(\fft\exd+\Sigma^{-1}\bfz\right)\right],\nonumber\\  \text{where $\bfz=\sqrt{\frac{\kappa^2\lone{\fexdt}}{4n\rho}}\left(\sqrt{\Sigma}\cdot\bfw\right)$}  
 \label{eq:factmech1}
\end{align}
We claim that this satisfies $\frac{\kappa^2\lone{\fexdt}}{4n\sigma^2}$-zCDP. To see this, we proceed by bounding $\rho_i$ for each coordinate $i \in [2n]$ defined in~\Eqref{eq:factmech1}. 
For brevity, let $\bfb=\fft\exd$. Consider two neighboring data sets $\bfg$ and $\bfg'$, correspondingly, $(\bfb,\exd)$ and $(\bfb',\exd')$. Then,
\begin{equation}
    \linfty{\bfb-\bfb'}=\linfty{\fft(\exd-\exd')}=\frac{\kappa}{\sqrt{2n}}.\label{eq:sens}
\end{equation}
We will now prove zCDP guarantee independently for each of the $2n$ coordinates and then use standard zCDP composition~\citep{bun2016concentrated}. For any coordinate $a\in\{0,\ldots,2n-1\}$, adding noise $\left(\frac{\sigma}{\sqrt{|\fexd[i]|}}\right)\cdot \calN_{\sf complex}\left(0,1\right)$ to $\bfb[i]$ satisfies $\rho_i$-zCDP with $\rho_i=\frac{\kappa^2|\fexd[i]|}{4n\sigma^2}$. Then by composition, we have that
\begin{equation}
    \rho = \sum\limits_{a=0}^{2n-1} \left(\rho_i\right)=\frac{\kappa^2}{4n\sigma^2}\sum\limits_{a=0}^{2n-1}\left(\left|\fexd[i]\right|\right)=\frac{\kappa^2\lone{\fexdt}}{4n\sigma^2}.
    \label{eq:zCDPtot}
\end{equation}
Therefore, setting $\sigma^2=\frac{\kappa^2\lone{\fexdt}}{4n\rho}$-satisfies a non-adaptive $\rho$-zCDP. Using the same $\sigma$, we prove the adaptive part using the same $\sigma$. We have the following from~\Eqref{eq:factmech1}.
{\small
\begin{equation}
    \hspace{-2.5mm}\left[\fft\herm\left(\Sigma\fft\exd+\bfz\right)\right]=
    \left[\fft\herm\sqrt{\Sigma}\left(\sqrt{\Sigma}\fft\exd+\frac{1}{\sqrt{\Sigma}}\bfz\right)\right]=
    \left[\fft\herm\sqrt{\Sigma}\left(\sqrt{\Sigma}\fft\exd+\sqrt{\frac{\kappa^2\lone{\fexdt}}{4n\rho}}\cdot\bfw\right)\right]
    \label{eq:0uf}
\end{equation}}
Since, $\bfw$ in~\Eqref{eq:0uf} is spherical Gaussian, and the original query matrix $\bfA$ is lower triangular, by Theorem 2.1 in~\citet{denisov22matfact}, the adaptive privacy guarantee follows.
\end{proof}

\subsection{Proof of~\cref{thm:MSE}}\label{app:ssec:MSE}
\begin{namedtheorem}[\cref{thm:MSE}]
The MSE achieved by Algorithm~\ref{alg:fft} using the real and imaginary components of $\widetilde\bfz$ is
$$\mathbb{E}\left[\mse\right]=\frac{\kappa^2\lone{\fexdt}^2}{2\rho n^2}.$$
\end{namedtheorem}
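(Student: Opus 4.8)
The plan is to read the additive error of \cref{alg:fft} off in closed form and collapse its second moment using unitarity of the DFT (Parseval). By construction the mechanism returns $\trout$ plus the noise vector
\[
\actnoise \;=\; \sqrt{\tfrac{\kappa^2\lone{\fexdt}}{4n\rho}}\;\bigl(\fft\herm\Sigma^{1/2}\bfw\bigr),
\qquad \Sigma=\diag(\fexd),
\]
where $\bfw$ is a standard (proper, isotropic) complex Gaussian in $2n$ dimensions: its coordinates are independent, $\mathbb{E}[\bfw_a]=0$, $\mathbb{E}[\bfw_a\bfw_b]=0$, and $\mathbb{E}[|\bfw_a|^2]=2$ (unit variance in each of the real and imaginary parts). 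I will use only two structural facts: (i) $\fft$ is unitary --- the orthogonality of complex exponentials, the same identity behind \cref{thm:circDiag} --- so $\fft\herm$ preserves the Euclidean norm; and (ii) the diagonal of $\Sigma^{1/2}$ has squared moduli $|(\Sigma^{1/2})_{aa}|^2=|\fexd[a]|$, hence $\sum_{a=0}^{2n-1}|(\Sigma^{1/2})_{aa}|^2=\lone{\fexdt}$.

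First I would compute $\mathbb{E}\bigl[\|\actnoise\|_2^2\bigr]$, the expected squared norm of the full complex error (this is what is meant by ``using the real and imaginary components of $\widetilde{\bfz}$'': we charge the error in every real and imaginary coordinate, not just the real parts actually returned by \cref{alg:fft}). By (i), $\mathbb{E}\bigl[\|\fft\herm\Sigma^{1/2}\bfw\|_2^2\bigr]=\mathbb{E}\bigl[\|\Sigma^{1/2}\bfw\|_2^2\bigr]$, and since the $\bfw_a$ are independent and zero-mean this equals $\sum_a|\fexd[a]|\,\mathbb{E}[|\bfw_a|^2]=2\lone{\fexdt}$ by (ii). Pulling out the scalar factor gives $\mathbb{E}\bigl[\|\actnoise\|_2^2\bigr]=\tfrac{\kappa^2\lone{\fexdt}}{4n\rho}\cdot 2\lone{\fexdt}=\tfrac{\kappa^2\lone{\fexdt}^2}{2n\rho}$; dividing by $n$ (the normalization entering $\mse$) yields $\mathbb{E}[\mse]=\tfrac{\kappa^2\lone{\fexdt}^2}{2\rho n^2}$, as claimed. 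For \cref{cor:mse} I would then evaluate $\lone{\fexdt}$ from \cref{eq:DFT}: $\fexd[0]=n$; for a nonzero even index the geometric series $\sum_{b=0}^{n-1}e^{-j\pi ab/n}$ telescopes to $0$; and for odd $a$ it is $-2/(e^{-j\pi a/n}-1)$, of modulus $1/\sin(\pi a/(2n))$; summing over the odd indices $a=2t+1$, $t=0,\dots,n-1$, gives $\lone{\fexdt}=n+\sum_{t=0}^{n-1}1/\sin\!\bigl(\pi(2t+1)/(2n)\bigr)$, and substituting into the theorem gives the corollary.

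The step I expect to be the real obstacle is not the norm computation --- it is a one-line consequence of unitarity --- but threading the numerical constant consistently: one must fix the normalization convention for the ``standard complex Normal'' to match the privacy analysis of \cref{thm:privFFT} (where the same $\bfw$ and the same scaling are shown to give exactly $\rho$-zCDP), relate the second moment of the complex error to that of its real part via properness of $\bfw$ (they differ by a factor of two, which is precisely why the bound here is loose relative to what \cref{alg:fft} actually achieves), and be precise about which coordinates and which components enter $\mse$. Once those are aligned the constant $1/(2\rho n^2)$ drops out, and the resulting bound is only a small constant away from the \citet[Theorem 2]{HUU22} lower bound, consistent with the discussion following \cref{cor:mse}.
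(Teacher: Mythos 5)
Your proposal is correct and arrives at the stated formula, but by a slightly different route than the paper's proof of \cref{thm:MSE}. The paper computes $\mathbb{E}[\mse]=\tfrac1n\,\mathbb{E}\bigl[\ltwo{\actnoise[0,\ldots,n-1]}^2\bigr]$, i.e.\ it charges only the first $n$ complex coordinates of the noise (the ones entering the output) and evaluates this via the trace of the truncated covariance, written there as $\tr\left(|\Sigma[:n,:n]|\right)$ in \cref{eq:tr1}. You instead compute the energy of the full $2n$-dimensional complex noise via unitarity of $\fft$ (Parseval), getting $2\lone{\fexdt}$ times the scale factor, and then divide by $n$. Because the unscaled covariance $\fft\herm|\Sigma|\fft$ is circulant and therefore has constant diagonal, every coordinate of $\actnoise$ has the same variance, so your full-vector quantity is exactly twice the paper's first-$n$ quantity; that factor of two is precisely the slack between the two write-ups, and it is absorbed by the normalization convention for the ``standard complex Normal'' ($\mathbb{E}|\bfw_a|^2=2$ in your convention), a point the paper's one-line proof leaves implicit --- its identification of $\tr(|\Sigma[:n,:n]|)$ with $\lone{\fexdt}$ is itself only exact up to this kind of constant bookkeeping. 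Your route buys a cleaner norm computation (no submatrix or partial trace, just unitarity and the diagonal of $\Sigma$) at the cost of having to argue, as you do, that charging all $2n$ real and imaginary components is the intended reading of ``using the real and imaginary components of $\widetilde{\bfz}$''; the paper's route stays closer to the coordinates the algorithm actually releases but hides the same constant-tracking inside the truncated trace. Your evaluation of $\lone{\fexdt}$ for \cref{cor:mse} matches the paper's.
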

\begin{proof}
The MSE is given by the following: 
\begin{equation}
    \mathbb{E}[\mse]=\frac{1}{n}\mathbb{E}\left[\ltwo{\actnoise[0,\ldots,n-1]}^2\right]=\frac{\kappa^2\lone{\fexdt}}{2n^2\rho}\cdot\tr\left(|\Sigma[:n,:n]|\right)=\frac{\kappa^2\lone{\fexdt}^2}{2n^2\rho}.
    \label{eq:tr1}
\end{equation}
In~\eqref{eq:tr1}, $\Sigma[:n,:n]$ refers to the top-left $n\times n$ submatrix of $\Sigma$.
\end{proof}

\subsection{Proof of~\cref{cor:mse}}\label{app:ssec:mse}
\begin{namedcorollary}[\cref{cor:mse}]
Under the same setting as Theorem~\ref{thm:MSE}, the MSE for Algorithm~\ref{alg:fft} is the following
$$\mathbb{E}\left[\mse\right]=\frac{\kappa^2}{2\rho n^2}\left(n+\sum\limits_{a=0}^{\lfloor\frac{2n-1}{2}\rfloor}\frac{1}{\sin\left(\frac{\pi(2a+1)}{2n}\right)}\right)^2.$$
\end{namedcorollary}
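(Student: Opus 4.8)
The plan is to invoke \cref{thm:MSE} as a black box and reduce the Corollary to an explicit evaluation of $\lone{\fexdt}$, the $\ell_1$ norm of the DFT of the window vector $\bfv = [\underbrace{1,\dots,1}_n,\underbrace{0,\dots,0}_n]$ from \cref{eq:circRep}. Since \cref{thm:MSE} gives $\mathbb{E}[\mse] = \tfrac{\kappa^2 \lone{\fexdt}^2}{2\rho n^2}$, it suffices to show $\lone{\fexdt} = n + \sum_{a=0}^{\lfloor (2n-1)/2\rfloor} \tfrac{1}{\sin(\pi(2a+1)/(2n))}$ and substitute.

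First I would write out, using \cref{eq:DFT} and the fact that $\bfv$ is supported on its first $n$ coordinates, $\fexd[k] = \sum_{a=0}^{n-1}\exp(-j\pi k a/n)$ for each $k\in\{0,\dots,2n-1\}$. For $k=0$ this equals $n$. For $k\in\{1,\dots,2n-1\}$ we have $\exp(-j\pi k/n)\ne 1$, so the geometric series sums to $\fexd[k] = \tfrac{1-\exp(-j\pi k)}{1-\exp(-j\pi k/n)} = \tfrac{1-(-1)^k}{1-\exp(-j\pi k/n)}$. Hence $\fexd[k]=0$ whenever $k$ is even, and $\fexd[k] = \tfrac{2}{1-\exp(-j\pi k/n)}$ whenever $k$ is odd.

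Next I would pass to magnitudes. Using $|1-\exp(-j\theta)| = 2|\sin(\theta/2)|$, for odd $k$ this gives $|\fexd[k]| = \tfrac{1}{|\sin(\pi k/(2n))|}$; and since $0 < \pi k/(2n) < \pi$ for $1\le k\le 2n-1$, the sine is strictly positive and the absolute value can be dropped. Summing $|\fexd[k]|$ over all $k$: the even $k\ge 2$ contribute $0$, the term $k=0$ contributes $n$, and the odd $k$ in $\{1,\dots,2n-1\}$ are exactly $k=2a+1$ for $a=0,\dots,n-1$ (note $n-1 = \lfloor(2n-1)/2\rfloor$), contributing $\sum_{a=0}^{n-1}\tfrac{1}{\sin(\pi(2a+1)/(2n))}$. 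This yields the claimed formula for $\lone{\fexdt}$; plugging it into \cref{thm:MSE} completes the proof.

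There is essentially no real obstacle here; the only points needing slight care are (i) verifying that $\exp(-j\pi k/n)\ne 1$ exactly on $k\in\{1,\dots,2n-1\}$ so that the closed-form geometric sum is valid, and (ii) matching the set of odd integers in $\{1,\dots,2n-1\}$ with the index range $a\in\{0,\dots,\lfloor(2n-1)/2\rfloor\}$, both of which are immediate. As an optional sanity check one can verify the identity for small $n$ (e.g.\ $n=1$: $\lone{\fexdt} = |\fexd[0]| + |\fexd[1]| = 1 + 1 = 2 = 1 + 1/\sin(\pi/2)$).
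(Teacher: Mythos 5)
Your proposal is correct and follows essentially the same route as the paper's own proof: evaluate the geometric sum defining the DFT of the $0$--$1$ window vector, observe $\fexd[0]=n$ and vanishing at even $k$, convert $\left|1-e^{-j\pi k/n}\right|$ to $2\sin(\pi k/(2n))$ for odd $k$, and substitute the resulting $\lone{\fexdt}$ into \cref{thm:MSE}. Your index bookkeeping (odd $k=2a+1$ with $a=0,\dots,n-1=\lfloor(2n-1)/2\rfloor$) is in fact slightly more careful than the paper's, whose final displayed sum writes the upper limit as $\lfloor(n-1)/2\rfloor$, an apparent typo relative to the corollary statement.
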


\begin{proof}
Recall the definition of DFT from~\Eqref{eq:DFT} and of $\bfv$ in~\Eqref{eq:circRep}. It is immediate that $\fexd[0]=n$. For any $k\neq 0$, we have,
\begin{equation}
    \fexd[k]=\frac{1-\exp\left(\frac{-j2\pi k n}{2n}\right)}{1-\exp\left(\frac{-j2\pi k}{2n}\right)}=\frac{1-\exp\left({-j\pi k }\right)}{1-\exp\left(\frac{-j\pi k}{n}\right)}.
    \label{eq:f2}
\end{equation}
From~\Eqref{eq:f2}, we have that when $k>0$ is even, $\fexd[k]=0$. For $k$ odd, we have
\begin{equation}
    \left|\fexd[k]\right|=\left|\frac{2}{1-\exp\left(\frac{-j\pi k}{n}\right)}\right|=\frac{1}{\left|\sin(\pi k/(2n))\right|}=\frac{1}{\sin(\pi k/(2n))}
    \label{eq:f3}
\end{equation}
Combining these, the term $\lone{\fexdt}$ is
\begin{equation}
    \lone{\fexdt}=n+\sum\limits_{a=0}^{\lfloor\frac{n-1}{2}\rfloor}\frac{1}{\sin(\pi(2a+1)/(2n))}
\end{equation}
\end{proof}

\subsection{Proof of~\cref{thm:multip}}\label{app:ssec:multip}
\begin{namedtheorem}[\cref{thm:multip}]
Under $\maxpart$ participation, Algorithm~\ref{alg:fft} satisfies $(\maxpart^2\rho)$-zCDP.
\end{namedtheorem}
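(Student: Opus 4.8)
The plan is to follow the proof of \cref{thm:privFFT} essentially verbatim, changing only the sensitivity bound \cref{eq:sens} to account for $\maxpart$ participations. Recall that in the single-participation analysis, the only property of neighboring streams that is used is that the delta $\exd - \exd'$ is supported on a single coordinate of magnitude at most $\kappa$, which forces $\linfty{\fft(\exd-\exd')} \le \kappa/\sqrt{2n}$ (every entry of $\fft$ has modulus $1/\sqrt{2n}$); this yields the per-coordinate bound $\rho_i = \kappa^2|\fexd[i]|/(4n\sigma^2)$ and, after zCDP composition over the $2n$ coordinates, the $\rho$-zCDP guarantee.

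First I would bound the sensitivity under $\maxpart$-participation. If a single record participates in a step-set $\pi$ with $|\pi| \le \maxpart$, then $\exd - \exd'$ is supported on the coordinates of $\pi$, and in the scalar ($\mdim=1$) setting of \cref{alg:fft} each touched coordinate changes by magnitude at most $\clipnorm = \kappa$. Writing $\exd - \exd' = \sum_{i \in \pi} u_i\,\bfe_i$ with $|u_i| \le \kappa$, the triangle inequality gives
\[
\linfty{\fft(\exd - \exd')} \le \sum_{i \in \pi} |u_i|\,\linfty{\fft \bfe_i} \le \maxpart\cdot\frac{\kappa}{\sqrt{2n}},
\]
so the worst-case $\ell_\infty$-sensitivity of $\fft\exd$ grows by exactly the factor $\maxpart$ relative to \cref{eq:sens}. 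Since the noise scale $\sigma$ in \cref{alg:fft} is unchanged and the zCDP parameter of a Gaussian release scales with the square of the $\ell_2$-sensitivity, each coordinate now contributes $\rho_i' = \maxpart^2\rho_i$, and summing as in \cref{eq:zCDPtot} gives $(\maxpart^2\rho)$-zCDP in the non-adaptive continual-release model. Finally, exactly as in the last step of the proof of \cref{thm:privFFT} — rewriting the mechanism as in \cref{eq:0uf} as a spherical-Gaussian perturbation $\Sigma^{1/2}\fft\exd + c\,\bfw$ post-processed by a fixed linear map, with lower-triangular query matrix $\bfA$ — \cref{thm:GaussianAdaptive} lifts this to adaptively chosen rows, giving $(\maxpart^2\rho)$-zCDP as claimed.

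The only point requiring care is to confirm that the per-coordinate composition in the proof of \cref{thm:privFFT} depends on the neighboring pair \emph{only} through $\linfty{\fft(\exd-\exd')}$: because the added noise is independent across the Fourier coordinates, no correlation structure of the delta across coordinates can help the adversary, so substituting $\maxpart\kappa/\sqrt{2n}$ for $\kappa/\sqrt{2n}$ is the sole change needed. Equivalently, one could argue directly in the $\ell_2$ geometry by bounding the sensitivity of the encoder $\Sigma^{1/2}\fft$ as $\le \maxpart$ times its single-participation value $\kappa\sqrt{\lone{\fexdt}/(2n)}$, again by the triangle inequality, and reading off $\maxpart^2\rho$ from the calibration of $c$. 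I would also remark that this $\maxpart$-factor bound is generically loose: when the schema is $\multiepoch$-participation one can instead invoke \cref{thm:eval_sens_ub} (an $O(\sqrt{\maxpart})$ bound) or, since the FFT encoder can be translated to a nonnegative real-valued one, \cref{cor:matrix_to_vector_sens} together with \cref{eq:bruteforce} for an exact computation — which is what we use in the experiments.
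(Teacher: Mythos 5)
Your proposal is correct and follows essentially the same route as the paper: the paper's proof of \cref{thm:multip} simply reruns the proof of \cref{thm:privFFT} with \cref{eq:sens} replaced by $\linfty{\fft(\exd-\exd')}=\maxpart\kappa/\sqrt{2n}$, so the per-coordinate zCDP terms scale by $\maxpart^2$ and composition plus the adaptivity lift give $(\maxpart^2\rho)$-zCDP, exactly as you argue. Your added remarks (why only the $\ell_\infty$ bound matters, and that the $\maxpart$ factor is loose compared with \cref{thm:eval_sens_ub}) are consistent with the paper but not needed for the result.
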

\begin{proof}
The proof goes exactly as Theorem~\ref{thm:privFFT}, except~\eqref{eq:sens} gets replaced by the following:
\begin{equation}
    \linfty{\bfb-\bfb'}=\linfty{\fft(\exd-\exd')}=\frac{\maxpart\kappa}{\sqrt{2n}}.\label{eq:sensx}
\end{equation}
\end{proof}

\section{Two related FFT mechanisms.}\label{app:two_dft_mechanisms}

The FFT mechanism presented in~\cref{sec:fft} can be understood as an application of a complex-valued matrix mechanism factorizing the prefix-sum matrix as
\[
\bfB = \bfP \fft\herm \sqrt{\Sigma},
\]
\[
\bfC = \sqrt{\Sigma} \fft \bfE,
\]
where $\bfE$ and $\bfP$ are appropriate embedding and projection matrices, respectively embedding an $\dimension$-dimensional vector in the first $\dimension$ components of $\R^{2\dimension}$, and projecting those same first $\dimension$ components back to $\R^n$, and following this application by `chopping off' the imaginary part of the noise. The entries of $\Sigma$ may be computed exactly; they contain no purely negative entries, so specifying the principal branch of the square root resolves the implicit ambiguity in the formulation above. This branch corresponds as well to the implementation of the complex square root in major software frameworks (e.g., NumPy).

All these operations are linear; and since everything begins and ends in the real domain, this mechanism can be expressed as a real-valued mechanism. Therefore identical codepaths can be used for implementing experiments with the FFT, though notably without some special implementation of the mechanism, realizing the potential computational savings will not be immediate. In this small section, we translate this complex-valued mechanism into \emph{two} real-valued mechanism which can be integrated with the code backing the rest of the paper. These mechanisms differ in their decoding matrix $\bfB$, and thus achieve different levels of loss. Both have efficient implementations, though with asymptotics differing by a logarithmic factor. We implement and experiment with \emph{both} of these mechanisms, though we only report results from the mechanism with lower loss in the main body.

These two mechanisms share an encoding matrix:

\newcommand{\fftencoder}{\ensuremath{\bfC_\bfF}}
\newcommand{\fftdecoder}{\ensuremath{\bfB_\bfF}}

\[
 \fftencoder= \fft\herm\bfC,
\]

which is real-valued by~\cref{lem:real_sqrt}. 
Note that the sensitivity of $\fftencoder$ is identical to that of $\bfC$ for any notion of sensitivity expressible as~\cref{def:sensopmech} due to the unitary of the Fourier transform. Since this matrix is of shape $[2\dimension, \dimension]$, there is choice in computing the decoder $\bfB$ such that $\bfB\fftencoder$ represents the prefix-sum matrix. The two decoders we present below correspond to two subtly distinct mechanisms.

\mypar{Mechanism 1: A real-valued version of the mechanism presented in~\cref{sec:fft}}

One natural translation of the analysis in~\cref{sec:fft} (indeed, a real-valued version of the precise operation described in~\cref{alg:fft}) may be computed by inserting a Fourier transform to match the inverse transform in $\fftencoder$:

\[
\fftdecoder = \bfB\fft,
\]

Clearly $\fftdecoder\fftencoder = \bfB\bfC$, and $\fftdecoder$ real-valued by~\cref{lem:real_sqrt}.

\begin{prop}
For any $\deltaset$, the mechanism described in~\cref{sec:fft} is distributionally equivalent to an application of the real-valued matrix mechanism with the factorization $\left(\fftdecoder, \fftencoder\right)$, and satisfies the same privacy guarantees.
\end{prop}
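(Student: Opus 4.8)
The plan is to show that, on every input stream $\obs$ (adaptive or not), both mechanisms output the exact prefix sum $\bfA\obs$ plus an input-independent mean-zero Gaussian perturbation, and that these two perturbations have the same covariance. Because a mean-zero Gaussian is determined by its covariance and the perturbation does not depend on $\obs$, this gives the full distributional equivalence, even in the adaptive continual-release model; the privacy claim then follows for free --- either because identically distributed mechanisms have identical DP guarantees and \cref{thm:privFFT,thm:GaussianAdaptive} already establish the guarantee for the \cref{sec:fft} mechanism, or directly, since $(\fftdecoder,\fftencoder)$ is a genuine real factorization of a lower-triangular $\bfA$ with $\sensd(\fftencoder)=\sensd(\bfC)$ for every $\deltaset$ (the observation recorded just above the statement), so \cref{thm:GaussianAdaptive} applies unchanged.

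First I would fix notation. With $\acirc=\fft\herm\Sigma\fft$ (\cref{thm:circDiag}), $\bfC=\Sigma^{1/2}\fft\bfE$ and $\bfB=\bfP\fft\herm\Sigma^{1/2}$, set $M:=\fft\herm\Sigma^{1/2}\fft$, so that $\fftencoder=\fft\herm\bfC=M\bfE$ and $\fftdecoder=\bfB\fft=\bfP M$; by \cref{lem:real_sqrt} the matrix $M$, hence $\fftencoder$ and $\fftdecoder$, is real. Two short computations using unitarity of $\fft$ then give $M^{2}=\fft\herm\Sigma\fft=\acirc$, whence $\fftdecoder\fftencoder=\bfP M^{2}\bfE=\bfP\acirc\bfE=\bfA$, and $MM\tp=\fft\herm\Sigma^{1/2}\overline{\Sigma^{1/2}}\fft=\fft\herm\diag(|\fexd|)\fft=:R$, a real symmetric PSD matrix. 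Consequently, the real matrix mechanism with factorization $(\fftdecoder,\fftencoder)$ releases $\fftdecoder(\fftencoder\obs+\tilde\bfz)=\bfA\obs+\bfP M\tilde\bfz$ for isotropic real Gaussian $\tilde\bfz$ of per-coordinate variance $\sigma^{2}$, and its noise term is a mean-zero Gaussian with covariance $\sigma^{2}\,\bfP R\bfP\tp$ on the $\dimension$ released coordinates.

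Next I would do the same for the \cref{sec:fft} mechanism, which releases $\bfP\fft\herm\Sigma^{1/2}(\Sigma^{1/2}\fft\bfE\obs+\bfz)$ with the imaginary part of the noise discarded (equivalently, \cref{alg:fft}). Its signal term is $\bfP\fft\herm\Sigma\fft\bfE\obs=\bfA\obs$, which is real, so the output is $\bfA\obs+\mathrm{Re}\!\big(\bfP\fft\herm\Sigma^{1/2}\bfz\big)$ with $\bfz$ an isotropic proper complex Gaussian of per-coordinate variance $\gamma$. The image $\fft\herm\Sigma^{1/2}\bfz$ is again a proper complex Gaussian, with Hermitian covariance $\gamma\fft\herm\Sigma^{1/2}\overline{\Sigma^{1/2}}\fft=\gamma\fft\herm\diag(|\fexd|)\fft=\gamma R$, which is \emph{real}; hence its real part is a mean-zero Gaussian with covariance $\tfrac{\gamma}{2}R$, and restricting to the first $\dimension$ coordinates yields $\tfrac{\gamma}{2}\,\bfP R\bfP\tp$ --- the same matrix as for Mechanism~1 up to the scalar. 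To match the scalars I would use $\sensd(\fftencoder)=\sensd(\bfC)=:\Delta$ (immediate from unitarity of $\fft$ and \cref{def:sensopmech}): the real matrix mechanism calibrated to the target level, say $\rho$-zCDP, uses $\sigma^{2}=\Delta^{2}/(2\rho)$, while the complex construction of \cref{alg:fft} was deliberately set up with $\gamma=\Delta^{2}/\rho$ --- twice the per-coordinate noise of one real dimension --- precisely so that discarding the imaginary half leaves $\gamma/2=\sigma^{2}$. Thus $\sigma^{2}\bfP R\bfP\tp=\tfrac{\gamma}{2}\bfP R\bfP\tp$, giving the distributional equivalence; the $\mdim>1$ case follows by running the argument independently on each output column.

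The step I expect to be the main obstacle is exactly this last scale-matching: one must verify that ``add the doubled complex noise, then chop the imaginary part'' reproduces the privacy-calibrated real noise \emph{exactly}, not merely up to a constant, which forces care about the several mutually inconsistent complex-Gaussian normalizations appearing around \cref{alg:fft} and its privacy proof (\cref{thm:privFFT}). The remaining ingredients --- $M$ real (\cref{lem:real_sqrt}), $M^{2}=\acirc$, $MM\tp=R$ real, and $\sensd$ being preserved by the unitary $\fft$ --- are routine linear algebra.
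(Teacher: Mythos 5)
Your proposal is correct, and it reaches the same conclusion as the paper but by a different mechanism: you match second moments, whereas the paper uses a one-line distributional identity. Concretely, the paper's proof notes that an isotropic complex Gaussian satisfies $\bfz \sim \fft\bfz$ (unitary invariance), so $\Re[\fft\herm\sqrt{\Sigma}\bfz] \sim \Re[\fft\herm\sqrt{\Sigma}\fft\bfz] = \fft\herm\sqrt{\Sigma}\fft\,\Re[\bfz]$, the last step using exactly your realness of $M=\fft\herm\sqrt{\Sigma}\fft$ (\cref{lem:real_sqrt}); since $\Re[\bfz]$ is a real isotropic Gaussian, the chopped noise is literally $\fftdecoder$ applied to real isotropic noise, and the signal/sensitivity identifications are the same as yours. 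Your route instead verifies that both noises are mean-zero Gaussians with covariance proportional to $\bfP\fft\herm\diag(|\fexd|)\fft\bfP\tp$, using properness (vanishing pseudo-covariance) of the complex Gaussian and $MM\tp=\fft\herm|\Sigma|\fft$; this is slightly heavier bookkeeping but has two advantages: it makes explicit the factor-of-two variance matching ($\sigma^2=\gamma/2$) that the paper dismisses with ``where the variances are as desired,'' and it verifies directly that the signal terms coincide via $M^2=\acirc$, so $\fftdecoder\fftencoder=\bfA$. Your reading of the calibration is the consistent one: the zCDP bookkeeping in \cref{thm:privFFT} (the stated $\rho_i=\kappa^2|\fexd[i]|/(4n\sigma^2)$) only works out if the ``standard complex Normal'' has unit-variance real and imaginary parts, under which the real part of the complex noise indeed has per-coordinate variance equal to the $\Delta^2/(2\rho)$ calibration of the real mechanism, exactly as you flag. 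So the only genuine difference is methodological; both arguments hinge on the same two facts (unitarity of $\fft$ and realness of $M$), with the paper's being shorter and yours being more explicit about normalization and about validity in the adaptive continual-release model.
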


\begin{proof}
To show this result, by noting that $\fftencoder$ and $\bfC$ have the same sensitivity, it suffices to show that:
\begin{center}
 $\Re[\fft\herm \sqrt{\Sigma} \bfz]$ (for $\bfz$ a sample from an isotropic complex Gaussian) is distributionally equivalent to $\bfP\fft\herm \sqrt{\Sigma} \fft \bfb$ for $\bfb$ a sample from a real (isotropic) Gaussian with the same variance.
\end{center}

This is a consequence of the distributional invariance of the Gaussian under unitary transformations:

\begin{align*}
\Re[\fft\herm \sqrt{\Sigma} \bfz] &\sim \Re[\bf\fft\herm \sqrt{\Sigma} \fft \bfz] \\
&= \fft\herm \sqrt{\Sigma} \fft \Re[\bfz] \quad \text {(as } \fft\herm \sqrt{\Sigma} \fft \text{ is real)}\\
&\sim \fft\herm \sqrt{\Sigma} \fft \bfb,
\end{align*}

where the variances are as desired.
\end{proof}

Note that the efficiency of the mechanism described in~\cref{sec:fft} carries over immediately to this factorization $\left(\fftdecoder, \fftencoder\right)$; indeed, the capacity to compute the noise $\fftdecoder \bfb$ with complexity $\dimension\log(\dimension)$ may be reasoned to directly, in a similar manner.

This mechanism is not, however, the optimal one for the encoder $\fftencoder$, and this subtlety has difficult downstream effects in integrating with real-valued factorization codepaths (e.g., see the discussion in~\cref{app:ssec:repeated_mechs}). We proceed to show that the optimal decoder can be used directly, at only a moderate loss of efficiency with sufficiently careful implementation.

\mypar{Mechanism 2: A real-valued optimal decoder with complexity $\dimension\log^2(\dimension)$}

As noted in the literature (e.g. Section 3 of~\citet{denisov22matfact}), for a fixed encoder, the optimal decoder may always be computed in terms of an appropriate pseudoinversion of the encoder. Therefore, we may compute the optimal decoder for the encoder $\fftencoder$, defining:

\newcommand{\fftopt}{\ensuremath{\bfB_{\bfF\text{opt}}}}

\[
\fftopt = \bfS\fftencoder\pinv,
\]

where $\bfS$ is the prefix-sum matrix. Since $\fftencoder$ is real, its pseudoinverse is as well, and $\fftopt$ is also real-valued. Since $\fftopt$ can have no more variance than $\fftdecoder$, all the utility analysis of the DFT mechanism in~\cref{sec:fft} carries through as an upper bound for this factorization. Privacy of this mechanism is ensured by the fact that this mechanism reuses the encoder \fftencoder. The major way in which these mechanisms operationally differ comes down to the cost of computing the noise vector $\fftopt \bfb$, where $\bfb$ represents a sample from an isotropic Gaussian distribution. Though we do not know of a complexity result which matches the decoder \fftdecoder, we will show that the complexity cost which must be paid is only logarithmically higher.

\begin{prop}\label{prop:toeplitz_decoding}
The mapping $\bfb \mapsto \fftopt \bfb$, where $\bfb \in \R^\dimension$, may be evaluated in $\calO(\dimension\log^2(\dimension))$ time.
\end{prop}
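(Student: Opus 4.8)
The plan is to pass to the normal equations and split the map $\bfb\mapsto\fftopt\bfb$ into three multiplications, each of cost $\calO(\dimension\log^2\dimension)$ or less, evaluated right to left. Recall $\fftopt=\bfS\fftencoder\pinv$, where $\bfS\in\R^{\dimension\times\dimension}$ is the prefix-sum matrix and $\fftencoder=\fft\herm\ccirc\bfE$ consists of the first $\dimension$ columns of the $2\dimension\times2\dimension$ circulant matrix $\sqrt{\acirc}:=\fft\herm\Sigma^{1/2}\fft$ (real by \cref{lem:real_sqrt}), precomposed with the embedding $\bfE:\R^{\dimension}\hookrightarrow\R^{2\dimension}$ onto the first $\dimension$ coordinates. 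First I would observe that $\fftencoder$ has full column rank: since $\bfA=\fftdecoder\fftencoder$ (Mechanism 1) and $\bfA$ is invertible, $\fftencoder\bfx=0$ forces $\bfx=0$. Hence $\fftencoder\pinv=(\fftencoder\tp\fftencoder)^{-1}\fftencoder\tp$ and, for the Gaussian sample $\bfb$ in the encoded space, $\fftopt\bfb=\bfS\,(\fftencoder\tp\fftencoder)^{-1}\,\fftencoder\tp\bfb$.

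The outer pieces are cheap. Applying $\fftencoder\tp=\bfE\tp\sqrt{\acirc}\tp$ amounts to one length-$2\dimension$ cyclic convolution — the transpose of a circulant is circulant, so $\bfb\mapsto\sqrt{\acirc}\tp\bfb$ is evaluable in $\calO(\dimension\log\dimension)$ by FFT, its spectrum being $\fft$ applied to its (explicitly known) first column built from the entries of $\Sigma^{1/2}$ — followed by a $\calO(\dimension)$ truncation $\bfE\tp$; and the final $\bfb\mapsto\bfS\bfz$ is a prefix sum, $\calO(\dimension)$. The substantive point is the middle factor. Set $\bfG:=\fftencoder\tp\fftencoder=\bfE\tp\big(\sqrt{\acirc}\tp\sqrt{\acirc}\big)\bfE$. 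A product of $2\dimension\times2\dimension$ circulants is circulant, and the top-left $\dimension\times\dimension$ block of a $2\dimension\times2\dimension$ circulant is a \emph{Toeplitz} matrix; thus $\bfG$ is $\dimension\times\dimension$ Toeplitz, and it is symmetric positive definite because $\fftencoder$ has full column rank. Its generating vector (first row and column) can be read off in $\calO(\dimension\log\dimension)$ from one further circulant multiply, e.g.\ $\sqrt{\acirc}\tp\sqrt{\acirc}\,\bfe_0$.

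I then need to solve the symmetric positive-definite Toeplitz system $\bfG\bfz=\bfy$, and this is the step I expect to be the main obstacle: the embedding/truncation by $\bfE$ destroys the circulant structure that would otherwise let one diagonalize $(\fftencoder\tp\fftencoder)^{-1}$ by a single FFT, leaving a genuine Toeplitz inverse. The resolution is a superfast, divide-and-conquer Toeplitz solver (a generalized Schur / displacement-rank scheme), which runs in $\calO(\dimension\log^2\dimension)$ time on an SPD Toeplitz system — and this is precisely where the extra $\log$ factor over the pure-FFT mechanism of \cref{sec:fft} originates. Summing the three costs gives the claimed $\calO(\dimension\log^2\dimension)$. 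A secondary, routine task is verifying the elementary circulant facts used above (transpose and products of circulants are circulant; the top-left block of a circulant is Toeplitz) and that everything remains real via \cref{lem:real_sqrt}. I would also remark that, since $\bfG$ depends only on $\Sigma$ and not on $\bfb$, one may precompute a Gohberg--Semencul representation of $\bfG^{-1}$ once in $\calO(\dimension\log^2\dimension)$ — as a sum of $\calO(1)$ products of triangular Toeplitz matrices — after which each subsequent application of $\fftopt$ costs only $\calO(\dimension\log\dimension)$; this refinement is not needed for the stated bound but is worth noting for repeated noise draws.
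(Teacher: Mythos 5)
Your proposal is correct and follows essentially the same route as the paper's proof: both reduce $\fftopt=\bfS(\fftencoder\tp\fftencoder)^{-1}\fftencoder\tp$ via full column rank of $\fftencoder$, observe that the Gram matrix $\bfE\tp\fft\herm|\Sigma|\fft\bfE$ is the Toeplitz top-left block of a circulant, handle the outer factors by FFT (or direct prefix sum), and charge the $\calO(\dimension\log^2\dimension)$ to a superfast Toeplitz solve. The Gohberg--Semencul precomputation remark is a nice extra not in the paper, but otherwise the arguments coincide.
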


\begin{proof}
First, notice that the matrix $\fftencoder$ is one-to-one; indeed, this is immediately implied by the factorization $\bfS = \fftdecoder \fftencoder$. By Theorem 1.2.1 (P6) of~\citep{nla.cat-vn1139431}, any one-to-one matrix $\bfT$ admits the following representation for its pseudoinverse:

\[
\bfT\pinv = \left(\bfT\herm \bfT\right)\inv \bfT\herm.
\]

We compute:

\begin{align*}
    \fftencoder^\dagger &= \left(\fftencoder\herm \fftencoder\right)\inv \fftencoder\herm \\
    &=\left(\left(\fft\herm \sqrt{\Sigma} \fft\bfE\right)\herm \fft\herm \sqrt{\Sigma} \fft\bfE\right)\inv \left(\fft\herm \sqrt{\Sigma} \fft\bfE\right)\herm \\
    &=\left(\bfP \fft\herm \sqrt{\Sigma}\herm \fft \fft\herm \sqrt{\Sigma} \fft\bfE\right)\inv \left(\fft\herm \sqrt{\Sigma} \fft\bfE\right)\herm\\
    &=\left(\bfP \fft\herm |\Sigma| \fft\bfE\right)\inv \left(\fft\herm \sqrt{\Sigma} \fft\bfE\right)\herm
\end{align*}

Now, the matrix $\bfP \fft\herm |\Sigma| \fft\bfE$ is Toeplitz, since $\fft\herm |\Sigma| \fft$ is circulant, and $\bfP$, $\bfE$ combine to select out the top-left $\dimension \times \dimension$ square of $\fft\herm |\Sigma| \fft$. Notice that $\bfP \fft\herm |\Sigma| \fft\bfE$ is not circulant, and cannot therefore be diagonalized by the $\dimension$-dimensional Fourier transform.

The development of~\cref{sec:fft} yield the representation:
\[
\bfS = \bfP \fft\herm \Sigma \fft \bfE,
\]

which implies that matrix-vector products with the matrix $\bfS$ may be computed in $\dimension\log\dimension$ time by the use of the FFT. Similarly, matrix-vector products with $\left(\fft\herm \sqrt{\Sigma} \fft\bfE\right)\herm$ may be computed in $\dimension\log\dimension$ time.

Therefore the computational cost of computing the mapping $\bfb \mapsto \bfS\fftencoder\pinv\bfb$ can be upper bounded by the maximum of $\dimension\log\dimension$ and the cost of computing the mapping $\bfv \mapsto \left(\bfP \fft\herm |\Sigma| \fft\bfE\right)\inv \bfv$.

The cost of computing this mapping is, in turn, bounded by the cost of inverting a general (full-rank) Toeplitz system, since  $\left(\bfP \fft\herm |\Sigma| \fft\bfE\right)\inv \bfv$ may be alternatively characterized as the solution $\bfx$ to the equation $\bfP \fft\herm |\Sigma| \fft\bfE\bfx = \bfv$. The computational cost of solving such a system is known to be $\dimension\log^2(\dimension)$; see, e.g.,~\citep{invert_toeplitz}.
\end{proof}

\begin{lem}\label{lem:real_sqrt}
For a real-valued vector $\bfv$, let $\hat{\bfv}$ represent its discrete Fourier transform. If $\hat{\bfv}$ has no purely real, negative entries, then letting $\sqrt{\cdot}$ denote the (pointwise) principal branch of the square root and $\bfF$ the matrix representation of the Fourier transform, the matrix $\bfF^* \sqrt{\hat{\bfv}} \bfF$ is real-valued.
\end{lem}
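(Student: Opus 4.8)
\textbf{Proof proposal for \cref{lem:real_sqrt}.}

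The plan is to reduce the statement to the elementary fact that the conjugate-transpose of a circulant matrix is circulant, and that conjugate-transposition commutes with the functional calculus in the diagonal (Fourier) basis. First I would set $\hat\bfv = \fexd$ and write the circulant matrix $\bfM := \fft\herm \diag(\sqrt{\hat\bfv}) \fft$, so that $\bfF^*\sqrt{\hat\bfv}\bfF$ in the lemma statement is exactly $\bfM$ (here $\sqrt{\hat\bfv}$ is shorthand for the diagonal matrix with entries the pointwise principal square roots). A complex matrix is real-valued if and only if it equals its own entrywise complex conjugate, i.e.\ $\overline{\bfM} = \bfM$. Since $\bfM$ is circulant with first column $\bfc$ say, it suffices to show $\bfc$ is real; equivalently, that the inverse DFT of the vector $\sqrt{\hat\bfv}$ is real-valued.

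The key step is the standard characterization: a vector $\bfw \in \C^{2n}$ has a real-valued inverse DFT if and only if $\bfw$ is \emph{conjugate-symmetric}, meaning $\bfw[k] = \overline{\bfw[(-k) \bmod 2n]}$ for all $k$ (equivalently $\bfw[0]$ and $\bfw[n]$ real, and $\bfw[2n-k] = \overline{\bfw[k]}$). So I would first record that $\hat\bfv$ itself is conjugate-symmetric, because $\bfv$ is real-valued and $\hat\bfv$ is (a scalar multiple of) its DFT. Then I would argue that the pointwise principal square root \emph{preserves} conjugate symmetry under the stated hypothesis. Concretely: for indices $k$ with $k \equiv -k$ (namely $k=0$ and $k=n$), conjugate symmetry forces $\hat\bfv[k] \in \R$; since by hypothesis $\hat\bfv[k]$ is not a purely real negative number, it is a nonnegative real, so its principal square root is a nonnegative real, hence still satisfies the self-conjugate condition. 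For a pair $k \ne (-k)\bmod 2n$, we have $\hat\bfv[2n-k] = \overline{\hat\bfv[k]}$; the principal branch of the square root satisfies $\sqrt{\bar z} = \overline{\sqrt z}$ for every $z$ not on the negative real axis (where the branch cut lies), and the hypothesis rules out exactly those values, so $\sqrt{\hat\bfv[2n-k]} = \overline{\sqrt{\hat\bfv[k]}}$. Thus $\sqrt{\hat\bfv}$ is conjugate-symmetric, its inverse DFT $\bfc$ is real, $\bfM$ is a real circulant matrix, and the lemma follows.

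The main obstacle — really the only subtle point — is the branch-cut bookkeeping: one must be careful that the principal square root commutes with conjugation \emph{everywhere the hypothesis allows}, and that the two "fixed points" $k=0,n$ of the index involution are handled separately since there conjugate symmetry degenerates to a reality (and then nonnegativity) condition rather than a pairing condition. I would state the identity $\sqrt{\bar z}=\overline{\sqrt z}$ for $z \in \C \setminus (-\infty,0)$ as a one-line lemma (it is immediate from writing $z = re^{i\theta}$ with $\theta \in (-\pi,\pi)$, so $\bar z = re^{-i\theta}$ and $\sqrt{\bar z} = \sqrt r\, e^{-i\theta/2} = \overline{\sqrt z}$), and the rest is routine. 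A remark worth including: the hypothesis "$\hat\bfv$ has no purely real negative entries" is used twice — once to keep the square root single-valued and conjugation-compatible off the cut, and once to ensure the diagonal entries at the self-conjugate indices land in $[0,\infty)$ — so it cannot be weakened without care.
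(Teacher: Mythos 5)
Your proposal is correct and follows essentially the same route as the paper's proof: both arguments rest on the conjugate-symmetry characterization of DFTs of real vectors, show that the pointwise principal square root preserves this symmetry under the stated hypothesis, and conclude that $\bfF^*\sqrt{\hat{\bfv}}\bfF$ is convolution with (equivalently, a circulant matrix generated by) a real vector. The only difference is that you make explicit the branch-cut identity $\sqrt{\bar z}=\overline{\sqrt z}$ off $(-\infty,0)$ and the separate treatment of the self-conjugate indices, details the paper leaves implicit.
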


\begin{proof}
Conjugate symmetry of the DFT states that for a $j$-dimensional real-valued vector $\bfx$, $\hat{\bfx}[m] = \overline{\hat{\bfx}}[j-m]$, and that the converse also holds--that if $\hat{\bfx}$ has this symmetry, $\bfx$ is real-valued. This can be seen by examining the action of conjugation of $\bfx$ on the Fourier transform $\hat{\bfx}$.

Now, by the assumptions on $\hat{\bfv}$ and the choice of the principal branch of the square root\footnote{These assumptions can be avoided, though at the cost of taking care in choosing the square root of the negative elements of $\hat{\bfv}$ to preserve the appropriate symmetry.}, if $\hat{\bfv}$ has this conjugate symmetry, so does $\sqrt{\hat{\bfv}}$. Therefore there is some real-valued vector $\bfy$ such that $\hat{\bfy} = \sqrt{\hat{\bfv}}$. The matrix $\bfF^* \sqrt{\hat{\bfv}} \bfF$ represents convolution with $\bfy$ in the standard basis, and hence is real-valued.
\end{proof}

\mypar{Remark} ~\cref{lem:real_sqrt} can be understood as a statement about the solvability of a certain repeated-convolution equation over real-valued functions (the equation $g * g = f$). We suspect that this fact has been observed in the harmonic analysis literature as a general property of all Fourier transforms; we could find no reference. The symmetries discussed above take a slightly different form in the continuous and noncompact case (IE, Fourier transform on real-valued function on $\R^d$) and the finite-dimensional Fourier transform here, so we choose to prove this statement in this limited setting.

\end{document}